\def \basefigwidth{0.49}
\def \basethreefigwidth{0.32}
\newtheorem{theorem}{Theorem}
\numberwithin{theorem}{section}
\newtheorem{lemma}[theorem]{Lemma}
\newtheorem{proposition}[theorem]{Proposition}
\newtheorem{coro}[theorem]{Corollary}
\newtheorem{assumption}[theorem]{Assumption}
\numberwithin{equation}{section}
\newcommand{\minimize}{\mbox{minimize}}
\newcommand{\tf}[1]{\mathbf{#1}}
\newcommand{\Ah}{\widehat{A}}
\newcommand{\Ahat}{\Ah}
\newcommand{\Bhat}{\Bh}
\newcommand{\Bh}{\widehat{B}}
\newcommand{\Phixh}{\hat{\tf \Phi}_x}
\newcommand{\Phiuh}{\hat{\tf \Phi}_u}
\newcommand{\Dh}{\widehat{\tf{\Delta}}}
\newcommand{\trueA}{A_\star}
\newcommand{\trueB}{B_\star}
\newcommand{\trueK}{K_\star}
\newcommand{\const}{\calO(1)}
\newcommand{\RHinf}{\mathcal{RH}_\infty}
\newcommand{\Avg}{\mathsf{Avg}}
\DeclareMathOperator*{\Tr}{\mathbf{Tr}}
\newcommand{\norm}[1]{\lVert #1 \rVert}
\newcommand{\bignorm}[1]{\left\lVert #1 \right\rVert}
\newcommand{\twonorm}[1]{\lVert #1 \rVert_{2}}
\newcommand{\E}{\mathbb{E}}
\renewcommand{\Pr}{\mathbb{P}}
\newcommand{\T}{\top}
\newcommand{\Res}[1]{\mathfrak{R}_{#1}}
\newcommand{\res}[1]{\mathfrak{R}_{#1}}
\newcommand{\statedim}{n}
\newcommand{\inputdim}{p}
\newcommand{\hinf}{\mathcal{H}_\infty}
\newcommand{\htwo}{\mathcal{H}_2}
\newcommand{\ltwonorm}[1]{\| #1 \|_2}
\newcommand{\infnorm}[1]{\| #1 \|_\infty}
\newcommand{\hinfnorm}[1]{\| #1 \|_{\hinf}}
\newcommand{\htwonorm}[1]{\| #1 \|_{\htwo}}
\newcommand{\lonenorm}[1]{\| #1 \|_{\mathcal{L}_1}}
\newcommand{\bightwonorm}[1]{\bignorm{#1}_{\mathcal{H}_2}}
\newcommand{\bighinfnorm}[1]{\bignorm{#1}_{\mathcal{H}_\infty}}
\newcommand{\iid}{\stackrel{\mathclap{\text{\scriptsize{ \tiny i.i.d.}}}}{\sim}}
\newcommand{\cvectwo}[2]{\begin{bmatrix} #1 \\ #2 \end{bmatrix}}
\newcommand{\rvectwo}[2]{\begin{bmatrix} #1 & #2 \end{bmatrix}}
\newcommand{\bmattwo}[4]{\begin{bmatrix} #1 & #2 \\ #3 & #4 \end{bmatrix}}
\newcommand{\bvct}[1]{\mathbf{#1}}
\newcommand{\bmtx}[1]{\mathbf{#1}}
\newcommand{\R}{\mathbb{R}}
\newcommand{\C}{\mathbb{C}}
\newcommand{\calE}{\mathcal{E}}
\newcommand{\calN}{\mathcal{N}}
\newcommand{\calO}{\mathcal{O}}
\newcommand{\calF}{\mathcal{F}}
\newcommand{\calR}{\mathcal{R}}
\newcommand{\calQ}{\mathcal{Q}}
\newcommand{\calS}{\mathcal{S}}
\newcommand{\Otilde}{\widetilde{\calO}}
\newcommand{\logg}[1]{\log\left(#1\right)}
\newcommand{\Omegatilde}{\widetilde{\Omega}}
\title{Regret Bounds for Robust Adaptive Control of the Linear Quadratic Regulator}
\author{Sarah Dean, Horia Mania, Nikolai Matni, Benjamin Recht, and Stephen Tu
\vspace{0.0625in}
\\
University of California, Berkeley}
\date{\today}
\begin{document}

\maketitle


\begin{abstract}
We consider adaptive control of the Linear Quadratic Regulator (LQR), where an unknown linear system is controlled subject to quadratic costs.
Leveraging recent developments in the estimation of linear systems and in robust controller synthesis,
we present the first provably polynomial time algorithm that provides high probability guarantees of sub-linear regret on this problem.
We further study the interplay between regret minimization and parameter estimation by proving a lower bound {on the expected regret in terms of the exploration schedule used by any algorithm.}
Finally, we conduct a numerical study comparing our robust adaptive algorithm to other methods from the adaptive LQR literature,
and demonstrate the flexibility of our proposed method by extending it to a demand forecasting problem subject to state constraints.
\end{abstract}


\section{Introduction}
 
The problem of adaptively controlling an unknown dynamical system has a rich history, with classical asymptotic results of convergence and stability dating back decades \cite{ioannou1996robust,krstic1995nonlinear}.  Of late, there has been a renewed interest in the study of a particular instance of such problems, namely the adaptive Linear Quadratic Regulator (LQR), with an emphasis on \emph{non-asymptotic} guarantees of stability and performance.  Initiated by {\citet{abbasi2011regret}}, there have since been several works analyzing the regret suffered by various adaptive algorithms on LQR-- here the regret incurred by an algorithm is thought of as a measure of deviations in performance from optimality over time.  These results can be broadly divided into two categories: those providing high-probability guarantees for a single execution of the algorithm~\cite{abbasi2011regret,abeille17,faradonbeh17b,ibrahimi12}, and those providing bounds on the expected \emph{Bayesian} regret incurred over a family of possible systems~\cite{abbasi15,ouyang17}.  As we discuss in more detail, these methods all suffer from one or several of the following limitations: restrictive and unverifiable assumptions, limited applicability, and computationally intractable subroutines.  In this paper, we provide, to the best of our knowledge, the first polynomial-time algorithm for the adaptive LQR problem that provides high probability guarantees of sub-linear regret, and that does not require unverifiable or unrealistic assumptions.

\paragraph{Related Work.}
There is a rich body of work on the estimation of linear systems as well as on the robust and adaptive control of unknown systems.
We target our discussion to works on non-asymptotic guarantees for the LQR control of an unknown system, 
broadly divided into three categories.

\emph{Offline estimation and control synthesis:}  
In a non-adaptive setting, i.e., when system identification can be done offline prior to controller synthesis and implementation, the first work to provide end-to-end guarantees for the LQR optimal control problem is that of \citet{fiechter1997pac}, who shows that the \emph{discounted} LQR
problem is PAC-learnable. Dean et al~\citet{dean17} improve on this result, and provide the first end-to-end sample complexity guarantees for the infinite horizon average cost LQR problem.

\emph{Optimism in the Face of Uncertainty (OFU):}  
{\citet{abbasi2011regret}, \citet{faradonbeh17b}, and \citet{ibrahimi12} employ} the \emph{Optimism in the Face of Uncertainty} (OFU) principle \cite{bittanti06}, which optimistically selects model parameters from {a} confidence set by choosing those that lead to the \emph{best} closed-loop (infinite horizon) control
performance, and then plays the corresponding optimal controller, repeating this process online as the confidence set shrinks.  While OFU in the LQR setting has been shown to achieve optimal {regret $\Otilde(\sqrt{T})$}, its implementation requires
solving a non-convex optimization problem to precision $\Otilde(T^{-1/2})$, for which no provably 
efficient implementation exists.  

\emph{Thompson Sampling (TS):} 
 {To circumvent the} computational roadblock of OFU, {recent works} replace the intractable OFU subroutine with a random draw from the model uncertainty set, resulting in \emph{Thompson Sampling} (TS) based policies \cite{abbasi15,abeille17,ouyang17}. {\citet{abeille17} show that such a method achieves $\Otilde(T^{2/3})$ regret with high-probability for scalar systems.} However, {their proof} does not extend to the non-scalar setting. {\citet{abbasi15} and \citet{ouyang17}} consider expected regret in a Bayesian setting, and {provide TS methods which achieve $\Otilde(\sqrt{T})$ regret.}  Although not directly comparable to our result, {we remark on the computational challenges of these algorithms.}  Whereas the proof {of \citet{abbasi15}} was shown to be incorrect~\cite{osband16}, \citet{ouyang17} make the restrictive assumption that there exists a (known) initial compact set $\Theta$ describing the uncertainty in the system parameters, such that for any system $\theta_1 \in \Theta$, the optimal controller $K(\theta_1)$ is stabilizing when applied to any other system $\theta_2 \in \Theta$.  No means of constructing such a set {are} provided, and {there is no known tractable algorithm to verify if a given set satisfies this property.}  {Also, it is implicitly  assumed that projecting onto this set can be done efficiently.}

\paragraph{Contributions.}
{To develop the first polynomial-time algorithm that provides high probability guarantees of sub-linear regret}, we leverage recent results from the estimation of linear systems~\cite{simchowitz18}, robust controller synthesis~\cite{virtual,SysLevelSyn1}, and coarse-ID control~\cite{dean17}. {We show} that our robust adaptive control algorithm:
(i) guarantees stability and near-optimal performance at all times; (ii) achieves a regret up to time $T$ bounded by $\Otilde(T^{2/3})$; and (iii) is based on finite-dimensional semidefinite programs of size logarithmic in $T$.

Furthermore, {our method estimates the system parameters at $\Otilde(T^{-1/3})$ rate in operator norm.} 
Although system parameter identification is not necessary for optimal control performance, 
an accurate system model is often desirable in practice.
Motivated by this, we study the interplay between regret minimization and parameter estimation, and identify fundamental limits connecting the two.  We show that the expected regret of our algorithm is lower bounded by $\Omega(T^{2/3})$, proving that our analysis is sharp up to logarithmic factors. 
{Moreover, our lower bound suggests that the estimation rate achievable by any algorithm with $\calO(T^{\alpha})$ regret is $\Omega(T^{-\alpha/2})$.}

Finally, we conduct a numerical study of the adaptive LQR problem, in which we implement our algorithm, and compare its performance to heuristic implementations of OFU and TS based methods.
We show on
several examples that the regret incurred by our algorithm is comparable 
to {that of the} OFU and TS based methods. {Furthermore}, the infinite horizon cost
achieved by our algorithm at any given time on the true system is
consistently lower than that attained by OFU and TS based algorithms. 
Finally, {we use a demand forecasting example to show how our algorithm naturally generalizes to incorporate environmental uncertainty and safety constraints.}


\section{Problem Statement and Preliminaries}
\label{sec:problem}

\label{sec:problem_statement}

In this work we consider adaptive control of the following discrete-time linear system
\begin{align}
  x_{k+1} = \trueA x_{k} + \trueB u_k + w_k \:, \:\: w_k \iid \calN(0, \sigma_w^2 I) \:,
  \label{eq:dynamics}
\end{align}
where $x_k \in \R^\statedim$ is the state, $u_k \in \R^\inputdim$ is the control input, and $w_k \in \R^n$ is the process noise.
{We assume that the state variables are observed exactly and, for simplicity, that $x_0 = 0$.}
We consider the {\emph{Linear Quadratic Regulator} optimal control problem, given by cost matrices $Q \succeq 0$ and $R \succ 0$,}
\begin{align*}
  J_\star = \min_{u} \lim_{T \to \infty} \frac{1}{T} \E\left[ \sum_{k=1}^{T} x_k^\T Q x_k + u_k^\T R u_k \right]  \ \text{s.t. dynamics \eqref{eq:dynamics}}\:,
\end{align*}
where the minimum is taken over measurable functions $u = \{ u_k(\cdot) \}_{k \geq 1}$, with
each $u_k$ adapted to the history $x_k$, $x_{k - 1}$, \ldots, $x_1$, and possibe additional randomness independent of future states.
Given knowledge of $(\trueA, \trueB)$, the optimal policy is a static state-feedback law
$u_k = \trueK x_k$, where $\trueK$ is derived from the solution to a discrete algebraic
Riccati equation.

We{ are interested in  algorithms
which} operate without knowledge of the true system transition matrices $(\trueA, \trueB)$. We measure the performance
of such algorithms via their regret, {defined as}
\begin{align*}
  \mathsf{Regret}(T) := \sum_{{k=1}}^{T} (x_k^\T Q x_k + u_k^\T R u_k - J_\star) \:.
\end{align*}
The regret of any algorithm is lower-bounded by $\Omega(\sqrt{T})$, a bound matched by OFU up to logarithmic factors \cite{faradonbeh17b}. However, after each epoch, OFU requires
optimizing a non-convex objective to $\calO(T^{-1/2})$ precision.
Instead, our method uses a subroutine based on convex optimization
and robust control.

\subsection{Preliminaries: System Level Synthesis}

{
We briefly describe the necessary background on robust control and
System Level Synthesis~\cite{SysLevelSyn1} (SLS).
These tools were recently used by \citet{dean17} to provide
non-asymptotic bounds for LQR in the offline ``estimate-and-then-control'' setting.
In Appendix~\ref{sec:appendix:sls}, we expand on these preliminaries.}

Consider the dynamics \eqref{eq:dynamics}, and fix a static state-feedback control policy $K$, i.e., let $u_k = Kx_k$.  Then, the closed {loop map from} the disturbance process $\{w_0, w_1, \dots\}$ to the state $x_k$ and control input $u_k$ at time $k$ is given by
\begin{equation}
\begin{array}{rcl}
x_k &=& \sum_{t=1}^{k} (\trueA + \trueB K)^{k-t}w_{t-1} \:, \\
u_k &=& \sum_{t=1}^k K(\trueA + \trueB K)^{k-t}w_{t-1} \:.
\end{array}
\label{eq:impulse-response}
\end{equation}
Letting $\Phi_x(k) := (\trueA + \trueB K)^{k-1}$ and $\Phi_u(k) := K(\trueA + \trueB K)^{k-1}$, we can rewrite {Eq.~\eqref{eq:impulse-response}} as
\begin{equation}
\begin{bmatrix} x_k \\ u_k \end{bmatrix} =
\sum_{t=1}^k \begin{bmatrix}\Phi_x(k-t+1) \\ \Phi_u(k-t+1) \end{bmatrix}w_{t-1} \:,
\label{eq:phis}
\end{equation}
where {$\{\Phi_x(k),\Phi_u(k)\}$ are called the \emph{closed loop system response elements} induced by the controller $K$. The SLS framework shows that for any elements $\{\Phi_x(k),\Phi_u(k)\}$ constrained to obey
\begin{equation}
\Phi_x(k+1) = \trueA \Phi_x(k) + \trueB \Phi_u(k) \:, \:\: \Phi_x(1) = I \:, \:\: \forall k \geq 1 \:,
\label{eq:time-achievability}
\end{equation}
there exists some controller that achieves the desired system responses~\eqref{eq:phis}. Theorem \ref{thm:param} formalizes this observation: the SLS framework thereore allows for any optimal control problem over linear systems to be cast as an optimization problem over elements $\{\Phi_x(k),\Phi_u(k)\}$, constrained to satisfy the affine equations~\eqref{eq:time-achievability}. Comparing equations \eqref{eq:impulse-response} and \eqref{eq:phis}, we see that the former is non-convex in the controller $K$, whereas the latter is affine in the elements $\{\Phi_x(k),\Phi_u(k)\}$, enabling solutions to previously difficult optimal control problems. }

As we work with infinite horizon problems, it is notationally more convenient to work with \emph{transfer function} representations of the above objects, which can be obtained by taking a $z$-transform of their time-domain representations. The frequency domain variable $z$ can be informally thought of as the time-shift operator, i.e., $z\{x_k,x_{k+1},\dots\} = \{x_{k+1},x_{k+2},\dots\}$, allowing for a compact representation of LTI dynamics. {We use boldface letters to denote such transfer functions, e.g., $\tf{\Phi}_x(z) = \sum_{k = 1}^\infty\Phi_x(k) z^{-k}$. Then, the constraints \eqref{eq:time-achievability} can be rewritten as }
\begin{equation*}
\begin{bmatrix} zI - \trueA & - \trueB \end{bmatrix} \begin{bmatrix} \tf \Phi_x \\ \tf \Phi_u \end{bmatrix} = I \:,
\end{equation*}
{and the corresponding (not necessarily static) control law $\tf u = \tf K \tf x$ is given by $\tf K = \tf \Phi_u \tf \Phi^{-1}_x$.}

Although other approaches to optimal controller design exists, we argue now that the SLS parameterization
 has some appealing properties when applied to the control of uncertain
systems.  In particular, suppose that rather than having access to the true
system transition matrices $(\trueA, \trueB)$, we instead only have access to
estimates $(\Ah,\Bh)$.  The SLS framework allows us to characterize
the system responses achieved by a controller, computed
using only the estimates $(\Ah,\Bh)$, on the true system $(\trueA,\trueB)$.
Specifically, if we denote $\Dh:= (\Ah - \trueA)\tf \Phi_x + (\Bh - \trueB)\tf \Phi_u$, simple algebra shows that
\begin{align*}
    \begin{bmatrix} zI - \Ah & - \Bh \end{bmatrix} \begin{bmatrix} \tf \Phi_x \\ \tf \Phi_u \end{bmatrix} = I ~~~~~\text{if and only if}~~~~~
    \begin{bmatrix} zI - \trueA & - \trueB \end{bmatrix} \begin{bmatrix} \tf \Phi_x \\ \tf \Phi_u \end{bmatrix} = I + \Dh \:.
\end{align*}
Theorem \ref{thm:robust} shows that if $(I+\Dh)^{-1}$ exists, then the controller $\tf K = \tf \Phi_u \tf \Phi_x^{-1}$, computed using only the estimates $(\Ah,\Bh)$, achieves the following response on the true system $(\trueA, \trueB)$:
\[
\begin{bmatrix} \tf x \\ \tf u \end{bmatrix} =  \begin{bmatrix} \tf \Phi_x \\ \tf \Phi_u \end{bmatrix}(I+\Dh)^{-1} \tf w \:.
\]
{Further, if $\tf K$ stabilizes the system} $(\Ah,\Bh)$, and $(I+\Dh)^{-1}$ is stable (simple sufficient conditions can be derived to ensure this, see \cite{dean17}), then $\tf K$ is also stabilizing for the true system.  It is this transparency between system uncertainty and controller {performance that} we exploit in our algorithm.

We end this discussion with the definition of a function space that we use extensively throughout:
\begin{align*}
  \calS(C, \rho) &:= \left\{ \tf M = \sum_{k=0}^\infty M(k) z^{-k} \:|\: {\norm{ M(k)} \leq C \rho^k} \:, \:\: k = 0, 1, 2, ... \right\} \:.
\end{align*}
The space $\calS(C,\rho)$ consists of stable transfer functions that satisfy a certain decay rate in the spectral norm of their impulse response elements.  We denote the restriction of $\calS(C,\rho)$ to the space of $F$-length finite impulse response (FIR) filters by $\calS_F(C,\rho)$,{ i.e., $\tf M \in  \calS_F(C,\rho)$} if $\tf M \in \calS(C,\rho)$, and $M(k) = 0$ for all $k > F$.  Further note that we write $\tf M \in \frac{1}{z}\calS(C,\rho)$ to mean that $z \tf M \in \calS(C,\rho)$, i.e., that $M(0) = 0$.

We equip $\calS(C,\rho)$ with the $\hinf$ and $\htwo$ norms, which are infinite horizon analogs of the spectral and Frobenius norms of a matrix, respectively:
$\hinfnorm{\tf M} = \sup_{\twonorm{\tf w}=1} \: \twonorm{\tf M\tf w}$
and $\htwonorm{\tf M} = {\sqrt{\sum_{k=0}^\infty \norm{ M(k)}_F^2 }}$.
The $\hinf$ and $\htwo$ norm {have distinct interpretations.} The $\hinf$ norm
of a system $\tf M$ is equal to its $\ell_2 \mapsto \ell_2$ operator norm, and
can be used to measure the robustness of a system to unmodelled
dynamics~\cite{ZDGbook}. The $\htwo$ norm has a direct interpretation as the
energy transferred to the system by a white noise process, and is hence closely
related to the LQR optimal control problem.  Unsurprisingly, the $\htwo$ norm appears in the objective function of our
optimization problem, whereas the $\hinf$ norm appears in the constraints to
ensure robust stability and performance.

\section{Algorithm and Guarantees}

Our proposed robust adaptive control algorithm for LQR is shown in Algorithm~\ref{alg:adaptive}.
We note that while Line~\ref{lst:line:sls} of Algorithm~\ref{alg:adaptive}
is written as an infinite-dimensional optimization problem,
because of the FIR nature of the decision variables, it can be equivalently
written as a finite-dimensional semidefinite program. We describe
this transformation in Section~\ref{sec:app:sls_fir} of the Appendix.

\begin{center}
    \begin{algorithm}[h]
    \caption{Robust Adaptive Control Algorithm}
    \begin{algorithmic}[1]
      \REQUIRE{Stabilizing controller $\bmtx{K}^{(0)}$, failure probability $\delta \in (0, 1)$, and
        constants $(C_\star, \rho_\star, \norm{\trueK})$.}
        \STATE Set $C_x \gets \frac{\const C_\star}{(1-\rho_\star)^3}$, $C_u \gets \norm{\trueK} C_x$, and $\rho \gets .999 + .001 \rho_\star$.
	      \STATE Set $C_T \gets \Otilde\left( (n+p) \frac{C_\star^4(1+\norm{\trueK})^4}{(1-\rho_\star)^8}\right)$.
        \FOR{$i = 0, 1, 2, ...$}
            \STATE{Set $T_i \gets C_T 2^{i}$ and $\sigma_{\eta,i}^2 \gets \sigma_w^2 (T_i/C_T)^{-1/3}$.}
            \STATE{$D_i = \{(x_k^{(i)}, u_k^{(i)}\}_{k=1}^{T_i} \gets$ evolve system forward $T_i$ steps using feedback $\bvct{u} = \bmtx{K}^{(i)} \bvct{x} + \bmtx{\eta}_i$, where each entry of $\bmtx{\eta}_i$ is drawn i.i.d.\ from $\calN(0, \sigma_{\eta,i}^2 I_p)$. }
            \STATE{$(\Ah_i, \Bh_i) \gets \arg\min_{A, B} \sum_{k=1}^{T_i-1} \frac{1}{2}\ltwonorm{x_{k+1}^{(i)} - A x_k^{(i)} - B u_k^{(i)}}^2$. \label{lst:line:ls}}
            \STATE Set $\varepsilon_i \gets \Otilde\left( \frac{\sigma_w \norm{\trueK} C_\star}{\sigma_{\eta,i} (1-\rho_\star)^3} \sqrt{\frac{n+p}{T_i}} \right)$ and $F_i \gets \frac{\Otilde(1)(i+1)}{1-\rho_\star}$.
            \STATE{Set $\bmtx{K}^{(i+1)} = \tf \Phi_u \tf \Phi_x^{-1}$, where $(\tf \Phi_x, \tf \Phi_u)$ are the solution to
\begin{align*}
  \mathrm{minimize}_{\gamma \in [0, 1)} &\frac{1}{1-\gamma} \min_{\tf \Phi_x, \tf \Phi_u, V} \bightwonorm{ \begin{bmatrix} Q^{1/2} & 0 \\ 0 & R^{1/2} \end{bmatrix} \begin{bmatrix} \tf \Phi_x \\ \tf \Phi_u \end{bmatrix} }  \\
    \qquad \text{s.t.} &\rvectwo{zI - \Ah_i}{-\Bh_i} \cvectwo{\tf \Phi_x}{\tf \Phi_u} = I + \frac{1}{z^{F_i}}V \:, \:\: \frac{\sqrt{2}\varepsilon_i}{1-C_x\rho^{F_i+1}}\bighinfnorm{ \cvectwo{ \bmtx{\Phi_x}}{ \bmtx{\Phi_u}}} \leq  \gamma \:, \\
    & \norm{V} \leq C_x\rho^{F_i+1} \:, \:\: \tf \Phi_x \in \frac{1}{z}\calS_{F_i}(C_x, \rho) \:, \:\: \tf \Phi_u \in \frac{1}{z} \calS_{F_i}(C_u, \rho) \:.
\end{align*} \label{lst:line:sls} }
        \ENDFOR
    \end{algorithmic}
    \label{alg:adaptive}
    \end{algorithm}
\end{center}

Some remarks on practice are in order.
First, in Line~\ref{lst:line:ls}, only the trajectory data collected during
the $i$-th epoch is used for the least squares estimate.
Second, the epoch lengths we use grow exponentially in the epoch index.
These settings are chosen primarily
to simplify the analysis; in practice all the data collected should be used,
and it may be preferable to use a slower growing epoch schedule
(such as $T_i = C_T (i+1)$).
Finally, for storage considerations, instead of performing a batch
least squares update of the model, a recursive least squares (RLS)
estimator rule can be used to update the parameters in an online manner.

\subsection{Regret Upper Bounds}
\label{sec:upper_bound}
Our guarantees for Algorithm~\ref{alg:adaptive} are stated in terms of certain system specific constants,
which we define here.
We let $\trueK$ denote the static feedback solution
to the LQR problem for $(\trueA, \trueB, Q, R)$.
Next, we define $(C_\star, \rho_\star)$
such that the closed loop system $\trueA + \trueB \trueK$ belongs to
$\calS(C_\star, \rho_\star)$.
Our main assumption is stated as follows.
\begin{assumption}
\label{assumption:stablizing_controller}
We are given a controller $\tf K^{(0)}$ that stabilizes the true system $(\trueA, \trueB)$.
Furthermore, letting $( \tf \Phi_x, \tf \Phi_u )$ denote the response of $\tf K^{(0)}$ on $(\trueA, \trueB)$,
we assume that $\tf \Phi_x \in \calS(C_x, \rho)$ and $\tf \Phi_u \in \calS(C_u, \rho)$, where
the constants $C_x, C_u, \rho$ are defined in Algorithm~\ref{alg:adaptive}.
\end{assumption}
The requirement of an initial stabilizing controller $\tf K^{(0)}$ is not restrictive;
Dean et al.~\cite{dean17} provide an offline strategy for finding such a controller.
Furthermore, in practice Algorithm~\ref{alg:adaptive} can be initialized with
no controller, with random inputs applied instead to the system in the first
epoch to estimate $(\trueA, \trueB)$ within an initial confidence set for which the
synthesis problem becomes feasible.

Our first guarantee is on the rate of estimation of $(\trueA, \trueB)$ as the algorithm progresses
through time.
This result builds on recent progress~\cite{simchowitz18} for
estimation along trajectories of a linear dynamical system.
For what follows, the notation $\Otilde(\cdot)$ hides absolute constants
and
$\mathrm{polylog}\left( T, \frac{1}{\delta}, C_\star, \frac{1}{1-\rho_\star}, n, p, \norm{\trueB}, \norm{\trueK}\right)$
factors.
\begin{theorem}
\label{thm:estimation_main}
Fix a $\delta \in (0, 1)$ and suppose that Assumption~\ref{assumption:stablizing_controller} holds.
With probability at least $1-\delta$ the following statement holds.
Suppose that $T$ is at an epoch boundary.
Let $(\Ah(T), \Bh(T))$ denote the current estimate of $(\trueA, \trueB)$ computed by Algorithm~\ref{alg:adaptive} at the end of time $T$.
Then, this estimate satisfies the guarantee
\begin{align*}
  \max\{\norm{\Ah(T) - \trueA}, \norm{\Bh(T) - \trueB}\} \leq \Otilde\left(\frac{C_\star \norm{\trueK}}{(1-\rho_\star)^3} \frac{\sqrt{n+p}}{T^{1/3}} \right) \:.
\end{align*}
\end{theorem}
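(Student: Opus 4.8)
The plan is to reduce the theorem to a single-epoch least-squares guarantee. By construction (Line~\ref{lst:line:ls}), the estimate $(\Ah(T), \Bh(T))$ returned at the epoch boundary $T$ uses \emph{only} the trajectory $D_i$ collected during the final epoch $i$. Thus it suffices to control the error of the least-squares estimator run on a single trajectory of length $T_i$ generated by the feedback law $\bvct u = \bmtx K^{(i)} \bvct x + \bmtx{\eta}_i$, establish that this error is at most the quantity $\varepsilon_i$ defined in Line~7, and then substitute the schedule $\sigma_{\eta,i}^2 = \sigma_w^2 (T_i/C_T)^{-1/3}$ together with the observation $T_i = \Theta(T)$.

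First I would establish the per-epoch bound $\max\{\norm{\Ah_i - \trueA}, \norm{\Bh_i - \trueB}\} \leq \varepsilon_i$, which is the technical core and relies on the trajectory-estimation machinery of \citet{simchowitz18}. Because the regressors $(x_k, u_k)$ form a dependent sequence driven by both the process noise $w_k$ and the injected exploration noise $\bmtx{\eta}_i$, the bound cannot use i.i.d.\ concentration; instead it combines a self-normalized martingale tail bound with a lower bound on the empirical excitation (a block-martingale small-ball condition). The key point is that the injected noise contributes a covariance lower bound of order $\sigma_{\eta,i}^2$ to the state/input Gram matrix, which is exactly what produces the $1/\sigma_{\eta,i}$ dependence in $\varepsilon_i$; the factor $C_\star/(1-\rho_\star)^3$ enters through the upper bound on the accumulated state energy, which is finite only because the closed loop is stable.

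A complication is that the controller $\bmtx K^{(i)}$ is itself random, having been synthesized from epoch $i-1$ data, so the single-epoch bound must hold uniformly over the event that $\bmtx K^{(i)}$ is admissible---that it stabilizes $(\trueA, \trueB)$ and induces responses in $\calS(C_x,\rho)\times\calS(C_u,\rho)$ so that the trajectory does not blow up. I would discharge this by induction over epochs: on the event that the epoch-$(i-1)$ estimate achieves error $\varepsilon_{i-1}$, the robustness guarantee of the SLS synthesis (Theorem~\ref{thm:robust}) certifies that $\bmtx K^{(i)} = \tf\Phi_u \tf\Phi_x^{-1}$ is stabilizing on the true system with responses of the required decay, since the constraints in Line~\ref{lst:line:sls} enforce $(I+\Dh)^{-1}$ stable and force the response elements into $\calS_{F_i}(C_x,\rho)\times\calS_{F_i}(C_u,\rho)$. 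Conditioning on this admissibility event---and noting that $\bmtx{\eta}_i$ is drawn fresh, independent of $\bmtx K^{(i)}$---makes the per-epoch martingale arguments go through.

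Finally I would assemble the global statement. A union bound over the $\calO(\log T)$ epochs up to time $T$, with per-epoch failure probability $\delta/\mathrm{poly}$, yields overall probability $1-\delta$ and contributes only logarithmic factors absorbed into $\Otilde$. It then remains to simplify $\varepsilon_i$: substituting $\sigma_{\eta,i} = \sigma_w (T_i/C_T)^{-1/6}$ cancels the $\sigma_w$ factors and gives $\varepsilon_i = \Otilde\big(\frac{C_\star \norm{\trueK}}{(1-\rho_\star)^3}\sqrt{n+p}\, T_i^{-1/3} C_T^{-1/6}\big)$, and absorbing $C_T^{-1/6}\le 1$ only weakens the bound; since the total elapsed time satisfies $T = \sum_{j\le i} C_T 2^j = \Theta(T_i)$, we may replace $T_i$ by $T$ up to constants, producing the stated rate. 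I expect the single-epoch excitation lower bound---verifying the small-ball condition for the coupled state/input process and tracking how $\sigma_{\eta,i}$, $C_\star$, and $1-\rho_\star$ enter---to be the main obstacle; the reduction, the stabilization induction, and the final algebra are comparatively routine.
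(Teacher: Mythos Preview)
Your proposal is correct and follows the paper's approach closely: a per-epoch OLS bound obtained from the block-martingale small-ball machinery of \citet{simchowitz18}, an inductive ``recursive feasibility'' argument that the synthesized controller $\bmtx K^{(i)}$ stabilizes $(\trueA,\trueB)$ with responses in $\calS(C_x,\rho)\times\calS(C_u,\rho)$, a union bound over $\calO(\log T)$ epochs, and the final substitution $\sigma_{\eta,i}=\sigma_w(T_i/C_T)^{-1/6}$. Two small points to sharpen in execution: the constant $C_u\sim C_\star\norm{\trueK}/(1-\rho_\star)^3$ enters the leading rate through the BMSB parameter $\nu\sim\sigma_\eta/C_u$ (because $\norm{\Phi_u(1)}\leq C_u$ appears in the conditional covariance of $z_{t+1}$), not through the state-energy upper bound, which only appears inside the logarithm; and you will need an explicit high-probability bound on $\norm{x_0^{(i)}}$ across epoch boundaries, since the per-epoch estimation guarantee depends on the initial state and this state carries over from the previous epoch.
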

Theorem~\ref{thm:estimation_main} shows that Algorithm~\ref{alg:adaptive} achieves a consistent estimate
of the true dynamics $(\trueA, \trueB)$, and learns at a rate of $\Otilde(T^{-1/3})$.
We note that consistency of parameter estimates is not a guarantee provided by OFU or TS based approaches.

Next, we state an upper bound on the regret incurred by Algorithm~\ref{alg:adaptive}.
\begin{theorem}
\label{thm:regret}
Fix a $\delta \in (0, 1)$ and suppose that Assumption~\ref{assumption:stablizing_controller} holds.
With probability at least $1-\delta$ the following statement holds.
For all $T \geq 0$ we have that Algorithm~\ref{alg:adaptive} satisfies
\begin{align*}
  \mathsf{Regret}(T) \leq
  \Otilde\left( (n+p) \frac{C_\star^4 (1 + \norm{\trueK})^4 (1 + \norm{\trueB})^2 J_\star}{(1-\rho_\star)^{16}} T^{2/3} \right) \:.
\end{align*}
Here, the notation $\Otilde(\cdot)$ also hides $o(T^{2/3})$ terms.
\end{theorem}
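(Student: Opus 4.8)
The plan is to bound the regret epoch-by-epoch and then sum the per-epoch bounds over the geometrically growing schedule $T_i = C_T 2^i$. First I would fix the high-probability event on which the conclusion holds: splitting the failure budget across epochs (say $\delta_i \propto \delta/(i+1)^2$) and union-bounding, I would simultaneously require (i) that each least-squares estimate $(\Ah_i,\Bh_i)$ obeys the operator-norm bound $\max\{\norm{\Ah_i - \trueA}, \norm{\Bh_i-\trueB}\} \leq \varepsilon_i$ of Theorem~\ref{thm:estimation_main}, (ii) that the robust synthesis in Line~\ref{lst:line:sls} is feasible at every epoch so that $\bmtx K^{(i+1)}$ is well-defined and stabilizing on the true system, and (iii) that the realized finite-horizon quadratic costs concentrate around their stationary expectations. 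On this event the regret accrued in epoch $i$ splits into an \emph{exploitation} term $T_i\,(J(\bmtx K^{(i)}) - J_\star)$ measuring the suboptimality of the synthesized controller, an \emph{exploration} term accounting for the injected noise $\bmtx{\eta}_i$, and lower-order transient and fluctuation terms.

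For the exploitation term I would invoke the robust-control guarantee (Theorem~\ref{thm:robust}) together with the coarse-ID analysis of \cite{dean17}. The controller $\bmtx K^{(i)}$ is synthesized from the previous epoch's estimate, whose error is $\varepsilon_{i-1}$; feasibility of the optimization certifies that the true closed-loop perturbation satisfies $(I+\Dh)^{-1}$ stable with $\hinfnorm{\Dh}\leq\gamma<1$, so by Theorem~\ref{thm:robust} the cost of $\bmtx K^{(i)}$ on $(\trueA,\trueB)$ is controlled by the robust objective value. The key quantitative step is to show this value exceeds $J_\star$ by at most $\Otilde(\varepsilon_{i-1})$: I would exhibit a feasible point by FIR-truncating the true optimal system responses $(\tf\Phi_x^\star,\tf\Phi_u^\star)\in\calS(C_\star,\rho_\star)$ to length $F_i$, bounding the truncation tail by $C_\star\rho_\star^{F_i}$ and using that $F_i\sim(i+1)/(1-\rho_\star)$ grows fast enough to make this tail dominated by $\varepsilon_{i-1}$. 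Substituting the estimation rate $\varepsilon_{i-1}\sim\frac{\sigma_w\norm{\trueK}C_\star}{\sigma_{\eta,i-1}(1-\rho_\star)^3}\sqrt{(n+p)/T_{i-1}}$ with $\sigma_{\eta,i}^2=\sigma_w^2(T_i/C_T)^{-1/3}$ gives $\varepsilon_{i-1}=\Otilde(T_i^{-1/3})$, so the exploitation cost per epoch is $T_i\,\Otilde(\varepsilon_{i-1})=\Otilde(T_i^{2/3})$.

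For the exploration term, injecting $\bmtx{\eta}_i$ with per-coordinate variance $\sigma_{\eta,i}^2$ adds an independent disturbance whose stationary contribution to the quadratic cost is proportional to $\sigma_{\eta,i}^2$ times a system gain built from $\htwonorm{\cdot}$ of the closed-loop responses (hence the powers of $C_\star$ and $(1-\rho_\star)^{-1}$); this yields a per-epoch cost of $\Otilde(T_i\sigma_{\eta,i}^2)=\Otilde(T_i^{2/3})$, matching the exploitation term --- indeed the schedule $\sigma_{\eta,i}^2\sim T_i^{-1/3}$ is precisely the one that balances the two. The remaining transient and fluctuation terms I would handle by concentration for quadratic forms of Gaussians filtered through a stable LTI system (sub-exponential and martingale tail bounds), which contribute only $o(T_i^{2/3})$ per epoch plus an $\Otilde(1)$ settling cost at each controller switch. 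Summing the $\Otilde(T_i^{2/3})$ per-epoch bounds over $i=0,\dots,N$ with $T_N\asymp T$ gives a geometric series dominated by its last term, $\sum_i T_i^{2/3}\asymp T_N^{2/3}\asymp T^{2/3}$; a final within-epoch partial-sum argument handles $T$ not lying on an epoch boundary.

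The main obstacle I anticipate is the exploitation bound: establishing feasibility of the robust synthesis at every epoch and, simultaneously, that its optimal value lies within $\Otilde(\varepsilon_{i-1})$ of $J_\star$. This couples three quantities that must be reconciled --- the estimation error $\varepsilon_{i-1}$, the FIR length $F_i$ controlling the truncation error, and the robustness margin $\gamma$ in the $\hinfnorm{\cdot}$ constraint --- and requires showing that a truncated optimal response remains feasible for the \emph{estimated} model with a margin that still certifies stability on the \emph{true} model. Propagating the resulting perturbation through the $\frac{1}{1-\gamma}$ factor in the objective without losing the linear-in-$\varepsilon_{i-1}$ scaling (which is what ultimately produces $T^{2/3}$ rather than a worse rate) is the delicate part; the exploration and concentration terms, by contrast, are comparatively routine.
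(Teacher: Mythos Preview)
Your outline is correct and mirrors the paper's argument closely: the same epoch-by-epoch decomposition, the same split into exploitation $T_i(J(\tf K^{(i)})-J_\star)$, exploration $T_i\sigma_{\eta,i}^2$, and concentration/transient terms, the same rate-balancing via $\sigma_{\eta,i}^2\asymp T_i^{-1/3}$, and the same geometric summation.

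One concrete refinement is needed in your feasibility witness. You propose to FIR-truncate the \emph{true} optimal responses $(\tf\Phi_x^\star,\tf\Phi_u^\star)$, but these satisfy the affine constraint for $(\trueA,\trueB)$; plugging them into the constraint for $(\Ah,\Bh)$ leaves a residual $-\rvectwo{\Delta_A}{\Delta_B}\cvectwo{\tf\Phi_x^\star}{\tf\Phi_u^\star}$ that is spread over all frequencies, not concentrated at $z^{-F}$, so the truncation is \emph{not} feasible for the synthesis problem as written. The paper's fix, which resolves exactly the obstacle you flag in your last paragraph, is to use instead the response of the optimal \emph{controller} $\trueK$ on the \emph{estimated} system: set $\tf\Phi_x=\res{\Ah+\Bh\trueK}(1{:}F)$, $\tf\Phi_u=\trueK\,\res{\Ah+\Bh\trueK}(1{:}F)$, and $V=\res{\Ah+\Bh\trueK}(F{+}1)$. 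This is automatically feasible for the affine constraint on $(\Ah,\Bh)$, and a perturbation identity $\res{\Ah+\Bh\trueK}=\res{\trueA+\trueB\trueK}(I-\tf\Delta)^{-1}$ with $\hinfnorm{\tf\Delta}=O(\varepsilon)$ shows both that it lies in the required $\calS(C_x,\rho)$ class and that its cost is within a $(1+O(\varepsilon))$ factor of $J_\star$. With this substitution your argument goes through as written.
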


The intuition behind our proof is transparent. We use SLS to
show that the cost during epoch $i$ is bounded by $T_i (1 + \calO(\sigma_{\eta,i}^2/\sigma_w^2))(1 + \calO(\varepsilon_{i-1})) J_\star$,
where the $(1+\calO(\varepsilon_{i-1}))$ factor is the performance degredation incurred by model uncertainty,
and the $(1 + \calO(\sigma_{\eta,i}^2/\sigma_w^2))$ factor is the additional cost incurred from injecting exploration noise.
Hence, the regret incurred during this epoch is $\calO( T_i ( \sigma_{\eta,i}^2/\sigma_w^2 + \varepsilon_{i-1} ) J_\star )$,
We then bound our estimation error by $\varepsilon_i = \Otilde( (\sigma_w/\sigma_{\eta,i}) T_i^{-1/2} )$.
Setting $\sigma_{\eta,i}^2 = \sigma_w^2 T_i^{-\alpha}$, we have the per epoch bound
$\Otilde( T_i^{1-\alpha} + T_i^{1 - (1-\alpha)/2} )$. Choosing $\alpha = 1/3$ to balance these competing powers of $T_i$
and summing over logarithmic number of epochs, we obtain a final regret of $\Otilde(T^{2/3})$.

The main difficulty in the proof is ensuring that the transient behavior of the
resulting controllers is uniformly bounded when applied to the true system. Prior works sidestep
this issue by assuming that the true dynamics lie within a (known) compact set for which
the Heine-Borel theorem asserts the existence of finite constants that capture this behavior.
We go a step further and work through the perturbation analysis which allows us to
give a regret bound that depends only on simple quantities of the true system $(\trueA, \trueB)$.
The full proof is given in the appendix.

Finally, we remark that the dependence on $1/(1-\rho_\star)$ in our results is
an artifact of our perturbation analysis, and we leave sharpening this dependence to future work.


\subsection{Regret Lower Bounds and Parameter Estimation Rates}
\label{sec:lower_bound}
 
We saw that Algorithm~\ref{alg:adaptive} achieves $\Otilde(T^{2/3})$ regret with high probability. Now we provide a matching algorithmic lower bound on the expected regret, showing that the analysis presented in Section~\ref{sec:upper_bound} is sharp as a function of $T$. Moreover, our lower bound characterizes how much regret must be accrued in order to achieve a specified estimation rate for the system parameters $(\trueA, \trueB)$.

\begin{theorem}
\label{thm:lower_bound}
Let the initial state $x_0$ be distributed according to the steady state distribution $\calN(0, P_\infty)$ of the optimal closed loop system, and let $\{u_t\}_{t \geq 0}$ be any sequence of inputs as in Section~\ref{sec:problem_statement}. Furthermore, let $f\colon \R \rightarrow \R$ be any function such that with probability $1 - \delta$ we have
\begin{align} 
\label{eq:min_eig_design}
\lambda_{\min}\left(\sum_{k = 0}^{T - 1} \begin{bmatrix}
x_k \\
u_k
\end{bmatrix} \begin{bmatrix}
x_k^\T &
u_k^\T
\end{bmatrix}
\right) \geq f(T) \:.
\end{align}
Then, there exist positive values $T_0$ and $C_0$ such that for all $T \geq T_0$ we have
\begin{align*}
\sum_{k = 0}^{T} \E \left[x_k^\T Q x_k + u_k^\T R u_k - J_\star\right] \geq \frac{1}{2}(1 - \delta) \lambda_{\min}(R) (1 + \sigma_{\min}(\trueK)^2) f(T - T_0) - C_0 \:,
\end{align*}
where $T_0$ and $C_0$ are functions of $\trueA$, $\trueB$, $Q$, $R$, $\sigma_w^2$, and $\statedim$, detailed in Appendix~\ref{app:lower_bound}.
\end{theorem}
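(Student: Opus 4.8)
The plan is to exploit the exact ``completion of squares'' identity for the LQR stage cost relative to the optimal value function, and then to convert the resulting exploration surplus into the stated eigenvalue lower bound. Let $P_\star \succeq 0$ be the stabilizing solution of the discrete algebraic Riccati equation for $(\trueA, \trueB, Q, R)$, so that $\trueK = -(R + \trueB^\T P_\star \trueB)^{-1}\trueB^\T P_\star \trueA$ and $J_\star = \sigma_w^2 \Tr(P_\star)$, and set $V(x) := x^\T P_\star x$ and $H := R + \trueB^\T P_\star \trueB \succeq R$. First I would complete the square in $u$ in the one-step Bellman expression and substitute the Riccati equation to obtain the pointwise identity
\begin{align*}
x_k^\T Q x_k + u_k^\T R u_k - J_\star = V(x_k) - \E[V(x_{k+1}) \mid \calF_k] + (u_k - \trueK x_k)^\T H (u_k - \trueK x_k) \:,
\end{align*}
where $\calF_k$ is the natural filtration. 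Taking total expectations and summing over $k = 0, \dots, T$ telescopes the value-function terms, giving
\begin{align*}
\sum_{k=0}^{T} \E[x_k^\T Q x_k + u_k^\T R u_k - J_\star] = \E[V(x_0)] - \E[V(x_{T+1})] + \sum_{k=0}^{T}\E\big[(u_k - \trueK x_k)^\T H(u_k - \trueK x_k)\big] \:.
\end{align*}

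Next I would lower bound the exploration surplus in terms of the design matrix $G_T := \sum_{k=0}^{T-1}\begin{bmatrix} x_k \\ u_k\end{bmatrix}\begin{bmatrix} x_k^\T & u_k^\T\end{bmatrix}$ from \eqref{eq:min_eig_design}. Writing $v_k := u_k - \trueK x_k$ and observing that $c^\T v_k = \begin{bmatrix} -\trueK^\T c \\ c\end{bmatrix}^\T \begin{bmatrix} x_k \\ u_k \end{bmatrix}$ for any $c \in \R^\inputdim$, the quadratic-form bound gives
\begin{align*}
\sum_{k=0}^{T-1}(c^\T v_k)^2 = \begin{bmatrix} -\trueK^\T c \\ c\end{bmatrix}^\T G_T \begin{bmatrix} -\trueK^\T c \\ c\end{bmatrix} \geq \lambda_{\min}(G_T)\big(\twonorm{\trueK^\T c}^2 + \twonorm{c}^2\big) \:.
\end{align*}
Taking $c$ to be the left singular vector of $\trueK$ realizing its smallest singular value yields $\twonorm{\trueK^\T c}^2 = \sigma_{\min}(\trueK)^2$, so that $\sum_k \twonorm{v_k}^2 \geq \sum_k (c^\T v_k)^2 \geq (1 + \sigma_{\min}(\trueK)^2)\lambda_{\min}(G_T)$ (this is the most conservative direction; other choices only enlarge the constant). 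Since $H \succeq R$, on the event $\calE$ that \eqref{eq:min_eig_design} holds we get $\sum_{k=0}^{T-1} v_k^\T H v_k \geq \lambda_{\min}(R)(1 + \sigma_{\min}(\trueK)^2) f(T)$. Because each $v_k^\T H v_k \geq 0$, multiplying by $\ind_{\calE}$ and taking expectations produces exactly the factor $(1-\delta) = \Pr(\calE)$ in the theorem, with no dependence on the conditional behavior of the inputs inside $\calE$.

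The remaining and main difficulty is controlling the negative terminal term $-\E[V(x_{T+1})]$ and handling the index shift to $f(T-T_0)$. Since the inputs are arbitrary, $\E[V(x_{T+1})]$ is not bounded a priori, so it cannot simply be discarded. The plan is to apply the surplus bound over the trimmed horizon $[0, T-T_0)$ --- invoking \eqref{eq:min_eig_design} at horizon $T-T_0$ to obtain the contribution $(1-\delta)\lambda_{\min}(R)(1+\sigma_{\min}(\trueK)^2)f(T-T_0)$ --- while charging the terminal potential against the stage costs accrued over the trailing window of length $T_0$. Concretely, using detectability of $(\trueA, Q^{1/2})$ one bounds $\E[V(x_{T+1})]$ by a constant multiple of the costs in that trailing window plus a noise-dependent constant; reserving a constant fraction of the exploration surplus to dominate this contribution yields the factor $\tfrac12$, and folding $\E[V(x_0)] = \Tr(P_\star P_\infty)$ together with the noise floor into $C_0$ produces the additive constant, with $T_0$ playing the role of the observability horizon. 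I expect this terminal-term perturbation analysis --- making the dependence of $T_0$ and $C_0$ on $\trueA, \trueB, Q, R, \sigma_w^2, \statedim$ explicit while keeping them independent of $T$ and of the input sequence --- to be the crux of the argument; the steady-state initialization $x_0 \sim \calN(0, P_\infty)$ enters only to render $\E[V(x_0)]$ a clean constant.
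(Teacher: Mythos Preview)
Your completion-of-squares identity and the resulting telescoping sum are correct, and in fact give a more direct route to the core inequality than the paper takes. The paper instead defines the finite-horizon cost $J_T(\nu_0,\ldots,\nu_{T-1})$ with terminal penalty $x_T^\T P_\star x_T$, expands $J_T - J_T^\star$ explicitly as a quadratic form in $(\nu_0,\ldots,\nu_{T-1})$ (Lemma~\ref{lem:cancel}), and then proves that form is $\succeq \lambda_{\min}(R) I$ via a block Schur-complement induction (Lemmas~\ref{lem:psd_helper} and~\ref{lem:psd}). Your observation that $H = R + \trueB^\T P_\star \trueB \succeq R$ yields the same inequality in one line; the paper's route is longer but perhaps makes the structure of the excess cost more visible. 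The deterministic eigenvalue-to-surplus step (your paragraph on $G_T$) matches the paper's Lemma~\ref{lem:perturbation} exactly.

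Where the two approaches diverge more substantively is the terminal term. The paper does \emph{not} invoke detectability of $(\trueA, Q^{1/2})$. Instead it (i) splits the horizon at $T-T_0$ and inserts a terminal cost $x_{T-T_0}^\T P_\star x_{T-T_0}$ there; (ii) uses closed-loop stability of $\trueA + \trueB\trueK$ to bound $\E\ltwonorm{x_{T-T_0}}^2$ by a constant plus $\frac{3C^2}{1-\rho^2}\sum_t \E\ltwonorm{\nu_t}^2$ (Lemma~\ref{lem:state_bound_2}); and (iii) uses the geometric convergence of the finite-horizon Riccati recursion $P_t \to P_\star$ (Lemmas~\ref{lem:conv_riccati} and~\ref{lem:bound_by_future}) to show that the tail cost over $[T-T_0,T)$ dominates $\E[x_{T-T_0}^\T P_\star x_{T-T_0}]$ up to an error of size $n\sigma_w^2(1+\nu) + (1+1/\nu)^{-T_0}\E\ltwonorm{x_{T-T_0}}^2$. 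Choosing $T_0$ large enough makes the coefficient $(1+1/\nu)^{-T_0}\cdot \frac{3C^2}{1-\rho^2}$ fall below $\tfrac12\lambda_{\min}(R)$, which is exactly how the factor $\tfrac12$ arises. Your detectability sketch gestures at a similar absorption, but detectability alone does not obviously deliver the key ingredient --- a bound on $\E[V(x_{T-T_0})]$ whose dependence on the input deviations can be made arbitrarily small by enlarging $T_0$. If you try to execute your plan, you will likely end up reconstructing the Riccati-convergence argument; I would recommend switching to that mechanism explicitly.
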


The proof of the estimation error Theorem~\ref{thm:estimation_main} shows that
Algorithm~\ref{alg:adaptive} satisfies
Eq.~\eqref{eq:min_eig_design} with $f(T) = \Otilde(T \sigma_{\eta, \Theta(\log_2(T))}^2)$.
Since the exploration variance $\sigma_{\eta,i}^2$ used by Algorithm~\ref{alg:adaptive} during the $i$-th epoch
is given by $\sigma_{\eta,i}^2 = \calO(\sigma_w^2 T^{-i/3})$, we obtain the following corollary
which demonstrates the sharpness of our regret analysis with respect to the scaling of $T$.
\begin{coro}
For $T > C_1(\statedim, \delta, \sigma_w^2, \trueA, \trueB, Q, R)$ the expected regret of Algorithm~\ref{alg:adaptive} satisfies
\begin{align*}
  \sum_{k = 1}^{T} \E \left[x_k^\T Q x_k + u_k^\T R u_k - J_\star\right] \geq \Omegatilde(\lambda_{\min}(R) (1 + \sigma_{\min}(\trueK)^2) T^{2/3} ) \:.
\end{align*}
\end{coro}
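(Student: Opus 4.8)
The plan is to read the Corollary off of Theorem~\ref{thm:lower_bound} by instantiating the abstract design lower bound $f$ with the one actually produced by Algorithm~\ref{alg:adaptive}. Theorem~\ref{thm:lower_bound} already reduces any expected-regret lower bound to a high-probability lower bound $f(T)$ on the smallest eigenvalue of the design Gram matrix appearing in Eq.~\eqref{eq:min_eig_design}, so the only algorithm-specific ingredient I need is the $f$ that the exploration schedule $\{\sigma_{\eta,i}^2\}$ of Algorithm~\ref{alg:adaptive} guarantees. First I would invoke the fact, established inside the proof of Theorem~\ref{thm:estimation_main} and recorded in the paragraph preceding this Corollary, that on the same probability-$(1-\delta)$ event the Gram matrix satisfies Eq.~\eqref{eq:min_eig_design} with $f(T)$ of order $T\,\sigma_{\eta,\Theta(\log_2 T)}^2$ (up to polylogarithmic factors).

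Second, I would evaluate this variance explicitly. By construction $\sigma_{\eta,i}^2 = \sigma_w^2 (T_i/C_T)^{-1/3}$ with $T_i = C_T 2^i$, so $\sigma_{\eta,i}^2 = \sigma_w^2 2^{-i/3}$. Because the epoch lengths grow geometrically, when time $T$ falls in epoch $i$ we have $2^i = \Theta(T/C_T)$ and $i = \Theta(\log_2 T)$, and moreover the final epoch carries a constant fraction of the whole trajectory, so its contribution $T_i \sigma_{\eta,i}^2 = \Theta(\sigma_w^2 C_T^{1/3} T^{2/3})$ dominates the Gram matrix. Substituting $\sigma_{\eta,\Theta(\log_2 T)}^2 = \Theta(\sigma_w^2 T^{-1/3})$ gives $f(T) = \Omegatilde(\sigma_w^2 T^{2/3})$.

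Third, I would feed this $f$ into the conclusion of Theorem~\ref{thm:lower_bound}: for $T \geq T_0$ the expected regret is at least $\frac{1}{2}(1-\delta)\lambda_{\min}(R)(1+\sigma_{\min}(\trueK)^2)\,f(T-T_0) - C_0$. Since $T_0$ and $C_0$ are fixed functions of $(\trueA,\trueB,Q,R,\sigma_w^2,\statedim)$, for every $T$ larger than a threshold $C_1$ one has $(T-T_0)^{2/3} \geq \frac{1}{2} T^{2/3}$, so $f(T-T_0) = \Omegatilde(T^{2/3})$; the additive $-C_0$ and the fixed constant $\frac{1}{2}(1-\delta)$ are then absorbed into $\Omegatilde(\cdot)$, leaving exactly $\Omegatilde(\lambda_{\min}(R)(1+\sigma_{\min}(\trueK)^2) T^{2/3})$ as claimed.

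The steps above are routine bookkeeping; the real content sits one level down, in certifying Eq.~\eqref{eq:min_eig_design} with the advertised $f$, which I would treat as imported from the proof of Theorem~\ref{thm:estimation_main}. The main obstacle there is a small-ball / anti-concentration argument: one must show that the injected Gaussian excitation $\bmtx{\eta}_i$ forces $\lambda_{\min}$ of the Gram matrix to grow like $T\sigma_{\eta,i}^2$ even though the closed-loop map correlates successive states $x_k$, and one must obtain a genuine \emph{lower} bound on $\lambda_{\min}$ (rather than the upper bounds on estimation error that Theorem~\ref{thm:estimation_main} ultimately reports) that holds uniformly on the $1-\delta$ event and across epoch boundaries. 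This is the block-martingale small-ball technique of~\cite{simchowitz18}; granting it, the Corollary follows by the substitution above.
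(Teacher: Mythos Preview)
Your proposal is correct and follows exactly the paper's own argument: the Corollary is deduced from Theorem~\ref{thm:lower_bound} by invoking the minimum-eigenvalue lower bound $f(T)=\Omegatilde(T\,\sigma_{\eta,\Theta(\log_2 T)}^2)$ established via the BMSB/small-ball argument in the proof of Theorem~\ref{thm:estimation_main}, then substituting $\sigma_{\eta,i}^2=\sigma_w^2 2^{-i/3}$ and $2^i=\Theta(T/C_T)$ to obtain $f(T)=\Omegatilde(T^{2/3})$. If anything, you have supplied more of the bookkeeping (handling $T_0$, $C_0$, and the epoch-wise decomposition of the Gram matrix) than the paper makes explicit.
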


A natural question to ask is how much regret does any algorithm accrue in order
to achieve estimation error $\norm{\Ahat - \trueA} \leq \varepsilon$ and
$\norm{\Bhat - \trueB} \leq \varepsilon$. From Theorem~\ref{thm:estimation_main} we
know that Algorithm~\ref{alg:adaptive} estimates $(\trueA, \trueB)$ at rate
$\Otilde(T^{-1/3})$.  Therefore, in order to achieve $\varepsilon$ estimation
error, $T$ must be $\Omegatilde(\varepsilon^{-3})$. Hence,
Theorem~\ref{thm:regret} implies that the regret of
Algorithm~\ref{alg:adaptive} to achieve $\varepsilon$ estimation error is
$\Otilde(\varepsilon^{-2})$.

Interestingly, let us consider any other
Algorithm achieving $\calO(T^{\alpha})$ regret for some $0 < \alpha <
1$. Then, Theorem~\ref{thm:lower_bound} suggests that the best  rate
achievable by such an algorithm is $\calO(T^{-\alpha/2})$, since
the minimum eigenvalue condition Eq.~\eqref{eq:min_eig_design} governs the signal-to-noise ratio.
In the case of linear-regression with independent data it is known that the minimax estimation rate is lower bounded
by square root of the inverse of the minimum eigenvalue~\eqref{eq:min_eig_design}. We conjecture that the same
results holds in our case.
Therefore,
to achieve $\varepsilon$ estimation error, any Algorithm would
likely require $\Omega(\varepsilon^{-2})$ regret, showing that
Algorithm~\ref{alg:adaptive} is optimal up to logarithmic factors in this sense.
Finally, we note that
while Algorithm~\ref{alg:adaptive} estimates $(\trueA, \trueB)$ at a rate
$\Otilde(T^{-1/3})$, Theorem~\ref{thm:lower_bound} suggests that any
algorithm achieving the  $\calO(\sqrt{T})$ regret would estimate $(\trueA, \trueB)$ at a
rate $\Omega(T^{-1/4})$.
%



\section{Experiments}
\label{sec:experiments}

\paragraph{Regret Comparison.}
We illustrate the performance of several adaptive schemes empirically.
We compare the proposed robust adaptive method with non-Bayesian Thompson sampling (TS) as in~\citet{abeille17} and a heuristic projected gradient descent (PGD) implementation of OFU. As a simple baseline,
we use the nominal control method, which synthesizes the optimal infinite-horizon LQR controller for the estimated system
and injects noise with the same schedule as the robust approach. Implementation details and computational considerations for all adaptive methods are in Appendix~\ref{sec:appendix:implementation}.

The comparison experiments are carried out on the following LQR problem:
\begin{align} \label{eq:exampledynamics_laplacian}
\trueA  = \begin{bmatrix} 1.01 & 0.01 & 0\\
0.01 & 1.01 & 0.01\\
0 & 0.01 & 1.01\end{bmatrix}, ~~ \trueB = I, ~~ Q = 10 I, ~~ R =I, ~~ \sigma_w = 1 \:.
\end{align}
This system corresponds to a marginally unstable Laplacian system where adjacent nodes are weakly connected; these dynamics were also studied by~\cite{abbasi18,dean17,tu2017least}.
The cost is such that input size is penalized relatively less than state. This problem setting is amenable to robust methods due to both the cost ratio and the marginal instability, which are factors that may hurt optimistic methods.  In Appendix~\ref{sec:app:laplacian_exp}, we show similar results for an unstable system with large transients.

To standardize the initialization of the various adaptive methods, we use a
rollout of length $T_0=100$ where the input is a stabilizing controller plus
Gaussian noise with fixed variance $\sigma_u=1$. This trajectory is not counted
towards the regret, but the recorded states and inputs are used to initialize
parameter estimates.
In each experiment, the system starts from $x_0=0$
to reduce variance over runs. For all methods, the actual errors
$\Ahat_t-\trueA$ and $\Bhat_t-\trueB$ are used rather than bounds or
bootstrapped estimates. The effect of this choice on regret is small, as
examined empirically in Appendix~\ref{sec:app:error_scaling}.

\begin{figure} 
\centering
\begin{subfigure}[b]{\basefigwidth\textwidth}
\caption{\small Regret}
\centerline{\includegraphics[width=\columnwidth]{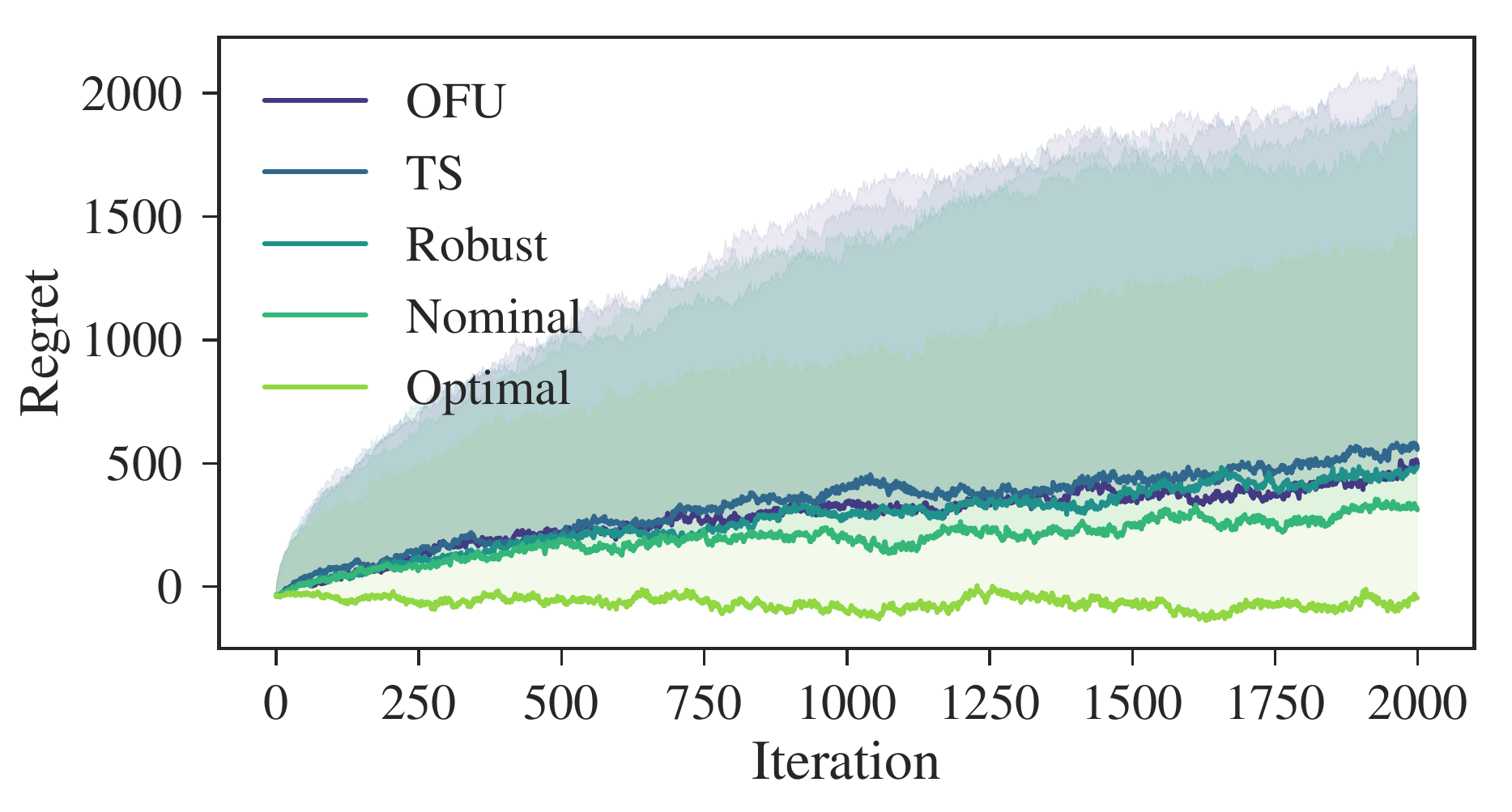}}
\end{subfigure}
\begin{subfigure}[b]{\basefigwidth\textwidth}
\caption{\small Infinite Horizon LQR Cost}
\centerline{\includegraphics[width=\columnwidth]{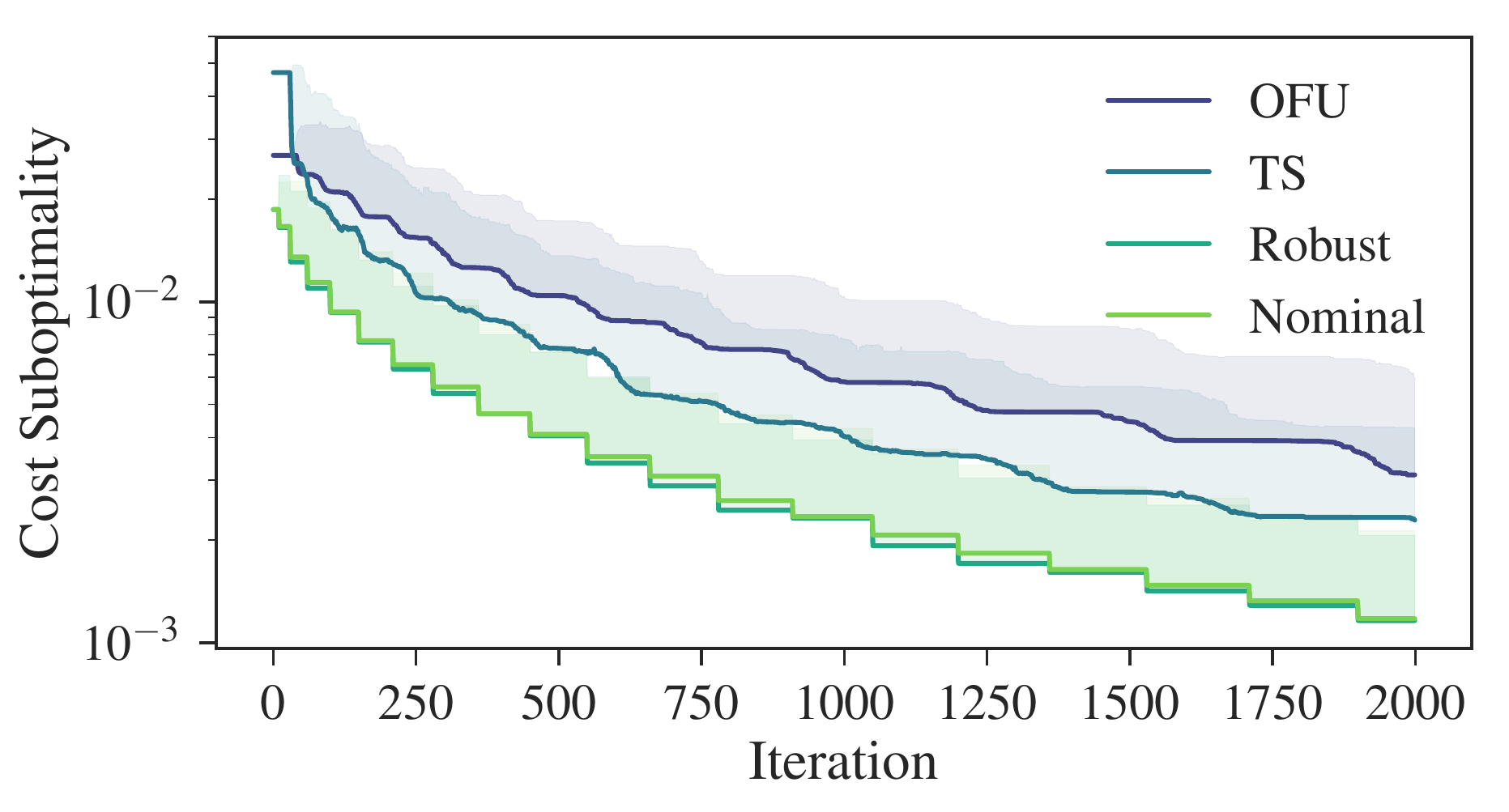}}
\end{subfigure}
\caption{\small A comparison of different adaptive methods on 500 experiments of the marginally unstable Laplacian example in~\ref{eq:exampledynamics_laplacian}. In (a), the median and 90th percentile regret is plotted over time. In (b), the median and 90th percentile infinite-horizon LQR cost of the epoch's controller.}
\label{fig:regrets}
\end{figure}

The performance of the various adaptive methods is compared in Figure~\ref{fig:regrets}. The median and 90th percentile regret over 500 instances is displayed in  Figure~\ref{fig:regrets}a, which gives an idea of both typical and worst-case behavior. The regret of the optimal LQR controller for the true system is displayed as a baseline.
Overall, the methods have very similar performance. One benefit of robustness is the guaranteed stability and bounded infinite-horizon cost at every point during operation.
In  Figure~\ref{fig:regrets}b, this infinite-horizon LQR cost is plotted for the controllers played during each epoch.
This value measures the cost of using each epoch's controller indefinitely, rather than continuing to update its parameters.
The robust adaptive method performs relatively better than other adaptive algorithms, indicating that it is more amenable to early stopping, i.e., to turning off the adaptive component of the algorithm and playing the current controller indefinitely.

\paragraph{Extension to Uncertain Environment with State Constraints.}

\begin{figure} 
\centering
\begin{subfigure}[b]{\basefigwidth\textwidth}
\caption{\small Demand Forecasting}
\label{fig:demand_forcasting}
\centerline{\includegraphics[width=\columnwidth]{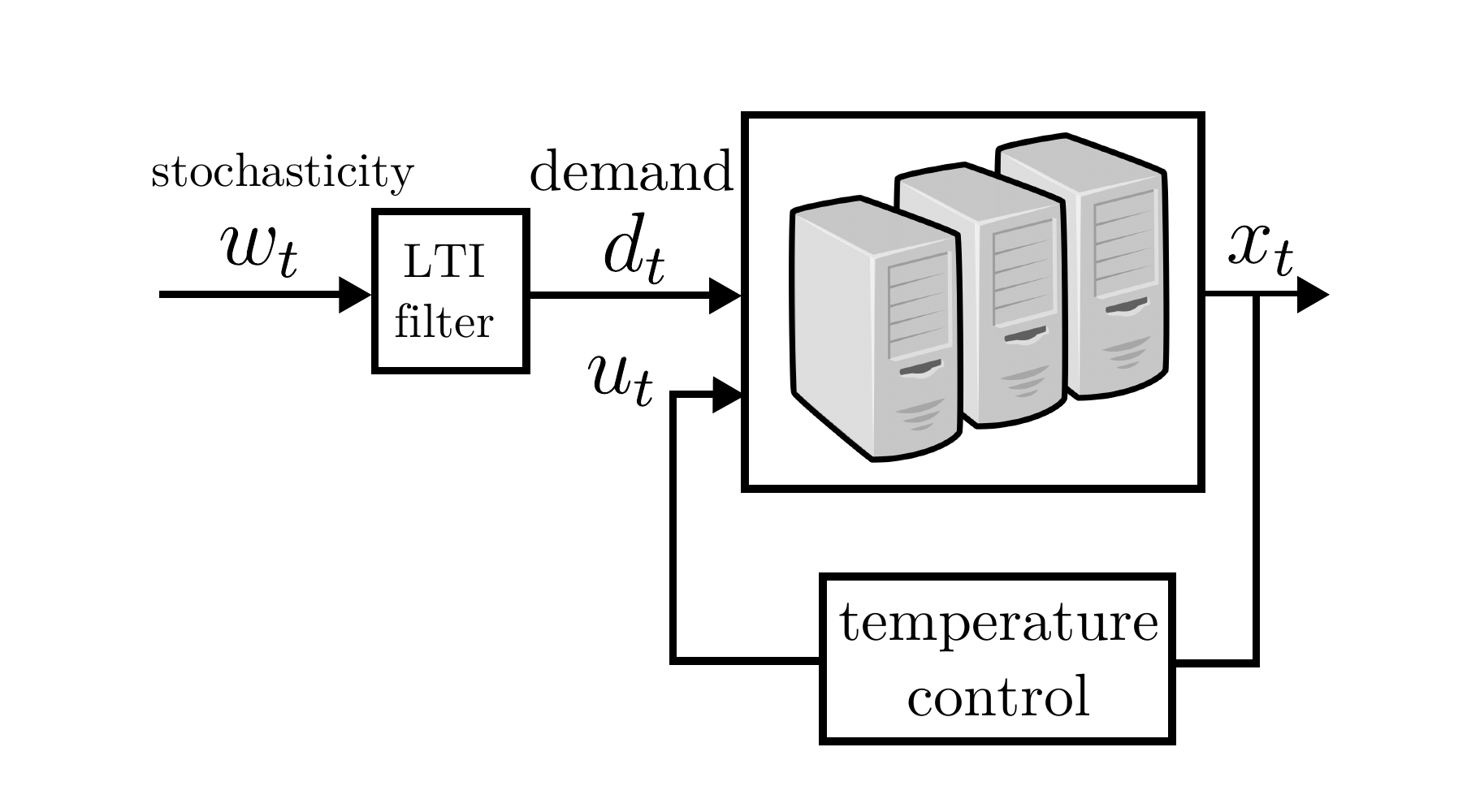}}
\end{subfigure}
\begin{subfigure}[b]{\basefigwidth\textwidth}
\caption{\small Constraint Satisfaction}
\label{fig:constraint_satisfaction}
\centerline{\includegraphics[width=\columnwidth]{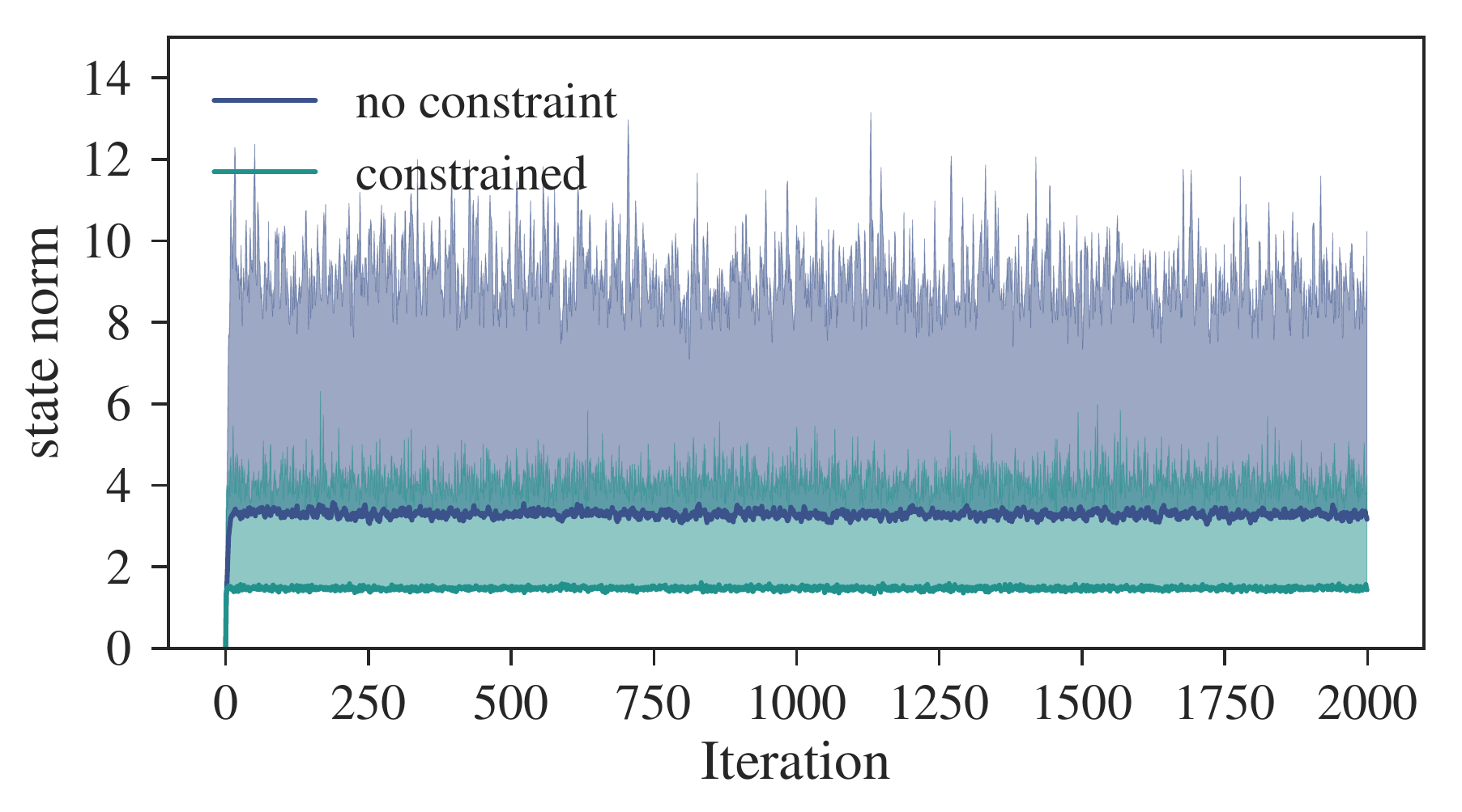}}
\end{subfigure}
\caption{\small The addition of constraints in the robust synthesis problem can guarantee the safe execution of adaptive systems. We consider an example inspired by demand forecasting, as illustrated in (a), where the left hand side of the diagram represents unknown dynamics. The median and maximum values of $\|x_t\|_\infty$ over 500 trials are plotted for both the unconstrained and constrained synthesis problems in (b).}
\label{fig:constraints}
\end{figure}

The proposed robust adaptive method naturally generalizes beyond the standard LQR problem. We consider
a disturbance forecasting example which incorporates environmental uncertainty and safety constraints.
Consider a system with known dynamics driven by stochastic disturbances that are now correlated in time. We model the disturbance process as the output of an unknown autonomous LTI system, as illustrated in Figure~\ref{fig:demand_forcasting}. This setting can be interpreted as a demand forecasting problem, where, for example, the system is a server farm and the disturbances represent changes in the amount of incoming jobs. If the dynamics of the correlated disturbance process are known, this knowledge can be used for more cost-effective temperature control.

We let the system $(\trueA, \trueB)$ with known dynamics be described by the graph Laplacian dynamics as in Eq.~\eqref{eq:exampledynamics_laplacian}. The disturbance dynamics are unknown and are governed by a stable system transition matrix $A_d$, resulting in the following dynamics for the full system:
\[\begin{bmatrix} x_{t+1} \\ d_{t+1} \end{bmatrix} =
\begin{bmatrix} \trueA & I \\ 0  & A_d \end{bmatrix} \begin{bmatrix} z_{t} \\ d_{t} \end{bmatrix} +
\begin{bmatrix} \trueB \\ 0   \end{bmatrix}  u_{t}
+ \begin{bmatrix} 0 \\ I \end{bmatrix} w_t\:, \quad A_d = \begin{bmatrix} 0.5 & 0.1 & 0\\
0 & 0.5 & 0.1\\
0 & 0 & 0.5\end{bmatrix}.\]
The costs are set to model expensive inputs, with $Q=I$ and $R=1\times10^{3}I$. The controller synthesis problem in Line~\ref{lst:line:sls} of Algorithm~\ref{alg:adaptive} is modified to reflect the problem structure, and crucially, we add a constraint on the system response $\tf \Phi_x$.
Further details of the formulation are explained in Appendix~\ref{sec:app:demand_ex}.
Figure~\ref{fig:constraint_satisfaction} illustrates the effect.  While the unconstrained synthesis results in trajectories with large state values, the constrained synthesis results in much more moderate behavior.



\section{Conclusions and Future Work}

We presented a polynomial-time algorithm for the adaptive LQR problem that provides high probability guarantees of sub-linear regret. 
In contrast to other approaches to this problem, our robust adaptive method guarantees stability, robust performance, and parameter estimation.  We also explored the interplay between regret minimization and parameter estimation, identifying fundamental limits connecting the two. 

Several questions remain to be answered.
It is an open question whether a polynomial-time algorithm can achieve a regret of  $\Otilde(\sqrt{T})$. 
In our implementation of OFU, we observed that PGD performed quite effectively.
Interesting future work is to see if the techniques of Fazel et al.~\cite{Fazel18}
for policy gradient optimization on LQR can be applied to prove convergence of PGD on the OFU subroutine,
which would provide an optimal polynomial-time algorithm.
Moreover, we observed that OFU and TS methods in practice gave estimates of system parameters that 
were comparable with our method which explicitly adds excitation noise. 
It seems that the switching of control policies at epoch boundaries provides more excitation
for system identification than is currently understood by the theory.
Furthermore, practical issues that remain to be addressed include satisfying safety constraints and dealing with nonlinear dynamics; in both settings, finite-sample parameter estimation/system identification and adaptive control remain an open problem.

\subsubsection*{Acknowledgments}
SD is supported by an NSF Graduate Research Fellowship.
As part of the RISE lab, HM is generally supported in part by NSF CISE Expeditions
Award CCF-1730628, DHS Award HSHQDC-16-3-00083, and gifts from Alibaba, Amazon
Web Services, Ant Financial, CapitalOne, Ericsson, GE, Google, Huawei, Intel, IBM, Microsoft,
Scotiabank, Splunk and VMware.
BR is generously supported in part by NSF award CCF-1359814, ONR awards
N00014-17-1-2191, N00014-17-1-2401, and N00014-17-1-2502, the DARPA Fundamental
Limits of Learning (Fun LoL) and Lagrange Programs, and an Amazon AWS AI
Research Award.

\bibliographystyle{plainnat}
\bibliography{lqr}

\clearpage
\appendix

\section{Background on System Level Synthesis}
\label{sec:appendix:sls}

We begin by defining
two function spaces
which we use extensively throughout:
\begin{align}
  \mathcal{RH}_\infty &= \{ \tf M : \C \longrightarrow \C^{n \times p} \:|\: \tf M(z) \text{ is rational} \:, \:\: \tf M(z) \text{ is analytic on } \mathbb{D}^c \} \:, \\
  \mathcal{RH}_\infty(C, \rho) &= \{ \tf M \in \mathcal{RH}_\infty \:|\: \norm{\tf M[k]} \leq C \rho^k \:, \:\: k = 1, 2, ... \} \:.
\end{align}
Note that we use $\calS(C,\rho)$ to denote $\mathcal{RH}_\infty(C, \rho)$ in the main body of the text.

Recall that our main object of interest is the system
\begin{align*}
  x_{k+1} = A x_k + B u_k + w_k \:,
\end{align*}
and our goal is to design a LTI feedback control policy $\tf u = \tf K \tf x$ such that
the resulting closed loop system is stable.
For a given $\tf K$, we refer to the closed loop transfer functions
from $\tf w \mapsto \tf x$ and $\tf w \mapsto \tf u$ as the \emph{system response}.
Symbolically, we denote these maps as $\tf \Phi_x$ and $\tf \Phi_u$.
Simple algebra shows that given $\tf K$, these maps take on the form
\begin{align}
  \tf \Phi_x = (zI - A - B\tf K)^{-1} \:, \:\:
  \tf \Phi_u = \tf K (z I - A - B \tf K)^{-1} \:. \label{eq:response}
\end{align}

We then have the following theorem parameterizing the set of such stable closed-loop transfer functions that are achievable by a stabilizing controller $\tf K$.
\begin{theorem}[State-Feedback Parameterization~\cite{SysLevelSyn1}]
The following are true:
\begin{itemize}
\item The affine subspace defined by
\begin{equation}
\begin{bmatrix} zI - A & - B \end{bmatrix} \begin{bmatrix} \tf \Phi_x \\ \tf \Phi_u \end{bmatrix} = I, \ \tf \Phi_x, \tf \Phi_u \in \frac{1}{z}\mathcal{RH}_\infty
\label{eq:achievable}
\end{equation}
parameterizes all system responses \eqref{eq:response} from $\tf w$ to $(\tf x, \tf u)$, achievable by an internally stabilizing state-feedback controller $\tf K$.
\item For any transfer matrices $\{\tf \Phi_x, \tf \Phi_u\}$ satisfying \eqref{eq:achievable}, the controller $\tf K = \tf \Phi_u \tf \Phi_x^{-1}$ is internally stabilizing and achieves the desired system response \eqref{eq:response}.
\end{itemize}
\label{thm:param}
\end{theorem}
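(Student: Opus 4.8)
The plan is to prove the two bullet points in sequence, establishing that the affine constraint \eqref{eq:achievable} is both necessary and sufficient for realizability by an internally stabilizing controller, and that the inversion formula $\tf K = \tf \Phi_u \tf \Phi_x^{-1}$ recovers such a controller. First I would show the \emph{forward direction}: given any internally stabilizing $\tf K$, the system responses in \eqref{eq:response} satisfy \eqref{eq:achievable}. This is a direct computation. Writing $\tf \Phi_x = (zI - A - B\tf K)^{-1}$ and $\tf \Phi_u = \tf K (zI - A - B\tf K)^{-1}$, I would left-multiply by $\begin{bmatrix} zI - A & -B \end{bmatrix}$ and observe that $(zI - A)\tf \Phi_x - B \tf \Phi_u = (zI - A - B\tf K)(zI - A - B\tf K)^{-1} = I$. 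The membership $\tf \Phi_x, \tf \Phi_u \in \frac{1}{z}\RHinf$ follows because internal stability of $\tf K$ guarantees that $(zI - A - B\tf K)^{-1}$ is stable and strictly proper, and $\tf K$ applied to a strictly proper stable map preserves these properties.

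Next I would prove the \emph{converse/sufficiency direction}, which is the heart of the statement: given any $\{\tf \Phi_x, \tf \Phi_u\}$ satisfying \eqref{eq:achievable}, the controller $\tf K := \tf \Phi_u \tf \Phi_x^{-1}$ is well-defined, internally stabilizing, and reproduces \eqref{eq:response}. The key algebraic fact is that the constraint $(zI - A)\tf \Phi_x - B\tf \Phi_u = I$ forces $\tf \Phi_x$ to be invertible as a transfer function (since $\tf \Phi_x = I/z + \text{higher order terms}$ from the $\frac{1}{z}\RHinf$ membership, so its leading behavior is $z^{-1}I$ and $\tf \Phi_x^{-1}$ exists as a proper rational map). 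Substituting $\tf \Phi_u = \tf K \tf \Phi_x$ into the constraint gives $(zI - A - B\tf K)\tf \Phi_x = I$, hence $\tf \Phi_x = (zI - A - B\tf K)^{-1}$ and correspondingly $\tf \Phi_u = \tf K (zI - A - B\tf K)^{-1}$, matching \eqref{eq:response} exactly.

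The main obstacle is establishing \textbf{internal stability} of the recovered $\tf K$, as opposed to mere input-output stability. I would argue that because both $\tf \Phi_x$ and $\tf \Phi_u$ lie in $\frac{1}{z}\RHinf$ by hypothesis, every closed-loop transfer function of the feedback interconnection — the four maps from exogenous inputs to $(\tf x, \tf u)$ — can be written directly in terms of $\tf \Phi_x$ and $\tf \Phi_u$, and hence inherit stability from them. This is precisely where the parameterization earns its power: rather than checking stability of $\tf K$ through a pole-zero cancellation argument on $\tf \Phi_u \tf \Phi_x^{-1}$ (which could hide unstable internal modes), one verifies that all relevant closed-loop maps are stable by construction because they \emph{are} the stable quantities $\tf \Phi_x, \tf \Phi_u$. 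I would make this rigorous by exhibiting the four-block closed-loop transfer matrix and checking each block is a stable combination of the given responses, thereby certifying internal stability without any delicate cancellation analysis.
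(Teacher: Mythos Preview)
The paper does not actually prove this theorem: it is stated in Appendix~\ref{sec:appendix:sls} purely as background material, with the proof deferred to the cited reference~\cite{SysLevelSyn1}. There is therefore no ``paper's own proof'' to compare against.

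That said, your sketch is essentially the standard argument and is sound in outline. The forward direction is the routine computation you describe. For the converse, your observation that the affine constraint together with $\tf\Phi_x\in\frac{1}{z}\RHinf$ forces $\Phi_x(1)=I$ (by matching the $z^0$ coefficient of $(zI-A)\tf\Phi_x - B\tf\Phi_u = I$) and hence that $\tf\Phi_x$ is invertible is correct, and the substitution $\tf\Phi_u = \tf K\tf\Phi_x$ immediately recovers \eqref{eq:response}. Your treatment of internal stability is the right idea but is where a fully rigorous proof requires the most care: one must verify that \emph{all} closed-loop maps from exogenous signals (process noise and any perturbations injected at the controller input/output) to internal signals are stable, not merely the two maps $\tf w\mapsto\tf x$ and $\tf w\mapsto\tf u$. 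In the full SLS development this is handled by exhibiting a specific state-space realization of $\tf K=\tf\Phi_u\tf\Phi_x^{-1}$ and checking the complete interconnection; your four-block argument is the right target, but you would need to actually write out those additional maps (e.g.\ from an additive input disturbance $\boldsymbol{\delta}_u$) and confirm each is a stable combination of $\tf\Phi_x$ and $\tf\Phi_u$, since a bare appeal to ``they are the stable quantities $\tf\Phi_x,\tf\Phi_u$'' only directly covers two of the blocks.
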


If $\tf K$ stabilizes $(A, B)$, then the LQR cost of $\tf K$ on $(A, B)$
can be written by Parseval's identity as
\begin{align}
    J(A, B, \tf K; \sigma_w^2 I) := \lim_{T \to \infty} \frac{1}{T} \E\left[ \sum_{k=1}^{T} x_k^\T Q x_k + u_k^\T R u_k \right] = \sigma^2_w \bightwonorm{ \bmattwo{Q^{1/2}}{0}{0}{R^{1/2}} \cvectwo{\tf \Phi_x}{\tf \Phi_u}  }^2 \:.
\end{align}
More generally, we will define $J(A, B, \tf K; \Sigma)$ to be the LQR cost
when the process noise is driven by $w \iid \calN(0, \Sigma)$.
When we omit the last argument, we mean $\sigma_w^2=1$, i.e.
$J(A, B, \tf K) = J(A, B, \tf K; I)$.

In \cite{dean17}, the authors use SLS to study how uncertainty in
the true parameters $(\trueA, \trueB)$ affect the LQR objective cost.
Our analysis relies on these tools, which we briefly describe below.

The starting point for the theory is a characterization of all \emph{robustly}
stabilizing controllers.
\begin{theorem}[\cite{virtual}]
Suppose that the transfer matrices $\{\tf\Phi_x, \tf \Phi_u\} \in \frac{1}{z}\mathcal{RH}_\infty$ satisfy
\begin{equation}
\begin{bmatrix} zI - A & - B \end{bmatrix} \begin{bmatrix} \tf \Phi_x \\ \tf \Phi_u \end{bmatrix} = I + \tf \Delta.
\end{equation}
Then the controller $\tf K = \tf \Phi_u \tf \Phi_x^{-1}$ stabilizes the system described by $(A,B)$ if and only if $(I+\tf \Delta)^{-1} \in \mathcal{RH}_\infty$.  Furthermore, the resulting system response is given by
\begin{equation}
\begin{bmatrix} \tf x \\ \tf u \end{bmatrix} = \begin{bmatrix} \tf \Phi_x \\ \tf \Phi_u \end{bmatrix}(I+\tf \Delta)^{-1} \tf w.
\end{equation}
\label{thm:robust}
\end{theorem}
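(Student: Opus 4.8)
The plan is to derive both implications from the state-feedback parameterization (Theorem~\ref{thm:param}), exploiting the fact that right-multiplying a response pair by an invertible transfer function leaves the induced controller $\tf K = \tf \Phi_u \tf \Phi_x^{-1}$ unchanged. For the ``if'' direction, I would assume $(I+\tf\Delta)^{-1} \in \RHinf$ and define the rescaled responses $\bar{\tf\Phi}_x := \tf\Phi_x (I+\tf\Delta)^{-1}$ and $\bar{\tf\Phi}_u := \tf\Phi_u (I+\tf\Delta)^{-1}$. First I would check $\bar{\tf\Phi}_x, \bar{\tf\Phi}_u \in \frac{1}{z}\RHinf$: each is a product of an element of $\frac{1}{z}\RHinf$ with the stable factor $(I+\tf\Delta)^{-1}$, and the leading $z^{-1}$ is preserved. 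Multiplying the hypothesis $\rvectwo{zI-A}{-B}\cvectwo{\tf\Phi_x}{\tf\Phi_u} = I + \tf\Delta$ on the right by $(I+\tf\Delta)^{-1}$ yields $\rvectwo{zI-A}{-B}\cvectwo{\bar{\tf\Phi}_x}{\bar{\tf\Phi}_u} = I$, so $\{\bar{\tf\Phi}_x,\bar{\tf\Phi}_u\}$ satisfies the true achievability constraint \eqref{eq:achievable}. Theorem~\ref{thm:param} then says the controller $\bar{\tf K} = \bar{\tf\Phi}_u \bar{\tf\Phi}_x^{-1}$ internally stabilizes $(A,B)$ and realizes $\{\bar{\tf\Phi}_x,\bar{\tf\Phi}_u\}$. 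The crux is that $\bar{\tf K} = \tf\Phi_u (I+\tf\Delta)^{-1}(I+\tf\Delta)\tf\Phi_x^{-1} = \tf\Phi_u\tf\Phi_x^{-1} = \tf K$, so $\tf K$ itself stabilizes $(A,B)$, and its response to $\tf w$ is $\cvectwo{\bar{\tf\Phi}_x}{\bar{\tf\Phi}_u}\tf w = \cvectwo{\tf\Phi_x}{\tf\Phi_u}(I+\tf\Delta)^{-1}\tf w$, exactly the claimed formula.

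For the ``only if'' direction, I would assume $\tf K = \tf\Phi_u\tf\Phi_x^{-1}$ stabilizes $(A,B)$. Theorem~\ref{thm:param} then furnishes the unique true responses $\hat{\tf\Phi}_x = (zI - A - B\tf K)^{-1}$ and $\hat{\tf\Phi}_u = \tf K\hat{\tf\Phi}_x$, both in $\frac{1}{z}\RHinf$. Using $\tf\Phi_u = \tf K\tf\Phi_x$ and the defining relation for $\tf\Delta$, I would factor $zI - A - B\tf K = [(zI-A)\tf\Phi_x - B\tf\Phi_u]\tf\Phi_x^{-1} = (I+\tf\Delta)\tf\Phi_x^{-1}$, then invert to get $\hat{\tf\Phi}_x = \tf\Phi_x(I+\tf\Delta)^{-1}$, and analogously $\hat{\tf\Phi}_u = \tf\Phi_u(I+\tf\Delta)^{-1}$. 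This gives the stacked identity $\cvectwo{\hat{\tf\Phi}_x}{\hat{\tf\Phi}_u} = \cvectwo{\tf\Phi_x}{\tf\Phi_u}(I+\tf\Delta)^{-1}$ and, reading off the $x$-block, $(I+\tf\Delta)^{-1} = \tf\Phi_x^{-1}\hat{\tf\Phi}_x$.

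The main obstacle is the final step of the ``only if'' direction: concluding $(I+\tf\Delta)^{-1} \in \RHinf$. The expression $(I+\tf\Delta)^{-1} = \tf\Phi_x^{-1}\hat{\tf\Phi}_x$ is a product in which $\tf\Phi_x^{-1}$ is generally unstable, so stability of the product is not automatic, and since the statement is exact rather than perturbative it cannot be obtained from a small-gain or Neumann-series argument. I expect to resolve this structurally: one checks $I+\tf\Delta = (zI-A)\tf\Phi_x - B\tf\Phi_u \in \RHinf$ with feedthrough term $\Phi_x(1)$, and the relation $\hat{\tf\Phi}_x = \tf\Phi_x(I+\tf\Delta)^{-1}$ with $\hat{\tf\Phi}_x \in \RHinf$ forces every pole of $(I+\tf\Delta)^{-1}$ in the analyticity region to cancel. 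Making this rigorous amounts to tracking the stable factorization carefully and invoking the uniqueness clause of Theorem~\ref{thm:param}, which guarantees that the stable response realized by $\tf K$ is precisely $\{\hat{\tf\Phi}_x,\hat{\tf\Phi}_u\}$; matching this against $\cvectwo{\tf\Phi_x}{\tf\Phi_u}(I+\tf\Delta)^{-1}$ pins down $(I+\tf\Delta)^{-1}$ as an element of $\RHinf$ and closes the equivalence.
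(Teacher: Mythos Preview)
The paper does not prove Theorem~\ref{thm:robust}; it is quoted from \cite{virtual} and used as a black box. So there is no ``paper's own proof'' to compare against, and I will assess your proposal on its own merits.

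Your ``if'' direction is correct and is essentially the standard argument: right-multiply the perturbed achievability relation by $(I+\tf\Delta)^{-1}$, observe that the rescaled pair $(\bar{\tf\Phi}_x,\bar{\tf\Phi}_u)$ lies in $\frac{1}{z}\RHinf$ and satisfies \eqref{eq:achievable}, invoke Theorem~\ref{thm:param}, and note that $\bar{\tf\Phi}_u\bar{\tf\Phi}_x^{-1}=\tf\Phi_u\tf\Phi_x^{-1}$. This gives both stabilization and the claimed response formula.

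The ``only if'' direction, however, has a real gap that your sketch does not close. You correctly derive $(I+\tf\Delta)^{-1}=\tf\Phi_x^{-1}\hat{\tf\Phi}_x$ and note $\hat{\tf\Phi}_x\in\RHinf$, but the pole-cancellation heuristic you propose is not valid: if $(I+\tf\Delta)^{-1}$ had an unstable pole that happened to be cancelled by a transmission zero of $\tf\Phi_x$, the product $\hat{\tf\Phi}_x$ would still be stable, yet $(I+\tf\Delta)^{-1}$ would not lie in $\RHinf$. Stability of the maps $\tf w\mapsto\tf x$ and $\tf w\mapsto\tf u$ alone cannot rule this out. The missing ingredient is the precise notion of \emph{internal} stability: one must account for all closed-loop signals in the feedback interconnection, not just those two. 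In \cite{virtual} this is handled by writing down the specific state-space realization of the SLS controller $\tf K=\tf\Phi_u\tf\Phi_x^{-1}$ and checking that an unstable $(I+\tf\Delta)^{-1}$ produces an unstable internal mode even when it is hidden from the $\tf w\to(\tf x,\tf u)$ map by cancellation. Without invoking that realization (or an equivalent internal-stability argument), the converse does not go through.
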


This robustness result is used to
derive a cost perturbation result for LQR.
\begin{lemma}[\cite{dean17}]\label{lemma:robust-sls}
Let the controller $\tf K$ stabilize $(\Ah, \Bh)$ and $(\tf\Phi_x,\tf\Phi_u)$ be its corresponding system response on system $(\Ah,\Bh)$.  Then if $\tf K$ stabilizes $(A,B)$, it achieves the following LQR cost
\begin{equation}
    \sqrt{J(A,B,\tf K)} = \left\|\begin{bmatrix} Q^\frac{1}{2} & 0 \\ 0 & R^\frac{1}{2}\end{bmatrix}\begin{bmatrix} \tf\Phi_x \\  \tf\Phi_u \end{bmatrix}\left(I+\begin{bmatrix}\Delta_A& \Delta_B\end{bmatrix}\begin{bmatrix} \tf\Phi_x \\  \tf\Phi_u \end{bmatrix}\right)^{-1}\right\|_{\htwo}\:.
\end{equation}

Furthermore, letting
\begin{equation}\label{eq:deltahat}
\Dh := \begin{bmatrix}\Delta_A& \Delta_B\end{bmatrix}\begin{bmatrix} \tf\Phi_x \\  \tf\Phi_u \end{bmatrix} \:.
\end{equation}
a sufficient condition for $\tf K$ to stabilize $(A,B)$ is that $\hinfnorm{\Dh} <1$.
An upper bound on $\hinfnorm{\Dh}$
is given by, for any $\alpha \in (0, 1)$,
\begin{align}
  \hinfnorm{\Dh} \leq \bighinfnorm{ \cvectwo{\frac{\varepsilon_A}{\sqrt{\alpha}} \tf \Phi_x}{\frac{\varepsilon_B}{\sqrt{1-\alpha}} \tf \Phi_u} } \:,
\end{align}
where we assume that $ \twonorm{A - \Ah}\leq \varepsilon_A$ and $\twonorm{B - \Bh}\leq \varepsilon_B $.
\end{lemma}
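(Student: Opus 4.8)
The plan is to assemble the lemma from the two robustness results already quoted, Theorem~\ref{thm:robust} and the Parseval cost identity, and then obtain the two auxiliary facts by elementary norm estimates. First I would fix the perturbation bookkeeping. Because $\tf K$ has system response $(\tf\Phi_x,\tf\Phi_u)$ on the nominal system $(\Ah,\Bh)$, these elements satisfy $\begin{bmatrix} zI - \Ah & -\Bh\end{bmatrix}\cvectwo{\tf\Phi_x}{\tf\Phi_u} = I$. Adding and subtracting the true dynamics gives the one-line identity
\begin{equation*}
\begin{bmatrix} zI - A & -B \end{bmatrix} \cvectwo{\tf\Phi_x}{\tf\Phi_u} = I + \begin{bmatrix} \Delta_A & \Delta_B \end{bmatrix}\cvectwo{\tf\Phi_x}{\tf\Phi_u} = I + \Dh ,
\end{equation*}
with $\Delta_A = \Ah - A$ and $\Delta_B = \Bh - B$, so that $\twonorm{\Delta_A} \leq \varepsilon_A$ and $\twonorm{\Delta_B} \leq \varepsilon_B$. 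I then invoke Theorem~\ref{thm:robust} directly: when $\tf K$ stabilizes $(A,B)$, its closed-loop response on the true system is $\cvectwo{\tf\Phi_x}{\tf\Phi_u}(I+\Dh)^{-1}$. Substituting this response into the Parseval cost identity $J = \htwonorm{\cdot}^2$ (taking $\sigma_w^2 = 1$) and taking square roots produces the claimed formula for $\sqrt{J(A,B,\tf K)}$.

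For the sufficient stability condition I would again appeal to Theorem~\ref{thm:robust}, which reduces stability of $\tf K$ on $(A,B)$ to the requirement $(I+\Dh)^{-1} \in \RHinf$. Since $\Delta_A,\Delta_B$ are constant matrices and $\tf\Phi_x,\tf\Phi_u \in \frac{1}{z}\RHinf$, the product $\Dh = \Delta_A\tf\Phi_x + \Delta_B\tf\Phi_u$ lies in $\RHinf$; a standard small-gain (Neumann series) argument then shows that $\hinfnorm{\Dh} < 1$ forces $\sum_{k \geq 0}(-\Dh)^k$ to converge in the Banach algebra $\RHinf$ to $(I+\Dh)^{-1}$, yielding stability.

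The only step requiring a small idea is the $\hinf$ upper bound, which I would prove by inserting a diagonal scaling. Writing $D = \mathrm{diag}\!\left(\tfrac{\varepsilon_A}{\sqrt\alpha} I,\ \tfrac{\varepsilon_B}{\sqrt{1-\alpha}} I\right)$, I factor $\Dh = \left(\begin{bmatrix}\Delta_A & \Delta_B\end{bmatrix} D^{-1}\right)\left(D\cvectwo{\tf\Phi_x}{\tf\Phi_u}\right)$, so that pointwise in $z$ submultiplicativity of the spectral norm separates the constant perturbation block from the scaled response $\cvectwo{\frac{\varepsilon_A}{\sqrt\alpha}\tf\Phi_x}{\frac{\varepsilon_B}{\sqrt{1-\alpha}}\tf\Phi_u}$. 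The crux is the bound $\twonorm{\begin{bmatrix}\frac{\sqrt\alpha}{\varepsilon_A}\Delta_A & \frac{\sqrt{1-\alpha}}{\varepsilon_B}\Delta_B\end{bmatrix}} \leq 1$, which follows because the corresponding Gram matrix $\frac{\alpha}{\varepsilon_A^2}\Delta_A\Delta_A^\T + \frac{1-\alpha}{\varepsilon_B^2}\Delta_B\Delta_B^\T$ has spectral norm at most $\alpha + (1-\alpha) = 1$ using $\twonorm{\Delta_A} \leq \varepsilon_A$ and $\twonorm{\Delta_B} \leq \varepsilon_B$. Taking the supremum over the unit circle converts the pointwise estimate into the stated $\hinf$ inequality.

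I expect this scaling/Gram-matrix step to be the main obstacle, not because it is deep but because it is the one place where care is needed: the weights $\sqrt\alpha$ and $\sqrt{1-\alpha}$ must land on exactly the right blocks so that the constant factor has norm bounded by $1$, and one must verify that the split across the two blocks is tight for every $\alpha \in (0,1)$. Everything else in the argument is a direct application of the quoted theorems together with routine norm manipulations.
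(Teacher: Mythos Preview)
Your proposal is correct and complete. The paper does not supply its own proof of this lemma---it simply cites \cite{dean17}---so there is no in-paper argument to compare against; your route (Theorem~\ref{thm:robust} plus Parseval for the cost formula, Neumann series for sufficiency, and the diagonal scaling with the Gram-matrix bound for the $\hinf$ estimate) is exactly the standard argument from that reference.
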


\section{Synthesis Results}

We first study the following infinite-dimensional synthesis problem.
\begin{align}\label{eq:adaptiveLQRbnd}
\begin{split}
 \minimize_{\gamma\in[0,1)}\frac{1}{1 - \gamma}&\min_{\tf\Phi_x, \tf \Phi_u} \left\|\begin{bmatrix} Q^\frac{1}{2} & 0 \\ 0 & R^\frac{1}{2}\end{bmatrix}\begin{bmatrix} \tf\Phi_x \\  \tf\Phi_u \end{bmatrix}\right\|_{\htwo}\\
& \text{s.t.} \begin{bmatrix}zI-\Ah&-\Bh\end{bmatrix}\begin{bmatrix} \tf\Phi_x \\  \tf\Phi_u \end{bmatrix} = I,~~\left\|\begin{bmatrix}  \tf \Phi_x \\ \tf\Phi_u \end{bmatrix} \right\|_{\hinf}\leq \tfrac{\gamma}{\sqrt{2}\varepsilon}\\
&\qquad \tf\Phi_x \in\frac{1}{z}\RHinf(C_x,\rho), \, \tf \Phi_u  \in\frac{1}{z}\RHinf(C_u,\rho).
 \end{split}
\end{align}

We will conduct our analysis assuming that this infinite-dimensional problem is
solvable.  Later on, we will show how to relax this problem to a
finite-dimension one via FIR truncation, and show the minor modifications
needed to the analysis for the guarantees to hold.

We now prove a sub-optimality guarantee on the solution to \eqref{eq:adaptiveLQRbnd}
which holds for certain choices of $\varepsilon$ and the coefficients
$(C_x, \rho_x)$ and $(C_u, \rho_u)$. This result also establishes
an important technical consideration, which is when the problemmmm
\eqref{eq:adaptiveLQRbnd} is feasible.
\begin{theorem}
\label{thm:lqr_feasibility_and_subopt}
Let $J_\star$ denote the minimal LQR cost achievable by any controller for the dynamical system with transition matrices $(\trueA, \trueB)$, and let $\trueK$ denote its optimal static feedback contoller.
Suppose that $\res{\trueA + \trueB\trueK} \in \RHinf(C_\star, \rho_\star)$ and that (wlog) $\rho_\star \geq 1/e$.
Suppose furthermore that $\varepsilon$ is small enough
to satisfy the following conditions:
\begin{align*}
    \varepsilon(1 + \norm{\trueK})\hinfnorm{\res{\trueA + \trueB\trueK}} &\leq 1/5 \:, \\
    \varepsilon(1 + \norm{\trueK}) C_\star &\leq 1 - \rho_\star \:.
\end{align*}
Let $(\Ah,\Bh)$ be any estimates of the transition matrices such that $\max\{\norm{\Delta_A},\norm{\Delta_B}\} \leq \varepsilon$. Then, if $(C_x,\rho)$ and $(C_u,\rho)$ are set as,
\begin{align*}
    C_x &= \frac{\const C_\star}{1-\rho_\star}  \:, \\
    C_u &= \frac{\const \norm{K_\star} C_\star}{1-\rho_\star}  \:, \\
    \rho &= (1/4) \rho_\star + 3/4 \:, \\
\end{align*}
we have that (a) the program \eqref{eq:adaptiveLQRbnd} is feasible,
(b) letting $\tf K$ denote an optimal solution to \eqref{eq:adaptiveLQRbnd},
the relative error in the LQR cost is
\begin{align} \label{eq:lqr_bound}
J(\trueA, \trueB, \tf K) \leq (1+5 \varepsilon(1 + \norm{\trueK})\hinfnorm{\res{\trueA + \trueB\trueK}})^2 J_\star \:,
\end{align}
and (c) if furthermore $\varepsilon(C_x + C_u) \leq 2 (1 - \rho_\star)$,
the response $\{ \tf{\widehat{\Phi}}_x, \tf{\widehat{\Phi}}_u \}$ of $\tf K$
on the true system $(\trueA, \trueB)$ satisfies
\begin{align*}
    \tf{\widehat{\Phi}}_x \in \RHinf\left( \frac{\const C_\star}{(1-\rho_\star)^2}, 7/8 + (1/8) \rho_\star\right) \:, \\
    \tf{\widehat{\Phi}}_u \in \RHinf\left( \frac{\const \norm{\trueK} C_\star}{(1-\rho_\star)^2}, 7/8 + (1/8) \rho_\star\right) \:.
\end{align*}
\end{theorem}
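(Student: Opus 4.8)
The plan is to prove all three claims by exhibiting a single explicit feasible point for \eqref{eq:adaptiveLQRbnd}, built from the true optimal response, and then reading off feasibility, the cost bound, and the decay estimates from its structure. Write $\tf \Phi_x^\star = \res{\trueA + \trueB\trueK}$ and $\tf \Phi_u^\star = \trueK\res{\trueA + \trueB\trueK}$ for the system response of $\trueK$ on $(\trueA,\trueB)$; by hypothesis these lie in $\RHinf(C_\star,\rho_\star)$ and $\RHinf(\norm{\trueK}C_\star,\rho_\star)$. Plugging them into the estimated achievability map gives $\rvectwo{zI-\Ah}{-\Bh}\cvectwo{\tf \Phi_x^\star}{\tf \Phi_u^\star} = I + \tf \Delta$ with $\tf \Delta = \Delta_A\tf \Phi_x^\star + \Delta_B\tf \Phi_u^\star$, so that $\hinfnorm{\tf \Delta} \leq \varepsilon(1+\norm{\trueK})\hinfnorm{\res{\trueA + \trueB\trueK}} =: \mu \leq 1/5$ by the first smallness assumption. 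Applying Theorem~\ref{thm:robust} in the true-to-estimated direction, the corrected responses $\cvectwo{\tilde{\tf \Phi}_x}{\tilde{\tf \Phi}_u} = \cvectwo{\tf \Phi_x^\star}{\tf \Phi_u^\star}(I+\tf \Delta)^{-1}$ satisfy the estimated achievability constraint exactly; this is the candidate feasible point.

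\emph{Feasibility (a).} I first bound $\hinfnorm{(I+\tf \Delta)^{-1}} \leq (1-\mu)^{-1} \leq 5/4$. Choosing $\gamma_0 = \sqrt{2}\varepsilon\hinfnorm{\cvectwo{\tilde{\tf \Phi}_x}{\tilde{\tf \Phi}_u}} \leq \sqrt{2}\mu/(1-\mu) \leq \sqrt{2}/4 < 1$ makes the $\hinf$ constraint hold with $\gamma = \gamma_0$. The decay constraints $\tilde{\tf \Phi}_x \in \frac{1}{z}\RHinf(C_x,\rho)$ and $\tilde{\tf \Phi}_u \in \frac{1}{z}\RHinf(C_u,\rho)$ are where the second assumption $\varepsilon(1+\norm{\trueK})C_\star \leq 1-\rho_\star$ enters: since $\tf \Delta \in \frac{1}{z}\RHinf(\varepsilon(1+\norm{\trueK})C_\star,\rho_\star)$, a convolution/Neumann-series estimate shows $(I+\tf \Delta)^{-1}$ decays geometrically at the bumped rate $\rho = \frac{3}{4}+\frac{1}{4}\rho_\star$ with a constant blowup of order $(1-\rho_\star)^{-1}$; multiplying the $\RHinf(C_\star,\rho_\star)$ envelope of $\tf \Phi_x^\star$ by this and absorbing the convolution factor into the rate gap $\rho-\rho_\star = \frac{3}{4}(1-\rho_\star)$ yields exactly the envelopes $C_x = \const C_\star/(1-\rho_\star)$ and $C_u = \const\norm{\trueK}C_\star/(1-\rho_\star)$.

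\emph{Cost (b).} For the optimal solution $\tf K$ with response $(\tf \Phi_x,\tf \Phi_u)$ on $(\Ah,\Bh)$, Lemma~\ref{lemma:robust-sls} gives $\sqrt{J(\trueA,\trueB,\tf K)} = \htwonorm{\bmattwo{Q^{1/2}}{0}{0}{R^{1/2}}\cvectwo{\tf \Phi_x}{\tf \Phi_u}(I+\Dh)^{-1}} \leq (1-\gamma)^{-1}\htwonorm{\bmattwo{Q^{1/2}}{0}{0}{R^{1/2}}\cvectwo{\tf \Phi_x}{\tf \Phi_u}}$, because the $\hinf$ constraint forces $\hinfnorm{\Dh} \leq \gamma$; hence the program's objective upper bounds $\sqrt{J(\trueA,\trueB,\tf K)}$. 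Evaluating the objective at the feasible point above, using $\htwonorm{\bmattwo{Q^{1/2}}{0}{0}{R^{1/2}}\cvectwo{\tf \Phi_x^\star}{\tf \Phi_u^\star}} = \sqrt{J_\star}$ together with $\hinfnorm{(I+\tf \Delta)^{-1}} \leq (1-\mu)^{-1}$, bounds the optimal objective by $\sqrt{J_\star}/((1-\gamma_0)(1-\mu))$. Since $(1-\gamma_0)(1-\mu) \geq 1-(1+\sqrt{2})\mu$, the elementary inequality $\tfrac{1}{1-(1+\sqrt{2})\mu} \leq 1+5\mu$ (which holds precisely when $\mu \leq \frac{4-\sqrt{2}}{5(1+\sqrt{2})}$, in particular for $\mu \leq 1/5$) gives $\sqrt{J(\trueA,\trueB,\tf K)} \leq (1+5\mu)\sqrt{J_\star}$; squaring yields \eqref{eq:lqr_bound}. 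This is exactly where the constant $5$ and the threshold $1/5$ are pinned down.

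\emph{True-system decay (c), and the main obstacle.} The response of $\tf K$ on $(\trueA,\trueB)$ is $\cvectwo{\Phixh}{\Phiuh} = \cvectwo{\tf \Phi_x}{\tf \Phi_u}(I+\Dh)^{-1}$ with $\Dh = \Delta_A\tf \Phi_x + \Delta_B\tf \Phi_u \in \frac{1}{z}\RHinf(\varepsilon(C_x+C_u),\rho)$. Under the extra hypothesis $\varepsilon(C_x+C_u) \leq 2(1-\rho_\star)$, the same convolution/Neumann-series machinery controls the decay of $(I+\Dh)^{-1}$ at the further-bumped rate $\frac{1+\rho}{2} = \frac{7}{8}+\frac{1}{8}\rho_\star$; multiplying by the $\RHinf(C_x,\rho)$ (resp. $\RHinf(C_u,\rho)$) envelope and absorbing a second convolution factor of order $(1-\rho_\star)^{-1}$ produces the stated envelopes $\RHinf(\const C_\star/(1-\rho_\star)^2, \tfrac{7}{8}+\tfrac{1}{8}\rho_\star)$ and its $\norm{\trueK}$-scaled analogue. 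I expect the principal difficulty to be precisely this perturbation machinery: establishing the auxiliary lemma that products and inverses of geometrically-decaying transfer functions remain in $\RHinf(C',\rho')$ with explicit $(C',\rho')$, and then carrying out the bookkeeping so that the two successive rate bumps $\rho_\star \mapsto \rho \mapsto \frac{1+\rho}{2}$ and the accumulating $(1-\rho_\star)^{-1}$ factors line up with the claimed constants. The smallness hypotheses are exactly the convergence conditions these Neumann series require, with the design constants chosen so that the successive bumps balance.
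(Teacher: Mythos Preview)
Your proposal is correct and follows essentially the same route as the paper. Your candidate feasible point $\cvectwo{\tf\Phi_x^\star}{\tf\Phi_u^\star}(I+\tf\Delta)^{-1}$ is literally the response of $\trueK$ on $(\Ah,\Bh)$, i.e.\ $\res{\Ah+\Bh\trueK}$ and $\trueK\res{\Ah+\Bh\trueK}$, which is exactly what the paper uses; your cost argument for (b) spells out the computation the paper defers to \cite{dean17}; and your treatment of (c) matches the paper's. The auxiliary $\RHinf(C,\rho)$ product/inverse lemmas you flag as the main work are precisely the paper's Lemmas~\ref{lem:G1G2} and~\ref{lem:inverse_rhinf_direct}. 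One small remark: for (c) you keep the rate at $(1+\rho)/2$ after convolving with $\tf\Phi_x\in\RHinf(C_x,\rho)$, which is valid because $\rho$ and $(1+\rho)/2$ are strictly separated (the convolution then decays at the larger rate with constant $\sim 1/(1-\rho)$); the paper instead applies Lemma~\ref{lem:G1G2} in its $\rho_1=\rho_2$ form and bumps once more to $(3+\rho)/4$, which is slightly looser than the rate stated in the theorem.
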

\begin{proof}
The proof of (a) and (b) is nearly identical to that given in \cite{dean17},
which works by showing that $\tf \Phi_x = \res{\Ah + \Bh \trueK}$
and $\tf \Phi_u = \trueK \res{\Ah + \Bh \trueK}$ is a feasible response
which gives the desired sub-optimality guarantee.
The only modification is that we need to find constants $C_x, C_u, \rho$
for which $\res{\Ah + \Bh \trueK} \in \frac{1}{z} \RHinf(C_x, \rho)$ and
$\trueK  \res{\Ah + \Bh \trueK} \in \frac{1}{z} \RHinf(C_u, \rho)$.
We do this by writing
\begin{align*}
    \res{\Ah + \Bh \trueK} = \res{\trueA + \trueB \trueK} ( I - \tf \Delta)^{-1}  \:, \:\:
    \tf\Delta = (\Delta_A + \Delta_B \trueK) \res{\trueA + \trueB \trueK} \:.
\end{align*}
By the definition of $\tf \Delta$ and our assumptions, we have that
\begin{align*}
    \tf \Delta \in \RHinf( \varepsilon(1 + \norm{\trueK}) C_\star ,  \rho_\star) \:, \:\: \hinfnorm{\tf \Delta} < 1 \:.
\end{align*}
This places us in a position to apply Lemma~\ref{lem:inverse_rhinf_direct}, from which we conclude that
\begin{align*}
    (I - \tf\Delta)^{-1} \in \RHinf\left( \const, \Avg(\rho_\star, 1) \right) \:.
\end{align*}
Now applying Lemma~\ref{lem:G1G2} to $\res{\trueA + \trueB \trueK} ( I - \tf \Delta)^{-1}$,
we conclude that
\begin{align*}
    \res{\Ah + \Bh \trueK} &\in \RHinf\left( \frac{\const C_\star}{1-\rho_\star} , (1/4) \rho_\star + 3/4 \right) \:.
\end{align*}
The claims of (a) and (b) now follows.

Now for the proof of (c).
Let $\{ \tf \Phi_x, \tf \Phi_u \}$ be the solution to \eqref{eq:adaptiveLQRbnd}.
We have that
\begin{align*}
    \cvectwo{\tf{\widehat{\Phi}}_x}{\tf{\widehat{\Phi}}_u} = \cvectwo{\tf \Phi_x}{\tf \Phi_u} (I + \tf{\widehat{\Delta}})^{-1} \:, \:\:
    \tf{\widehat{\Delta}} = \rvectwo{\Delta_A}{\Delta_B} \cvectwo{\tf \Phi_x}{\tf \Phi_u} \:.
\end{align*}
We know that $\hinfnorm{\tf{\widehat{\Delta}}} < 1$ by the constraints of the optimization problem
\eqref{eq:adaptiveLQRbnd} and furthermore,
\begin{align*}
    \tf{\widehat{\Delta}} \in \RHinf( \varepsilon(C_x + C_u), \rho ) \:.
\end{align*}
By assumption we have $\varepsilon(C_x + C_u) \leq 2$, from which we conclude
using Lemma~\ref{lem:inverse_rhinf_direct} that
\begin{align*}
    (I + \tf{\widehat{\Delta}})^{-1} \in \RHinf\left(\const, \Avg(\rho, 1) \right) \:.
\end{align*}
Furthermore, from Lemma~\ref{lem:G1G2}, we conclude that
\begin{align*}
    \tf \Phi_x (I + \tf{\widehat{\Delta}})^{-1} &\in \RHinf\left( \frac{C_x}{1-\rho} , 3/4 + (1/4) \rho \right) \:, \\
    \tf \Phi_u (I + \tf{\widehat{\Delta}})^{-1} &\in \RHinf\left( \frac{C_u}{1-\rho} , 3/4 + (1/4) \rho \right) \:.
\end{align*}
The claim now follows by plugging in the values of $C_x$, $C_u$, and $\rho$.
\end{proof}

\newcommand{\Phii}{\begin{bmatrix}\tf \Phi_x \\ \tf \Phi_u \end{bmatrix}}
\newcommand{\Cost}{\begin{bmatrix} Q^\frac{1}{2} & 0 \\ 0 & R^\frac{1}{2}\end{bmatrix}}
\newcommand{\Phis}{\begin{bmatrix}\boldsymbol{\Phi^\star_x} \\ \boldsymbol{\Phi^\star_u} \end{bmatrix}}
\newcommand{\Phixs}{\tf \Phi^\star_x}
\newcommand{\Phius}{\tf \Phi^\star_u}
\newcommand{\Phix}{\boldsymbol{\Phi}_x}
\newcommand{\Phiu}{\boldsymbol{\Phi}_u}
\newcommand{\Phih}{\begin{bmatrix}\Phixh \\ \Phiuh \end{bmatrix}}
\newcommand{\IP}[1]{\left\langle #1\right\rangle_{\mathcal{H}_2}}
\newcommand{\A}{\mathcal{A}}

\subsection{Suboptimality bounds for FIR truncated SLS} \label{sec:fir_subopt}
Optimization problem \eqref{eq:adaptiveLQRbnd} is convex but infinite dimensional, and as far as we are aware does not admit an efficient solution.  In Algorithm \ref{alg:adaptive}, we instead propose solving the following FIR approximation to problem \eqref{eq:adaptiveLQRbnd}:
\begin{align}
  \mathrm{minimize}_{\gamma \in [0, 1)} &\frac{1}{1-\gamma} \min_{\bmtx{\Phi_x}, \bmtx{\Phi_u}, V} \bightwonorm{ \begin{bmatrix} Q^{1/2} & 0 \\ 0 & R^{1/2} \end{bmatrix} \begin{bmatrix} \bmtx{\Phi_x} \\ \bmtx{\Phi_u} \end{bmatrix} }  \nonumber \\
    \qquad \text{s.t.} &\rvectwo{zI - \Ah}{-\Bh} \cvectwo{\bmtx{\Phi_x}}{\bmtx{\Phi_u}} = I + \frac{1}{z^F}V \:, \frac{\sqrt{2}\varepsilon}{1-C_x\rho^{F+1}}\bighinfnorm{ \cvectwo{ \bmtx{\Phi_x}}{ \bmtx{\Phi_u}}} \leq  \gamma \label{eq:adaptiveFIR}\\
    & \norm{V} \leq C_x\rho^{F+1} \:, \: \bmtx{\Phi_x} \in \frac{1}{z}\mathcal{RH}_\infty^{F}(C_x, \rho), \bmtx{\Phi_u} \in \frac{1}{z} \mathcal{RH}_\infty^{F}(C_u, \rho)\nonumber \:.
\end{align}
where here $F$ denotes the FIR truncation length used.  This optimization problem can be posed as a finite dimensional semidefinite program (see Section \ref{sec:app:sls_fir}).  Let $\tf K(F)$ denote the resulting controller.  We begin with a lemma identifying conditions under which optimization problem \eqref{eq:adaptiveFIR} is feasible.  to ease notation going forward, we let $\zeta := \varepsilon(1+\norm{\trueK})\hinfnorm{\res{\trueA+\trueB\trueK}}$.

\begin{lemma}\label{lem:FIR_feasibility}
Let the assumptions of Theorem~\ref{thm:lqr_feasibility_and_subopt} hold, and further assume that
\[
F_0 \geq \frac{\log(2C_x)}{\log(1/\rho)} - 1 \:.
\]
Then optimization problem \eqref{eq:adaptiveFIR} is feasible for any $F\geq F_0$.
\end{lemma}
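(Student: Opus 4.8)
The plan is to produce an explicit feasible point for \eqref{eq:adaptiveFIR} by taking the infinite-dimensional feasible response constructed in the proof of Theorem~\ref{thm:lqr_feasibility_and_subopt} and truncating it to an FIR filter of length $F$. Recall that in that proof the pair $\tf \Phi_x = \res{\Ah + \Bh \trueK}$ and $\tf \Phi_u = \trueK \res{\Ah + \Bh \trueK}$ was shown to satisfy the achievability constraint $\rvectwo{zI-\Ah}{-\Bh}\cvectwo{\tf \Phi_x}{\tf \Phi_u} = I$, to lie in $\frac1z\RHinf(C_x,\rho)$ and $\frac1z\RHinf(C_u,\rho)$ respectively, and to satisfy the robustness bound $\frac{\sqrt2\varepsilon}{1}\hinfnorm{\cvectwo{\tf \Phi_x}{\tf \Phi_u}} \leq \gamma$ for a suitable $\gamma$. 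I would define the candidate FIR responses $\bmtx{\Phi_x}^{F}$ and $\bmtx{\Phi_u}^{F}$ by zeroing out all impulse-response taps of index greater than $F$.

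First I would verify the decay/membership constraints. Since the untruncated responses lie in $\frac1z\RHinf(C_x,\rho)$ and $\frac1z\RHinf(C_u,\rho)$ and truncation only sets tail taps to zero (leaving the kept taps unchanged), the truncated responses automatically satisfy $\bmtx{\Phi_x}^{F} \in \frac1z\RHinf^{F}(C_x,\rho)$ and $\bmtx{\Phi_u}^{F}\in \frac1z\RHinf^{F}(C_u,\rho)$. The $\hinf$ constraint is slightly more delicate because the normalizing denominator is now $1 - C_x\rho^{F+1}$ rather than $1$; here I would argue that $\hinfnorm{\cdot}$ of the truncation is no larger than that of the full response plus the dropped tail, and that the tail is controlled by $\sum_{k>F} C_x\rho^k = \frac{C_x\rho^{F+1}}{1-\rho}$, which is where the hypothesis $F \geq F_0$ enters to make the correction term small enough for the inequality to close.

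The key step is handling the relaxed achievability constraint $\rvectwo{zI-\Ah}{-\Bh}\cvectwo{\bmtx{\Phi_x}}{\bmtx{\Phi_u}} = I + \frac{1}{z^F}V$. Applying the operator $\rvectwo{zI-\Ah}{-\Bh}$ to the truncated response and subtracting the exact identity obtained from the full response, the discrepancy is supported on the taps of index $F{+}1$ (after the $zI$ shift acts on the last kept tap), so the residual has the form $\frac{1}{z^F}V$ for an explicitly identifiable matrix $V$. I would then show $\norm{V} \leq C_x\rho^{F+1}$ by reading off the size of the single offending tap from the $\RHinf(C_x,\rho)$ membership of $\tf \Phi_x$, which matches the constraint $\norm{V}\leq C_x\rho^{F+1}$ exactly; this is precisely why that particular threshold appears in \eqref{eq:adaptiveFIR}.

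The main obstacle I anticipate is the bookkeeping around the $\hinf$ constraint: one must confirm that the value of $\gamma$ achievable by the full response, after inflating the denominator from $1$ to $1 - C_x\rho^{F+1}$ and accounting for the truncation error in the $\hinf$ norm, still lies in $[0,1)$ and hence yields a genuinely feasible pair $(\gamma, \bmtx{\Phi_x}^{F}, \bmtx{\Phi_u}^{F}, V)$. The condition $F \geq \frac{\log(2C_x)}{\log(1/\rho)} - 1$ is equivalent to $C_x\rho^{F+1} \leq 1/2$, which keeps the denominator bounded away from zero (at least $1/2$) and the tail corrections geometrically small, so I would trace through that this single inequality suffices to absorb all the truncation-induced slack while preserving the strict inequality $\gamma < 1$.
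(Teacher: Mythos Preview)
Your proposal is correct and takes the same approach as the paper: truncate the infinite-horizon feasible point $(\res{\Ah+\Bh\trueK},\,\trueK\res{\Ah+\Bh\trueK})$ from Theorem~\ref{thm:lqr_feasibility_and_subopt} to length $F$, identify $V$ as the single residual tap (the paper makes this explicit as $V=\res{\Ah+\Bh\trueK}(F{+}1)=(\Ah+\Bh\trueK)^F$), and verify each constraint using $C_x\rho^{F+1}\leq 1/2$. The one minor difference is in the $\hinf$ bound: the paper compares the truncated stacked response directly to the full resolvent response on $(\Ah,\Bh)$ and then to $\zeta/(1-\zeta)$, rather than using your full-plus-tail triangle inequality, which avoids the extra $\tfrac{1}{1-\rho}$ factor your tail estimate $\sum_{k>F}C_x\rho^k$ would introduce (a factor that is not obviously absorbed by $F\geq F_0$ alone and would require invoking the smallness assumptions on $\varepsilon$ to close).
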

\begin{proof}
We construct a feasible solution as follows.  Let $\Phix = \res{\Ah+\Bh\trueK}(1:F)$, $\Phiu = \trueK\res{\Ah+\Bh\trueK}(1:F)$, $V=\res{\Ah+\Bh\trueK}(F+1)$, and $\gamma = \frac{2\sqrt{2}\zeta}{1-\zeta}$.  First, the proposed $(\Phix,\Phiu)$ are FIR of length $F$, and hence, using the same arguments as in the proof of Theorem~\ref{thm:lqr_feasibility_and_subopt}, $\Phix\in\RHinf^F(C_x,\rho)$ and $\Phiu\in\RHinf^F(C_u,\rho)$.  It then also follows immediately that $\norm{V} = \norm{\res{\Ah+\Bh\trueK}(F+1)} \leq C_x\rho^{F+1}$.

Note that the affine constraint
\begin{equation}
\rvectwo{zI - \Ah}{-\Bh} \cvectwo{\bmtx{\Phi_x}}{\bmtx{\Phi_u}} = I + \frac{1}{z^F}V
\label{eq:affine}
\end{equation}
is equivalent to
\[
\Phi_x(t+1) = \Ah \Phi_x(t) + \Bh\Phi_u(t), \: \Phi_x(1) = I,
\]
for $1 \leq t < F$.  We have by construction that the proposed $\Phix$ and $\Phiu$ satisfy this constraint.  Further, the combination of the FIR constraints and the affine constraint \eqref{eq:affine} impose that
\[
\Phi_x(F+1) = \Ah \Phi_x(F) + \Bh\Phi_u(F) - V = 0.
\]
Now notice that for the proposed $(\Phix,\Phiu)$, we have that $\Ah\Phi_x(F) + \Bh\Phi_u(F) = (\Ah+\Bh\trueK)\res{\Ah+\Bh\trueK}(F) = \res{\Ah+\Bh\trueK}(F+1)$, where the last equality follows from the fact that $ \res{\Ah+\Bh\trueK}(t+1) = (\Ah+\Bh\trueK)^t$.  It follows that $\Phi_x(F+1) = 0$, as desired.

It remains to prove that
\[
\frac{\sqrt{2}\varepsilon}{1-C_x\rho^{F+1}}\bighinfnorm{ \cvectwo{ \bmtx{\Phi_x}}{ \bmtx{\Phi_u}}} \leq  \frac{2\sqrt{2}\zeta}{1-\zeta} < 1.
\]

The final inequality follows immediately from the assumption that $\zeta \leq 1/5$.  Further, note that
\[
\frac{\sqrt{2}\varepsilon}{1-C_x\rho^{F+1}}\bighinfnorm{ \cvectwo{ \bmtx{\Phi_x}}{ \bmtx{\Phi_u}}} \leq 2\sqrt{2}\varepsilon\bighinfnorm{ \cvectwo{ \res{\Ah+\Bh\trueK}}{ \trueK\res{\Ah+\Bh\trueK}}} \leq \frac{2\sqrt{2}\zeta}{1-\zeta},
\]
where the first inequality follows from the assumption on on $F_0$ and that the proposed $\Phix$ is a truncation of $\res{\Ah+\Bh\trueK}$and that the proposed $\Phiu$ is a truncation of $\trueK\res{\Ah+\Bh\trueK}$, and final inequality follows by applying the triangle inequality and the definition of $\zeta$.  This proves the result.
\end{proof}

Next, we use this to bound the suboptimality gap of the performance achieved by the controller implemented using the solutions of optimization problem \eqref{eq:adaptiveFIR}.

\begin{lemma}\label{lem:FIR_subopt}
Let the assumptions of Lemma \ref{lem:FIR_feasibility} hold.  Fix any $C_J > 0$, and further let
\[
  F \geq \frac{\log( (1 + C_J^{-1}) C_x )}{\log(1/\rho)} - 1 \:.
\]
Denote by $(\Phix(F),\Phiu(F),V(F),\gamma(F))$ the optimal solution to optimization problem \eqref{eq:adaptiveFIR}, and let $\tf K(F) = \Phiu(F)\Phix^{-1}(F)$.  Then
\begin{equation}
J(\trueA,\trueB,\tf K(F)) \leq (1+C_J)^2(1 + \const\varepsilon(1 + \norm{\trueK})\hinfnorm{\res{\trueA+\trueB\trueK}})^2J_\star.
\label{eq:FIR_subopt}
\end{equation}
\end{lemma}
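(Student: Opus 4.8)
The plan is to reduce the true-system cost of $\tf K(F)$ to the optimal value of the convex program \eqref{eq:adaptiveFIR}, and then to upper bound that value using the explicit feasible point already constructed in Lemma~\ref{lem:FIR_feasibility}. Let $(\tf\Phi_x, \tf\Phi_u, V, \gamma)$ be the optimizer of \eqref{eq:adaptiveFIR}, abbreviate its stacked response as $\Phii$, and write $\mathrm{OPT}_F := \frac{1}{1-\gamma}\bightwonorm{\Cost\Phii}$ for the objective value. The first observation is that, because of the FIR residual, $\Phii$ is \emph{not} an exact system response for $(\Ah,\Bh)$; instead the affine constraint reads $\rvectwo{zI-\Ah}{-\Bh}\Phii = I + \frac{1}{z^F}V$. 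Adding and subtracting the true dynamics, I would fold this residual together with the model-error term into a single perturbation
\begin{align*}
\rvectwo{zI-\trueA}{-\trueB}\Phii = I + \tf\Delta \:, \quad \tf\Delta := \tfrac{1}{z^{F}}V + \Dh \:, \quad \Dh := \rvectwo{\Delta_A}{\Delta_B}\Phii \:,
\end{align*}
so that Theorem~\ref{thm:robust} applies to the true pair $(\trueA,\trueB)$.

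The main obstacle is controlling $\hinfnorm{\tf\Delta}$ sharply enough that the two error sources factor cleanly. I would bound the residual by $\bighinfnorm{\frac{1}{z^{F}}V} = \norm{V} \le C_x\rho^{F+1}$ using the norm constraint on $V$, and the model-error term by $\hinfnorm{\Dh} \le \sqrt{2}\,\varepsilon\,\bighinfnorm{\Phii}$ (Lemma~\ref{lemma:robust-sls} with $\alpha=1/2$ and $\varepsilon_A=\varepsilon_B=\varepsilon$), which the $\hinf$ constraint of \eqref{eq:adaptiveFIR} forces to be at most $\gamma(1-C_x\rho^{F+1})$. Combining these gives the key factorization
\begin{align*}
1 - \hinfnorm{\tf\Delta} \ge 1 - C_x\rho^{F+1} - \gamma(1-C_x\rho^{F+1}) = (1-\gamma)(1-C_x\rho^{F+1}) > 0 \:,
\end{align*}
which simultaneously certifies $\hinfnorm{\tf\Delta} < 1$ (so $(I+\tf\Delta)^{-1}$ is stable and Theorem~\ref{thm:robust} applies) and yields $\bighinfnorm{(I+\tf\Delta)^{-1}} \le \frac{1}{(1-\gamma)(1-C_x\rho^{F+1})}$. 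Since the response of $\tf K(F)$ on $(\trueA,\trueB)$ is $\Phii(I+\tf\Delta)^{-1}$, submultiplicativity of the $\htwo/\hinf$ norms gives
\begin{align*}
\sqrt{J(\trueA,\trueB,\tf K(F))} \le \bightwonorm{\Cost\Phii}\,\bighinfnorm{(I+\tf\Delta)^{-1}} \le \frac{\mathrm{OPT}_F}{1-C_x\rho^{F+1}} \:.
\end{align*}

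It remains to bound $\mathrm{OPT}_F$ and to absorb the FIR factor. The hypothesis $F \ge \frac{\log((1+C_J^{-1})C_x)}{\log(1/\rho)}-1$ is exactly what makes $C_x\rho^{F+1} \le \frac{C_J}{1+C_J}$, i.e.\ $\frac{1}{1-C_x\rho^{F+1}} \le 1+C_J$, and the same $F$ keeps \eqref{eq:adaptiveFIR} feasible via Lemma~\ref{lem:FIR_feasibility}. To bound $\mathrm{OPT}_F$ I would evaluate the objective at the feasible point of Lemma~\ref{lem:FIR_feasibility}, namely the length-$F$ truncations of $\res{\Ah+\Bh\trueK}$ and $\trueK\res{\Ah+\Bh\trueK}$ with $\gamma = \frac{2\sqrt{2}\,\zeta}{1-\zeta}$. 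As truncation only drops nonnegative terms from the $\htwo$ sum, the truncated objective is at most $\frac{1}{1-\gamma}\sqrt{J(\Ah,\Bh,\trueK)}$ with $\gamma = \const\,\zeta$. The final step is identical to part (b) of Theorem~\ref{thm:lqr_feasibility_and_subopt}: relating the nominal cost $J(\Ah,\Bh,\trueK)$ of the fixed controller $\trueK$ back to $J_\star = J(\trueA,\trueB,\trueK)$ through $\res{\Ah+\Bh\trueK} = \res{\trueA+\trueB\trueK}(I-\tf\Delta')^{-1}$ yields $\frac{1}{1-\gamma}\sqrt{J(\Ah,\Bh,\trueK)} \le (1+\const\,\zeta)\sqrt{J_\star}$. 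Chaining the three inequalities and squaring then gives $J(\trueA,\trueB,\tf K(F)) \le (1+C_J)^2(1+\const\,\zeta)^2 J_\star$, which is \eqref{eq:FIR_subopt}.
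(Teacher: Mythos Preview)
Your argument is correct and follows the same overall skeleton as the paper's proof: bound $\sqrt{J(\trueA,\trueB,\tf K(F))}$ via Theorem~\ref{thm:robust}, upper bound the optimal value of \eqref{eq:adaptiveFIR} by plugging in the feasible truncated-$\trueK$ response from Lemma~\ref{lem:FIR_feasibility}, and then relate $\sqrt{J(\Ah,\Bh,\trueK)}$ back to $\sqrt{J_\star}$ through $(I-\tf\Delta')^{-1}$.

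The one structural difference is in how the FIR residual and the model error are handled. The paper applies Theorem~\ref{thm:robust} \emph{twice}: first to peel off the FIR residual, writing the response of $\tf K(F)$ on $(\Ah,\Bh)$ as $\Phii\bigl(I+\tfrac{1}{z^{F}}V\bigr)^{-1}$, and then again with $\Dh := \rvectwo{\Delta_A}{\Delta_B}\Phii\bigl(I+\tfrac{1}{z^{F}}V\bigr)^{-1}$ to pass from $(\Ah,\Bh)$ to $(\trueA,\trueB)$; the two steps directly produce the product $\tfrac{1}{(1-C_x\rho^{F+1})}\cdot\tfrac{1}{1-\gamma(F)}$. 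You instead fold both errors into a single perturbation $\tf\Delta = \tfrac{1}{z^{F}}V + \rvectwo{\Delta_A}{\Delta_B}\Phii$ and apply Theorem~\ref{thm:robust} once, recovering the same product via the algebraic identity $1 - C_x\rho^{F+1} - \gamma(1-C_x\rho^{F+1}) = (1-\gamma)(1-C_x\rho^{F+1})$. Your route is slightly more economical, but the two are equivalent in strength and the remaining steps (truncation only decreases $\htwo$ norm, $\tfrac{1}{1-C_x\rho^{F+1}}\le 1+C_J$, and $(1-\gamma_0)^{-1}(1-\zeta)^{-1}\le 1+\const\,\zeta$ from $\zeta\le 1/5$) are identical to the paper's.
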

\begin{proof}
Let
\[
\Dh := \rvectwo{\Delta_A}{\Delta_B}\cvectwo{\Phix(F)}{\Phiu(F)}\left(I+\frac{1}{z^F}V(F)\right)^{-1}.
\]
Further note that using a similar argument to that in the proof of Lemma 4.2 of \cite{dean17}, one can verify that
\[
\hinfnorm{\Dh} \leq \frac{\sqrt{2}\varepsilon}{1-C_x\rho^{F+1}}\bighinfnorm{\cvectwo{\Phix(F)}{\Phiu(F)}} \leq \gamma(F),
\]
where we have exploited that $(\Phix(F),\Phiu(F),V(F),\gamma(F))$ form a feasible solution to optimization problem \eqref{eq:adaptiveFIR}.

Then, repeated application of Theorem \ref{thm:robust} tells us that the performance achieved by $K(F)$ on the true system is given by
\begin{align*}
\sqrt{J(\trueA,\trueB,\tf K(F))} &= \bightwonorm{\Cost \cvectwo{\Phix(F)}{\Phiu(F)}\left(I+\frac{1}{z^F}V(F)\right)^{-1}(I+\Dh)^{-1}} \\
& \leq \frac{1}{1-C_x\rho^{F+1}} \frac{1}{1-\gamma(F)}\bightwonorm{\Cost \cvectwo{\Phix(F)}{\Phiu(F)}},
\end{align*}
where the inequality follows from $\hinfnorm{\Dh} \leq \gamma(F)<1$, and $\norm{V(F)}\leq 1/2$ (by the assumption of $F\geq F_0$.

Denote by $(\Phix,\Phiu,V,\gamma_0)$ the feasible solution constructed in the proof of Lemma \ref{lem:FIR_feasibility}.  Then,
\begin{align*}
 \frac{1}{1-C_x\rho^{F+1}} \frac{1}{1-\gamma(F)}\bightwonorm{\Cost \cvectwo{\Phix(F)}{\Phiu(F)}} &\leq  \frac{1}{1-C_x\rho^{F+1}} \frac{1}{1-\gamma_0}\bightwonorm{\Cost \cvectwo{\Phix}{\Phiu}} \\
 &= \frac{1}{1-C_x\rho^{F+1}} \frac{1}{1-\gamma_0} \sqrt{J_F(\Ah,\Bh,\trueK)} \\
 & \leq \frac{1}{1-C_x\rho^{F+1}} \frac{1}{1-\gamma_0} \sqrt{J(\Ah,\Bh,\trueK)} \\
 & \leq \frac{1}{1-C_x\rho^{F+1}} \frac{1}{1-\gamma_0} \frac{1}{1-\zeta}\sqrt{J_\star},
\end{align*}
where the first inequality follows from the optimality of  $(\Phix(F),\Phiu(F),V(F),\gamma(F))$, the equality and second inequality from the fact that $(\Phix,\Phiu)$ are truncations of the response of $\trueK$ on $(\Ah,\Bh)$ to the first $F$ time steps, and the final inequality by following similar arguments to the proof of Theorem 4.1 in \cite{dean17} in applying Theorem \ref{thm:robust} and noting that
\[
\bighinfnorm{\rvectwo{\Delta_A}{\Delta_B}\cvectwo{\res{\Ah+\Bh\trueK}}{\trueK\res{\Ah+\Bh\trueK}}} \leq \zeta < 1.
\]

We therefore have that
\[
\sqrt{J(\trueA,\trueB,\tf K(F))} \leq \frac{1}{1-C_x\rho^{F+1}} \frac{1}{1-\gamma_0} \frac{1}{1-\zeta}\sqrt{J_\star} \leq
(1+C_J)  \frac{1}{1-\gamma_0} \frac{1}{1-\zeta}\sqrt{J_\star},
\]
where the last inequality follows from the assumptions on $F$ stated in the Lemma.  Finally, by assumption $\zeta \leq 1/5 <.8(1+2\sqrt{2})^{-1}$, from which it follows that $(1-\gamma_0)^{-1}(1-\zeta)^{-1} \leq 1 + 20\zeta$, leading to the bound
\[
\sqrt{J(\trueA,\trueB,\tf K(F))} \leq (1+C_J)(1+20\zeta)\sqrt{J_\star}.
\]
Squaring both sides proves the result.
\end{proof}

The following Theorem is then immediate.

\begin{theorem}
\label{thm:lqr_feasibility_and_subopt_fir}
Let $J_\star$ denote the minimal LQR cost achievable by any controller
for $(\trueA, \trueB)$. Let $\trueK$ denote the optimal controller and
suppose that $\Res{\trueA + \trueB \trueK} \in \RHinf(C_\star, \rho_\star)$.  Fix a $C_J>0$, and suppose that $F_0$ and $\varepsilon$ satisfy the assumptions of Lemmas \ref{lem:FIR_feasibility} and \ref{lem:FIR_subopt}.
Let $(\Ah,\Bh)$ be any estimates of the transition matrices such that $\max\{\norm{\Delta_A},\norm{\Delta_B}\} \leq \varepsilon$. Then, if $(C_x,\rho)$ and $(C_u,\rho)$ are set as in Lemma \ref{lem:FIR_feasibility},
we have that (a) the program \eqref{eq:adaptiveFIR} is feasible for any truncation length $F\geq F_0$,
(b) letting $\tf K(F)$ denote an optimal solution to \eqref{eq:adaptiveFIR} for truncation length $F$,
the relative error in the LQR cost is
\begin{align} \label{eq:lqr_bound_fir}
J(\trueA,\trueB,\tf K(F)) \leq (1+C_J)^2(1 + \const\varepsilon(1 + \norm{\trueK})\hinfnorm{\res{\trueA+\trueB\trueK}})^2J_\star \, ,
\end{align}

and (c) if furthermore $\varepsilon(C_x + C_u) \leq \const (1 - \rho_\star)^2$,
the response $\{ \tf{\widehat{\Phi}}_x, \tf{\widehat{\Phi}}_u \}$ of $\tf K$
on the true system $(\trueA, \trueB)$ satisfies
\begin{align*}
    \tf{\widehat{\Phi}}_x \in \RHinf\left(\frac{\const C_\star}{(1-\rho_\star)^3}, .999 + .001 \rho_\star\right) \:, \\
    \tf{\widehat{\Phi}}_u \in \RHinf\left(\frac{\const \norm{\trueK} C_\star}{(1-\rho_\star)^3}, .999 + .001 \rho_\star\right) \:.
\end{align*}
\end{theorem}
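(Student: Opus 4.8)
The plan is to read parts (a) and (b) straight off the two preceding lemmas and to spend the real effort on (c). The hypotheses of this theorem are exactly the hypotheses of Lemma~\ref{lem:FIR_feasibility} and Lemma~\ref{lem:FIR_subopt}, so part (a) is the conclusion of Lemma~\ref{lem:FIR_feasibility} verbatim, and part (b) is the conclusion \eqref{eq:FIR_subopt} of Lemma~\ref{lem:FIR_subopt}, which is literally the bound \eqref{eq:lqr_bound_fir}. Thus the theorem is a repackaging of those two lemmas together with one genuinely new perturbation computation, namely (c).

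For (c) I would follow the template of the proof of Theorem~\ref{thm:lqr_feasibility_and_subopt}(c), the only novelty being the FIR residual $z^{-F}V(F)$. Set $\tf \Psi_x := \Phix(F)(I + z^{-F}V(F))^{-1}$ and $\tf \Psi_u := \Phiu(F)(I + z^{-F}V(F))^{-1}$. The affine constraint of \eqref{eq:adaptiveFIR} gives $\rvectwo{zI-\Ah}{-\Bh}\cvectwo{\tf \Psi_x}{\tf \Psi_u} = I$, so $\{\tf \Psi_x, \tf \Psi_u\}$ is the \emph{exact} response of $\tf K(F)$ on $(\Ah,\Bh)$. Theorem~\ref{thm:robust} then yields the true-system response $\cvectwo{\tf{\widehat{\Phi}}_x}{\tf{\widehat{\Phi}}_u} = \cvectwo{\tf \Psi_x}{\tf \Psi_u}(I+\Dh)^{-1}$ with $\Dh = \rvectwo{\Delta_A}{\Delta_B}\cvectwo{\tf \Psi_x}{\tf \Psi_u}$, which is exactly the $\Dh$ of Lemma~\ref{lem:FIR_subopt}. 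This is structurally the non-FIR identity, except that there are now \emph{two} inversions to control rather than one.

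Concretely, I would apply the inverse lemma and the product lemma twice each. First, the feasibility bound $\norm{V(F)} \leq C_x \rho^{F+1}$ together with $F \geq F_0$ shows the single-tap filter $z^{-F}V(F)$ lies in $\RHinf(C_x\rho, \rho)$ with $\hinfnorm{z^{-F}V(F)} = \norm{V(F)} \leq 1/2 < 1$, so Lemma~\ref{lem:inverse_rhinf_direct} inverts it and Lemma~\ref{lem:G1G2} multiplies the result against $\Phix(F) \in \RHinf^F(C_x,\rho)$ and $\Phiu(F) \in \RHinf^F(C_u,\rho)$ to place $\{\tf \Psi_x, \tf \Psi_u\}$ in $\RHinf$. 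The extra hypothesis $\varepsilon(C_x+C_u) \leq \const(1-\rho_\star)^2$ then keeps the $\RHinf$ constant of $\Dh = \rvectwo{\Delta_A}{\Delta_B}\cvectwo{\tf \Psi_x}{\tf \Psi_u}$ of order $\const$, while $\hinfnorm{\Dh} < 1$ holds by the constraint of \eqref{eq:adaptiveFIR}; a second use of Lemma~\ref{lem:inverse_rhinf_direct} inverts $I+\Dh$ and a second use of Lemma~\ref{lem:G1G2} multiplies by $\{\tf \Psi_x, \tf \Psi_u\}$ to produce $\{\tf{\widehat{\Phi}}_x, \tf{\widehat{\Phi}}_u\}$.

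It remains to do the arithmetic bookkeeping, for which the governing rules (implicit in the proof of Theorem~\ref{thm:lqr_feasibility_and_subopt}) are that Lemma~\ref{lem:inverse_rhinf_direct} sends a rate $r$ to $\Avg(r,1)$, while the product Lemma~\ref{lem:G1G2} sends rates $(r_1,r_2)$ to $\Avg(\max(r_1,r_2),1)$ with a constant factor $\tfrac{\const}{1-\max(r_1,r_2)}$. Starting from $\rho = 3/4 + (1/4)\rho_\star$, the two inversions and two products above apply $\Avg(\cdot,1)$ four times, giving final rate $63/64 + (1/64)\rho_\star$; since $.999 + .001\rho_\star - (63/64 + (1/64)\rho_\star) = .014625(1-\rho_\star) \geq 0$, this is dominated by $.999 + .001\rho_\star$. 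The two product steps each contribute a factor $\tfrac{\const}{1-\rho_\star}$ (using $1-\Avg(\rho,1) = \tfrac18(1-\rho_\star)$ and the analogous identity one level up), so starting from $C_x = \tfrac{\const C_\star}{1-\rho_\star}$ the constant accumulates to $\tfrac{\const C_\star}{(1-\rho_\star)^3}$, the claimed value (and to $\tfrac{\const \norm{\trueK} C_\star}{(1-\rho_\star)^3}$ for $\tf{\widehat{\Phi}}_u$). I expect the main obstacle to be precisely the first inversion, that of $I + z^{-F}V(F)$: one must argue the FIR residual genuinely behaves like a rate-$\rho$ element of small $\hinf$ norm, so that relative to the non-FIR Theorem~\ref{thm:lqr_feasibility_and_subopt}(c) it costs exactly one additional inversion-and-product, pushing the rate two further averaging steps toward one and the constant by one more factor $\tfrac{1}{1-\rho_\star}$, with the generous rounding to $.999 + .001\rho_\star$ and $\tfrac{\const C_\star}{(1-\rho_\star)^3}$ absorbing all the slack.
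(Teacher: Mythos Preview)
Your treatment of (a) and (b), and your overall structure for (c)---identify the true response as $\cvectwo{\Phix(F)}{\Phiu(F)}(I+z^{-F}V(F))^{-1}(I+\Dh)^{-1}$ and then bound it by two inversions and two products---match the paper exactly. The gap is in the very step you flag as ``the main obstacle.''

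You embed the FIR residual as $z^{-F}V(F) \in \RHinf(C_x\rho,\rho)$, which is correct, and then appeal to Lemma~\ref{lem:inverse_rhinf_direct}. But that lemma carries the hypothesis $C \leq 2$, and your constant is $C_x\rho$ with $C_x = \tfrac{\const C_\star}{1-\rho_\star}$; this is unbounded as $\rho_\star \to 1$, so the inversion lemma does not apply and the argument breaks right there. Having $\hinfnorm{z^{-F}V(F)} \leq 1/2$ is not enough on its own---the lemma needs both small $\hinf$ norm \emph{and} small $\RHinf$-constant.

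The paper's fix is to trade rate for constant: rather than using rate $\rho$, it places the single tap in $\RHinf(2,\rho^{1/2})$. Because the tap sits at position $F$ and $F \geq F_0$ is large, the slower decay $\rho^{1/2}$ absorbs the large $C_x$ into the exponent, leaving a bounded constant to which Lemma~\ref{lem:inverse_rhinf_direct} applies. This forces all downstream rates to be iterates of $\Avg(\cdot,1)$ starting from $\rho^{1/2}$ rather than from $\rho$, so the four averagings give $\tfrac{31}{32}+\tfrac{1}{32}\rho^{1/2}$ rather than your $\tfrac{63}{64}+\tfrac{1}{64}\rho_\star$; one then needs the auxiliary bound $(\tfrac14\rho_\star+\tfrac34)^{1/2} \leq .95 + .05\rho_\star$ for $\rho_\star \geq .4$ to land inside $.999+.001\rho_\star$. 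Similarly the constant picks up factors of $\tfrac{1}{1-\rho^{1/2}}$, and the paper closes with $(1-\rho^{1/2}) > (1-\rho)^2$ to convert these into powers of $\tfrac{1}{1-\rho_\star}$. The final constants agree with yours, but your route to them is not valid as stated; you need this $\rho^{1/2}$ maneuver (or a direct expansion of the single-tap Neumann series) to make the first inversion go through.
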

\begin{proof}
Claims (a) and (b) follow immediately from Lemmas \ref{lem:FIR_feasibility} and \ref{lem:FIR_subopt}.

Now for the proof of (c).
Let $\{ \tf \Phi_x(F), \tf \Phi_u(F) \}$ be the solution to \eqref{eq:adaptiveFIR}.  Then as argued in the proof of
Lemma \ref{lem:FIR_subopt}, the response achieved on the true system $(\trueA,\trueB)$ is given by
\[
\cvectwo{\Phix(F)}{\Phiu(F)}\left(I+\frac{1}{z^F}V(F)\right)^{-1}(I+\Dh)^{-1},
\]
where $\Dh$ is defined as in the proof of Lemma~\ref{lem:FIR_subopt}.

We start by noting that $\Phix(F) \in \RHinf(C_x,\rho)$, and by the assumption on $F\geq F_0$, it holds that $z^{-F}V(F) \in \RHinf(2,\rho^{1/2})$.  This allows us to apply Lemma~\ref{lem:inverse_rhinf_direct} to conclude that $(I+z^{-F}V(F))^{-1} \in \RHinf\left(\const(1-\rho^{1/2})^{-1},\mathsf{Avg}(\rho^{1/2},1)\right)$.  Thus, applying Lemma~\ref{lem:G1G2} we conclude that
\[
\Phix(F)\left(I+\frac{1}{z^F}V(F)\right)^{-1} \in \RHinf\left(\frac{\const C_x}{1-\rho^{1/2}}, \mathsf{Avg}(\mathsf{Avg}(\rho^{1/2},1),1)\right).
\]
A similar argument yields
\[
\Phiu(F)\left(I+\frac{1}{z^F}V(F)\right)^{-1} \in \RHinf\left(\frac{\const C_u}{1-\rho^{1/2}}, \mathsf{Avg}(\mathsf{Avg}(\rho^{1/2},1),1)\right).
\]

Now note that
\[
\Dh = (\Delta_A \Phix(F) + \Delta_B \Phiu(F))(I + z^{-F}V(F))^{-1}.
\]
From the previous argument, we have that
\begin{align*}
\Delta_A \Phix(F) (I + z^{-F}V(F))^{-1} &\in \RHinf\left(\varepsilon\frac{\const C_x}{1-\rho^{1/2}}, \mathsf{Avg}(\mathsf{Avg}(\rho^{1/2},1),1)\right),\\
 \Delta_B \Phiu(F) (I + z^{-F}V(F))^{-1}& \in \RHinf\left(\varepsilon\frac{\const C_u}{1-\rho^{1/2}}, \mathsf{Avg}(\mathsf{Avg}(\rho^{1/2},1),1)\right),
\end{align*}
from which it follows that
\[
\Dh \in \RHinf\left( \varepsilon \frac{\const(C_x + C_u)}{1-\rho^{1/2}}, \mathsf{Avg}(\mathsf{Avg}(\rho^{1/2},1),1)\right).
\]
By the assumptions of the Theorem, we have that $\varepsilon \frac{\const(C_x + C_u)}{1-\rho^{1/2}} \leq 2$, allowing us to apply Lemma~\ref{lem:inverse_rhinf_direct} to conclude that
\[
(I+\Dh)^{-1} \in \RHinf\left(\const, \mathsf{Avg}(\mathsf{Avg}(\mathsf{Avg}(\rho^{1/2},1),1),1)\right).
\]
Applying Lemma~\ref{lem:G1G2}, we see that
\begin{align*}
\Phix(F) (I + z^{-F}V(F))^{-1}(I+\Dh)^{-1} &\in \RHinf\left(\frac{\const C_x}{1-\rho^{1/2}},\mathsf{Avg}(\mathsf{Avg}(\mathsf{Avg}(\mathsf{Avg}(\rho^{1/2},1),1),1)1)\right) \\
\Phiu(F) (I + z^{-F}V(F))^{-1}(I+\Dh)^{-1} &\in \RHinf\left(\frac{\const C_u}{1-\rho^{1/2}},\mathsf{Avg}(\mathsf{Avg}(\mathsf{Avg}(\mathsf{Avg}(\rho^{1/2},1),1),1)1)\right)
\end{align*}

Finally, to simplify these bounds to those in the Theorem statement, notice first that for $\rho\geq .4$, we have that $(1-\rho^{1/2})>(1-\rho)^2$.  Then, we also have that
\[
\mathsf{Avg}(\mathsf{Avg}(\mathsf{Avg}(\mathsf{Avg}(\rho^{1/2},1),1),1)1) = \frac{31}{32} + \frac{1}{32}\rho^{1/2} =  \frac{31}{32} + \frac{1}{32}(\frac{1}{4}\rho_\star + \frac{3}{4})^{1/2}.
\]
Finally, one can check that for $\rho_\star \geq .4$, we have that $(\frac{1}{4}\rho_\star + \frac{3}{4})^{1/2} \leq .95 + .05 \rho_\star$, leading to the bound
\[
\frac{31}{32} + \frac{1}{32}(\frac{1}{4}\rho_\star + \frac{3}{4})^{1/2} \leq \frac{31.95}{32} + \frac{.05}{32}\rho_\star \leq .999 + .001\rho_\star.
\]
We note that these constants are by no means optimized.
\end{proof}

\section{Estimation}

Recall that Algorithm~\ref{alg:adaptive} proceeds in epochs and that we denote by $x_t^{(i)}$ and $u_{t}^{(i)}$ the state
and input at time $t$ during epoch $i$, respectively. The $i$-th epoch has length $T_i$. Note that $x_{T_i}^{(i)}$, the last state of epoch $i$,
is equal to $x_{0}^{(i + 1)}$, the first state of epoch $i + 1$.

At the end of each epoch our method estimates the parameters $(\trueA, \trueB)$ from the trajectory observed during that epoch, i.e.
\begin{align}
\label{eq:app_ols}
(\Ah, \Bh) \in \arg\min_{A, B} \sum_{t=0}^{T_i-1} \frac{1}{2}\ltwonorm{x_{t+1}^{(i)} - A x_t^{(i)} - B u_t^{(i)}}^2.
\end{align}

The goal of this section is to offer high probability confidence bounds on the estimation error of \eqref{eq:app_ols}.
For the rest of the section we suppress the dependence on the epoch index $i$ because we prove a statistical rate
for a fixed epoch.

Algorithm~\ref{alg:adaptive} generates the inputs $u_t$ using a feedback
controller $\tf K$ which stabilizes the true system $(\trueA, \trueB)$. Let $\{
  \tf \Phi_x, \tf \Phi_u \}$ denote the response of $\tf K$ on the true system
$(\trueA, \trueB)$, and suppose that $\tf \Phi_x \in \frac{1}{z}\RHinf(C_x,
\rho)$ and $\tf \Phi_u \in \frac{1}{z}\RHinf(C_u, \rho)$.
More precisely, if $w_t \iid \calN(0, \sigma_w^2 I_\inputdim)$ is the process
noise at time $t$ and $\eta_t \iid \calN(0, \sigma_\eta^2 I_\inputdim)$ is the input noise added at time $t$,
then we can write
\begin{align}
\label{eq:states}
x_t &= \Phi_x (t + 1) x_0 + \sum_{k = 0}^{t - 1} \Phi_x(t - k) (\trueB \eta_{k} + w_{k})\\
\label{eq:inputs}
u_t &= \eta_t + \Phi_u (t + 1) x_0 + \sum_{k = 0}^{t - 1} \Phi_u(t - k) (\trueB \eta_{k} + w_{k}).
\end{align}

For the statistical analysis it is useful to consider the stochastic process  $z_t = [x_t^\T, u_t^\T]^\T$.
Also, we denote the filtration $\calF_t = \sigma(x_0, \eta_0, w_0 \ldots, \eta_{t - 1}, w_{t - 1}, \eta_t)$. It is clear that the process $\{z_t\}_{t \geq 0}$ is $\{\calF_t\}_{t \geq 0}$-adapted.
Throughout this section we assume that $C_u, C_x \geq 1$ and denote $C_K^2 := \statedim C_x^2 + \inputdim C_u^2$.

\subsection{Estimation after one epoch}

Throughout this section we assume that $\sigma_\eta \leq \sigma_w$. This condition is not needed for achieving the necessary statistical
rate of estimation of $(A, B)$, but it aids in simplifying several algebraic quantities.

\begin{proposition}
\label{prop:one_epoch_estimate}
Let $x_0 \in \R^{\statedim}$ be any initial state, let $\sigma_\eta \leq \sigma_w$, and assume that a trajectory $\{(x_t, u_t)\}_{t = 0}^{T - 1}$ is observed.
Furthermore, suppose the inputs $u_t \in \R^\inputdim$ are generated
by a feedback controller $\tf K$ which stabilizes and achieves a
response $\{ \tf \Phi_x, \tf \Phi_u \}$ on $(\trueA, \trueB)$
  with $\tf \Phi_x \in \frac{1}{z}\RHinf(C_x, \rho)$ and $\tf \Phi_u \in \frac{1}{z}\RHinf(C_u, \rho)$.  Then, the error of the OLS estimator $(\Ah, \Bh)$ from Eq.~\ref{eq:app_ols} satisfies with probability $1 - \delta$ the guarantee
\begin{align*}
\max\left\{ \substack{\norm{\Ah - \trueA},\\ \norm{\Bh - \trueB}}\right\} \lesssim \frac{\sigma_w C_u}{\sigma_\eta} \sqrt{\frac{(\statedim + \inputdim)}{T} \log\left(1 + \frac{\inputdim C_u}{\delta} + \frac{\sigma_w}{\sigma_\eta}\frac{\rho C_u C_K}{\delta(1 - \rho^2)} \left( 1 + \norm{\trueB}  + \frac{\ltwonorm{x_0} }{\sigma_w\sqrt{T}} \right) \right)},
\end{align*}
as long as
\begin{align}
    T \gtrsim (\statedim + \inputdim) \log\left(1 + \frac{\inputdim C_u^2}{\delta} + \frac{\sigma_w^2}{\sigma_\eta^2}\frac{\rho^2 C_u^2 C_K^2}{\delta(1 - \rho^2)}\left(1 + \norm{\trueB}^2 + \frac{\ltwonorm{x_0}^2}{\sigma_w^2 T}\right)\right). \label{eq:est_T_cond}
\end{align}
\end{proposition}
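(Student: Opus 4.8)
The plan is to read \eqref{eq:app_ols} as a single vector least-squares regression of $x_{t+1}$ onto the stacked covariate $z_t = [x_t^\T, u_t^\T]^\T$. Writing $\Theta_\star = \rvectwo{\trueA}{\trueB}$, $\hat\Theta = \rvectwo{\Ah}{\Bh}$, and $G = \sum_t z_t z_t^\T$, the normal equations give $\hat\Theta - \Theta_\star = \big(\sum_t w_t z_t^\T\big)G^{-1}$, so that
\[
\max\{\norm{\Ah - \trueA}, \norm{\Bh - \trueB}\} \leq \norm{\hat\Theta - \Theta_\star} \leq \bignorm{\big(\sum_t w_t z_t^\T\big)G^{-1/2}}\,\lambda_{\min}(G)^{-1/2}.
\]
This reduces the claim to two estimates: an upper bound on the self-normalized martingale term $\big(\sum_t w_t z_t^\T\big)G^{-1/2}$, and a lower bound on $\lambda_{\min}(G)$. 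Since $w_t$ is independent of $\calF_t$ while $z_t$ is $\calF_t$-measurable, the increments $w_t z_t^\T$ form a matrix martingale difference sequence, so the self-normalized tail inequality of Abbasi-Yadkori et al.\ (applied with a ridge $\lambda I$) bounds the first factor by $\sigma_w\sqrt{(\statedim + \inputdim)\log(\det(\lambda I + G)/(\delta\det(\lambda I)))}$ up to constants; the $\log\det$ term is turned into the explicit logarithm of the statement by controlling $\mathrm{tr}(G) = \sum_t \norm{z_t}^2$ on a high-probability event.

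The crux is the lower bound on $\lambda_{\min}(G)$, which I would obtain from the block-martingale small-ball (BMSB) framework of \cite{simchowitz18}. The decisive computation is the one-step conditional covariance of $z_t$. Conditioned on $\calF_{t-1}$, the only fresh randomness entering $z_t$ through \eqref{eq:states}–\eqref{eq:inputs} is $w_{t-1}$ (the $k=t-1$ term of $x_t$, since $\Phi_x(1)=I$) and $\eta_t$, while $\eta_{t-1}$ is already measurable; hence $x_t = (\text{pred.}) + w_{t-1}$ and $u_t = (\text{pred.}) + \Phi_u(1)w_{t-1} + \eta_t$, giving
\[
\mathrm{Cov}(z_t \mid \calF_{t-1}) = L\begin{bmatrix}\sigma_w^2 I & 0 \\ 0 & \sigma_\eta^2 I\end{bmatrix}L^\T \succeq \sigma_\eta^2\,\sigma_{\min}(L)^2 I, \qquad L := \begin{bmatrix} I & 0 \\ \Phi_u(1) & I\end{bmatrix},
\]
using $\sigma_\eta \leq \sigma_w$. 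Since $\norm{\Phi_u(1)} \leq C_u\rho$ we have $\sigma_{\min}(L)^{-1} \lesssim C_u$, and because the fresh noise is Gaussian, anti-concentration supplies a constant small-ball probability $p$ uniformly over directions. Thus $\{z_t\}$ satisfies the BMSB condition with $\nu^2 \gtrsim \sigma_\eta^2/C_u^2$, and a net over the sphere upgrades this to $\lambda_{\min}(G) \gtrsim T\sigma_\eta^2/C_u^2$ on a high-probability event.

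To close the net argument one needs a matching upper bound $G \preceq T\bar\Gamma$ holding with probability at least $1-\delta$. I would read $\bar\Gamma$ off the expansions \eqref{eq:states}–\eqref{eq:inputs}: as $\tf\Phi_x \in \tfrac{1}{z}\RHinf(C_x,\rho)$ and $\tf\Phi_u \in \tfrac{1}{z}\RHinf(C_u,\rho)$, each $z_t$ is a Gaussian linear image of $x_0, \{\eta_k\}, \{w_k\}$ whose covariance is bounded by geometric sums $\sum_k\rho^{2k} \lesssim (1-\rho^2)^{-1}$, weighted by $\sigma_w^2 + \norm{\trueB}^2\sigma_\eta^2$ and perturbed by the $x_0$-transient $\norm{x_0}^2/(\sigma_w^2 T)$; a Hanson–Wright bound then concentrates $G$ in operator norm. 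This is exactly what produces the combination $C_K^2 = \statedim C_x^2 + \inputdim C_u^2$, the factor $\rho/(1-\rho^2)$, and the term $(1 + \norm{\trueB} + \norm{x_0}/(\sigma_w\sqrt{T}))$ appearing inside the logarithm. Feeding $\bar\Gamma$ and $(\nu, p)$ into the end-to-end estimation theorem of \cite{simchowitz18} and forming the ratio of the $\sigma_w\sqrt{(\statedim+\inputdim)\log(\cdots)}$ numerator to the $\sqrt{T\sigma_\eta^2/C_u^2}$ denominator yields the claimed rate with prefactor $\sigma_w C_u/\sigma_\eta$, while the sample-size requirement \eqref{eq:est_T_cond} is precisely the burn-in needed for the BMSB lower bound to dominate the net overhead. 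The main obstacle is this uniform-over-directions minimum-eigenvalue bound: the conditional covariance is nondegenerate only because of the injected exploration noise $\eta_t$, so carrying the $\sigma_\eta^{-1}$ (equivalently $\sigma_w/\sigma_\eta$) dependence cleanly through the BMSB net is the delicate part, whereas the self-normalized and Hanson–Wright steps are comparatively standard.
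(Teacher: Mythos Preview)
Your approach is essentially the paper's: verify the $(1,\nu,p)$-BMSB condition via the one-step conditional covariance of $z_t$ (your $LDL^\T$ factorization is equivalent to the paper's Schur-complement lemma and yields the same $\nu \asymp \sigma_\eta/C_u$), bound $\sum_t \Tr(\E z_t z_t^\T)$ by the geometric sums in $\rho$, and feed both into the end-to-end theorem of \cite{simchowitz18}. The only superfluous ingredient is Hanson--Wright for the upper bound on $G$: the version of the Simchowitz theorem invoked here needs only the \emph{expected} trace (handled by Markov), so no high-probability concentration of $G$ is required.
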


The proof of this result follows from a result by \citet{simchowitz18} on the estimation of linear response time-series. We present that result in the context of our problem. Let $M_\star = [\trueA, \trueB]$, and recall that $z_t = [x_t^\T, y_t^\T]^\T$. Then, the OLS estimator \eqref{eq:app_ols} can be written in the form
\begin{align}
\label{eq:ols_M}
\widehat{M} \in \arg \min_{M} \sum_{t = 0}^{T - 1} \frac{1}{2}\ltwonorm{x_{t + 1} - Mz_t}^2.
\end{align}

The process $\{z_t\}_{t \geq 0}$ is said to satisfy the $(k, \nu, \beta)$-\emph{block martingale small-ball} (BMSB) condition if for any $j \geq 0$ and $v \in \R^{\statedim + \inputdim}$, one has that
\begin{align*}
\frac{1}{k}\sum_{i = 1}^k \Pr\left( |\langle v, z_{j + i}\rangle| \geq \nu \right) \geq \beta \: \text{ almost surely.}
\end{align*}
This condition is used for characterizing the size of the minimum eigenvalue of the covariance matrix $\sum_{t = 0}^{T - 1} z_t z_t^\top$.
A larger $\nu$ guarantees a larger lower bound of the minimum eigenvalue. In the context of our problem the result by \citet{simchowitz18} translates as follows.

\begin{theorem}[\citet{simchowitz18}]
\label{thm:lwm}
Fix $\epsilon, \delta \in (0,1)$. For every $T$, $k$, $\nu$, and $\beta$ such that $\{z_t\}_{t \geq 0}$ satisfies the $(k, \nu, \beta)$-BMSB and
\begin{align*}
\left\lfloor \frac{T}{k} \right\rfloor \gtrsim \frac{\statedim + \inputdim}{\beta^2} \log \left(1 + \frac{\sum_{t = 0}^{T - 1} \Tr (\E z_t z_t^\T)}{k \lfloor T / k \rfloor \beta^2 \nu^2 \delta} \right),
\end{align*}
the estimate $\widehat{M}$ defined in Eq.~\ref{eq:ols_M} satisfies the following statistical rate
\begin{align*}
\Pr \left(\ltwonorm{\widehat{M} - M} > \frac{\const \sigma_w}{\beta \nu} \sqrt{\frac{\statedim + \inputdim}{k \lfloor T / k \rfloor} \log \left(1 + \frac{\sum_{t = 0}^{T - 1} \Tr (\E z_t z_t^\T)}{k \lfloor T / k \rfloor \beta^2 \nu^2 \delta} \right) }\right) \leq \delta.
\end{align*}
\end{theorem}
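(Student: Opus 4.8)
The plan is to start from the closed form of the least-squares residual. Since $x_{t+1} = M_\star z_t + w_t$ with $w_t$ a conditionally $\sigma_w$-sub-Gaussian martingale difference adapted to $\calF_{t+1}$ and $z_t$ being $\calF_t$-measurable, the minimizer of \eqref{eq:ols_M} obeys $\widehat{M} - M_\star = S_T V_T^{-1}$, where $V_T = \sum_{t=0}^{T-1} z_t z_t^\T$ and $S_T = \sum_{t=0}^{T-1} w_t z_t^\T$. Submultiplicativity then gives the ``self-normalized'' bound
\[
\ltwonorm{\widehat{M} - M_\star} \leq \lambda_{\min}(V_T)^{-1/2}\,\ltwonorm{V_T^{-1/2} S_T^\T} \:.
\]
The proof splits into a high-probability upper bound on the numerator $\ltwonorm{V_T^{-1/2}S_T^\T}$ (the noise--regressor correlation) and a high-probability lower bound on the denominator $\lambda_{\min}(V_T)$ (the excitation of the regressors), the BMSB hypothesis supplying the latter. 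The two pieces are combined on a single event.

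For the numerator I would invoke a matrix self-normalized martingale tail inequality. Regularizing by $\bar V_T = \lambda I + V_T$, the method of mixtures (pseudo-maximization), applied to each of the $\statedim$ output coordinates together with a $1/2$-net on the sphere in $\R^\statedim$, yields that with probability at least $1 - \delta/2$, uniformly in the random $\bar V_T$, $\ltwonorm{\bar V_T^{-1/2} S_T^\T} \lesssim \sigma_w \sqrt{(\statedim+\inputdim)\log(\det(\bar V_T)/(\lambda^{\statedim+\inputdim}\delta))}$. Because $z_t$ is predictable and $w_t$ a sub-Gaussian increment, this holds with no mixing assumption. To convert the $\log\det$ factor into the explicit trace term of the statement, apply AM--GM, $\det(\bar V_T) \leq (\Tr(\bar V_T)/(\statedim+\inputdim))^{\statedim+\inputdim}$, and bound $\Tr(V_T) = \sum_t \ltwonorm{z_t}^2$ by Markov's inequality, so that with probability $1-\delta/2$ one has $\Tr(V_T) \leq \delta^{-1}\sum_t \Tr(\E z_t z_t^\T)$; choosing the regularizer $\lambda$ at the scale of the eigenvalue lower bound below makes the resulting logarithm equal to $\log(1 + \frac{\sum_t \Tr(\E z_t z_t^\T)}{k\lfloor T/k\rfloor\beta^2\nu^2\delta})$.

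The core of the argument is the denominator. Fix a unit vector $v$ and split $\{0,\dots,T-1\}$ into $L = \lfloor T/k\rfloor$ consecutive blocks of length $k$. Let $N_\ell$ count the indices in block $\ell$ with $|\langle v, z_t\rangle| \geq \nu$; the $(k,\nu,\beta)$-BMSB condition gives $\E[N_\ell \mid \calF_{(\ell-1)k}] \geq k\beta$, and a reverse-Markov inequality applied to $k - N_\ell$ yields $\Pr[N_\ell \geq k\beta/2 \mid \calF_{(\ell-1)k}] \geq \beta/2$. On this block event the block energy $Y_\ell = \sum_i \langle v, z_{(\ell-1)k+i}\rangle^2 \geq \nu^2 N_\ell \geq \tfrac{k\beta\nu^2}{2}$, so $Y_\ell \geq \tfrac{k\beta\nu^2}{2} W_\ell$ with $W_\ell := \ind[N_\ell \geq k\beta/2]$ a conditionally Bernoulli$(\geq\beta/2)$ indicator. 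A Chernoff bound across the $L$ blocks gives $\sum_\ell W_\ell \gtrsim \beta L$ with failure probability exponentially small in $\beta L$, hence $\sum_t \langle v, z_t\rangle^2 = \sum_\ell Y_\ell \gtrsim \beta^2\nu^2 k L$; the two factors of $\beta$ (one from the within-block count, one from the across-block frequency) produce the $\beta^2$ scaling. A $1/4$-net of the sphere in $\R^{\statedim+\inputdim}$ and a union bound promote this to a uniform $\lambda_{\min}(V_T) \gtrsim \beta^2\nu^2 k\lfloor T/k\rfloor$ on an event of probability $1-\delta/2$, the net cardinality $9^{\statedim+\inputdim}$ being absorbed precisely under the stated sample-size hypothesis $\lfloor T/k\rfloor \gtrsim \tfrac{\statedim+\inputdim}{\beta^2}\log(\cdots)$. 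Intersecting with the numerator events, noting $\bar V_T \preceq 2 V_T$ once $\lambda_{\min}(V_T) \geq \lambda$, and substituting the eigenvalue bound into the denominator gives $\ltwonorm{\widehat{M}-M_\star} \lesssim \frac{\sigma_w}{\beta\nu}\sqrt{\frac{\statedim+\inputdim}{k\lfloor T/k\rfloor}\log(\cdots)}$, as claimed.

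The main obstacle is this last small-ball step. Because $\langle v, z_t\rangle$ has heavy (only sub-exponentially controlled) tails, and because BMSB constrains only the block-averaged conditional probability rather than the individual within-block conditional laws, one cannot concentrate $\sum_t\langle v, z_t\rangle^2$ around its mean $\asymp \beta\nu^2 T$; the robust guarantee must degrade by a factor $\beta$ to $\beta^2\nu^2 T$. Obtaining this sharp $\beta^2$ scaling (rather than the lossy bound a naive one-indicator-per-block collapse produces, which also drops the factor $k$) while keeping the block-level exponent large enough to survive the $9^{\statedim+\inputdim}$-term union bound is the delicate part. A secondary subtlety is that the numerator bound is data-adaptive in the random $\bar V_T$, so the $\log\det$ control of Step~2 and the $\lambda_{\min}$ lower bound of Step~3 must be arranged to hold simultaneously on one event before they can be composed.
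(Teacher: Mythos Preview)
The paper does not prove this theorem; it is quoted from \citet{simchowitz18} and used as a black box (the surrounding text immediately turns to verifying the BMSB hypothesis and bounding $\sum_t \Tr(\E z_t z_t^\T)$ so as to \emph{apply} it). Your sketch is essentially the argument of that reference: the closed-form residual decomposition, the self-normalized martingale bound on $\ltwonorm{\bar V_T^{-1/2}S_T^\T}$ for the numerator, and the block small-ball argument (BMSB $\Rightarrow$ block counts $\Rightarrow$ reverse Markov $\Rightarrow$ Chernoff across blocks) for the denominator are exactly their approach, and your explanation of why the scaling is $\beta^2\nu^2$ rather than $\beta\nu^2$ is correct.

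One point you underplay: the $1/4$-net promotion of the pointwise bound $\sum_t \langle v, z_t\rangle^2 \gtrsim \beta^2\nu^2 k\lfloor T/k\rfloor$ to a $\lambda_{\min}(V_T)$ bound is not free, because the Lipschitz constant of $v \mapsto v^\T V_T v$ on the sphere is $\norm{V_T}$, which can be enormously larger than the target lower bound. You do have the needed upper bound---the same Markov/trace inequality $\Tr(V_T) \leq \delta^{-1}\sum_t \Tr(\E z_t z_t^\T)$ you invoke for the $\log\det$ step---but the net resolution must then be set at the scale of the \emph{ratio} of the lower bound to this upper bound, and the resulting covering number is what forces the same $\log\bigl(1 + \frac{\sum_t \Tr(\E z_t z_t^\T)}{k\lfloor T/k\rfloor\beta^2\nu^2\delta}\bigr)$ factor into the sample-size condition on $\lfloor T/k\rfloor$. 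So the two appearances of that logarithm are not independent bookkeeping artifacts; the denominator step needs the upper bound just as much as the numerator step does, and you should make that coupling explicit.
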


Therefore, in order to apply this result we need to find $k$, $\nu$, and $\beta$ such that $\{z_t\}_{t \geq 0}$ satisfies the
$(k, \nu, \beta)$-BMSB condition, and we also have to upper bound the trace of the covariance of $z_t$. The next two lemmas address these two issues.

\begin{lemma}
\label{lem:bmsb}
Let $x_0$ be any initial state in $\R^\statedim$ and let $\{u_t\}_{t \geq 0}$ be the sequence of inputs
generated according to \eqref{eq:inputs}, and assume $\sigma_\eta \leq \sigma_w$. Then, the process $z_t = [x_t^\T, u_t^\T]^\T$
satisfies the

\begin{align*}
\left(1, \frac{\sigma_\eta}{ 2C_u} , \frac{3}{20}\right)\text{ BMSB condition.}
\end{align*}
\end{lemma}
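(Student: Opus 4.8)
The plan is to use the $k=1$ case of the BMSB definition, which (conditioning on $\calF_j$, as the ``almost surely'' qualifier requires) amounts to verifying that for every $j \ge 0$ and every unit vector $v = [v_x^\T\ v_u^\T]^\T \in \R^{\statedim + \inputdim}$,
\[
\Pr\left( |\langle v, z_{j+1}\rangle| \ge \tfrac{\sigma_\eta}{2C_u} \,\big|\, \calF_j \right) \ge \tfrac{3}{20} \quad \text{a.s.}
\]
First I would isolate the randomness in $z_{j+1}$ that is fresh relative to $\calF_j = \sigma(x_0, \eta_0, w_0, \ldots, \eta_{j-1}, w_{j-1}, \eta_j)$. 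Reading \eqref{eq:states} and \eqref{eq:inputs} at $t = j+1$, every summand is $\calF_j$-measurable except those carrying $w_j$ and $\eta_{j+1}$. Since $\Phi_x(1) = I$, the $k=j$ term contributes $w_j$ to $x_{j+1}$ and $\Phi_u(1) w_j$ to $u_{j+1}$, while $\eta_{j+1}$ enters $u_{j+1}$ directly; hence $x_{j+1} = \mu_x + w_j$ and $u_{j+1} = \mu_u + \Phi_u(1) w_j + \eta_{j+1}$ with $\mu_x, \mu_u$ being $\calF_j$-measurable. Therefore
\[
\langle v, z_{j+1}\rangle = c + (v_x + \Phi_u(1)^\T v_u)^\T w_j + v_u^\T \eta_{j+1},
\]
where $c = v_x^\T \mu_x + v_u^\T \mu_u$ is $\calF_j$-measurable and $w_j \sim \calN(0, \sigma_w^2 I)$, $\eta_{j+1} \sim \calN(0, \sigma_\eta^2 I)$ are independent of $\calF_j$ and of each other. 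Conditioned on $\calF_j$, the quantity $\langle v, z_{j+1}\rangle$ is thus Gaussian with $\calF_j$-measurable mean $c$ and variance $s^2 = \sigma_w^2\norm{v_x + \Phi_u(1)^\T v_u}^2 + \sigma_\eta^2 \norm{v_u}^2$.

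The hard part is a uniform-in-$v$ lower bound $s \ge \sigma_\eta/(2C_u)$. Using $\sigma_\eta \le \sigma_w$, I would drop to $s^2 \ge \sigma_\eta^2\bigl(\norm{v_x + \Phi_u(1)^\T v_u}^2 + \norm{v_u}^2\bigr) = \sigma_\eta^2 \norm{Tv}^2$, where $T$ is the invertible map $(v_x, v_u) \mapsto (v_x + \Phi_u(1)^\T v_u,\, v_u)$ with inverse $(a,b) \mapsto (a - \Phi_u(1)^\T b,\, b)$. A crude estimate gives $\norm{T^{-1}(a,b)}^2 = \norm{a - \Phi_u(1)^\T b}^2 + \norm{b}^2 \le 2(\norm{\Phi_u(1)}^2 + 1)(\norm{a}^2 + \norm{b}^2)$, so $\norm{T^{-1}}^2 \le 2(\norm{\Phi_u(1)}^2 + 1)$. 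Since $\Phi_u(1)$ is the first Markov parameter of $\tf\Phi_u \in \frac{1}{z}\RHinf(C_u, \rho)$ we have $\norm{\Phi_u(1)} \le C_u$, and combined with $C_u \ge 1$ this yields $\norm{T^{-1}}^2 \le 4 C_u^2$, i.e. $\sigma_{\min}(T) \ge 1/(2C_u)$. Hence $s^2 \ge \sigma_\eta^2 \sigma_{\min}(T)^2 = \sigma_\eta^2/(4C_u^2) = \nu^2$ for the claimed $\nu$. The conceptual point here is uniformity: in the adversarial direction $v_x = -\Phi_u(1)^\T v_u$ that annihilates the $w_j$ term, the bound $\norm{\Phi_u(1)} \le C_u$ prevents $\norm{v_u}$ from being small, so the injected-noise term $\sigma_\eta^2\norm{v_u}^2$ keeps the variance from collapsing; this is exactly where the exploration noise $\eta$ does its work.

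Finally I would invoke Gaussian anti-concentration. Because the Gaussian measure of the symmetric interval $(-\nu, \nu)$ is maximized when its center is $0$, we have $\Pr(|\langle v, z_{j+1}\rangle| \ge \nu \mid \calF_j) \ge \Pr(|g| \ge \nu)$ for $g \sim \calN(0, s^2)$, and since $s \ge \nu$ this is at least $\Pr(|\calN(0,1)| \ge 1) \approx 0.317 \ge 3/20$. As this holds for every unit $v$ and every $j \ge 0$, the $\bigl(1, \tfrac{\sigma_\eta}{2C_u}, \tfrac{3}{20}\bigr)$-BMSB condition follows. I expect the variance lower bound of the second paragraph — controlling $\sigma_{\min}(T)$ uniformly over $v$ and pinning it to $C_u$ via $\norm{\Phi_u(1)} \le C_u$ — to be the only genuine obstacle; the conditional-Gaussianity bookkeeping and the anti-concentration step are routine, and the stated constant $3/20$ is comfortably loose (one could even use a general small-ball lemma in place of the exact Gaussian bound).
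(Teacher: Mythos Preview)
Your proof is correct and follows essentially the same route as the paper: show that $z_{j+1}\mid\calF_j$ is Gaussian with variance $v^\T\Sigma_z v=\sigma_w^2\norm{v_x+\Phi_u(1)^\T v_u}^2+\sigma_\eta^2\norm{v_u}^2$, lower bound this uniformly over unit $v$ using $\norm{\Phi_u(1)}\le C_u$, and finish with Gaussian anti-concentration. The only cosmetic difference is in the variance lower bound: the paper obtains $\lambda_{\min}(\Sigma_z)\ge \sigma_\eta^2/(4C_u^2)$ via a Schur-complement lemma (Lemma~\ref{lem:lambda_min_block_matrix}), whereas you first drop $\sigma_w$ to $\sigma_\eta$ and then bound $\sigma_{\min}(T)$ directly through $\norm{T^{-1}}$.
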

\begin{proof}
For all $t \geq 1$, denote
\begin{align*}
\xi_t &= u_t - \eta_t - \Phi_u(1) w_{t - 1} \\
&=  \Phi_u(t + 1) x_0 + \sum_{k = 0}^{t - 2} \Phi_u(t - k) (\trueB \eta_{k}+ w_{k}) + \Phi_u(1) \trueB \eta_{t - 1}.
\end{align*}

Therefore, we have
\begin{align*}
\begin{bmatrix}
x_{t + 1}\\
u_{t + 1}
\end{bmatrix} = \begin{bmatrix}
\trueA x_{t} + \trueB u_{t} \\
\xi_{t + 1}
\end{bmatrix} +
\begin{bmatrix}
I_\statedim & 0 \\
\Phi_u(1) & I_\inputdim
\end{bmatrix}\begin{bmatrix}
w_t \\
\eta_{t + 1}
\end{bmatrix},
\end{align*}
and hence
\begin{align*}
\begin{bmatrix}
x_{t + 1}\\
u_{t + 1}
\end{bmatrix} \left| \calF_t \right. \sim
\calN\left(
\begin{bmatrix}
\trueA x_{t} + \trueB u_{t} \\
\xi_{t + 1}
\end{bmatrix},
\begin{bmatrix}
\sigma_w^2 I_\statedim & \sigma_w^2 \Phi_u(1)^\top \\
\sigma_w^2 \Phi_u(1) & \sigma_w^2 \Phi_u(1)\Phi_u(1)^\top  + \sigma_\eta^2 I_\inputdim
\end{bmatrix}
\right).
\end{align*}
Denote by $\mu_{z, t}$ and $\Sigma_z$ the mean and covariance of this multivariate normal distribution. Recall that we denote $z_t = [x_t^\T, u_t^\T]^\T$.
Let $v \in \R^{\statedim + \inputdim}$ and then $\langle v, z_t \rangle \sim \calN(\langle v, \mu_{z, t} \rangle, v^\top \Sigma_z v)$. Therefore,
\begin{align*}
\Pr \left(|\langle v, z_t \rangle | \geq \sqrt{\lambda_{\min}(\Sigma_z)}\right) &\geq \Pr\left(|\langle v, z_t \rangle | \geq \sqrt{ v^\top \Sigma_z v}\right)\\
&\geq \Pr\left(|\langle v, z_t - \mu_{z, t} \rangle | \geq \sqrt{ v^\top \Sigma_z v}\right) \geq 3/10,
\end{align*}
where the last two inequalities follow because for any $\mu, \sigma^2 \in \R$ and $\omega \sim \calN(0, \sigma^2)$ we have
\begin{align*}
\Pr(|\mu + \omega| \geq \sigma) \geq \Pr(|\omega| \geq \sigma) \geq 3/10.
\end{align*}

Since $\Phi_u \in \frac{1}{z}\RHinf(C_u, \rho)$ we have $\norm{\Phi_u(1)} \leq C_u$.
Then, by a simple argument based on a Schur complement (detailed in Lemma~\ref{lem:lambda_min_block_matrix}) it follows that
\begin{align*}
\lambda_{\min}(\Sigma_z) \geq \sigma_\eta^2 \min\left(\frac{1}{2}, \frac{\sigma_w^2}{2 \sigma_w^2 C_u^2  + \sigma_\eta^2} \right) \:.
\end{align*}
The conclusion follows since $C_u \geq 1$.
\end{proof}

\begin{lemma}
\label{lem:bound_covariance}
Let $\sigma_\eta \leq \sigma_w$. Then, the process $z_t = [x_t^\T, u_t^\T]^\T$ satisfies
\begin{align*}
\sum_{t = 0}^{T - 1} \Tr \left( \E z_t z_t^\T\right) \leq \sigma_\eta^2 \inputdim  T + \sigma_w^2 \frac{\rho^2 C_K^2 T}{(1 - \rho^2)}\left(1 + \norm{\trueB}^2 + \frac{\ltwonorm{x_0}^2}{\sigma_w^2 T}\right).
\end{align*}
\end{lemma}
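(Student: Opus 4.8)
The plan is to expand $\Tr(\E z_t z_t^\T) = \E\ltwonorm{x_t}^2 + \E\ltwonorm{u_t}^2$ using the explicit expressions \eqref{eq:states} and \eqref{eq:inputs}, exploit independence to eliminate all cross terms, bound the surviving traces, and then sum a geometric series. Since $x_0$ is deterministic and the noise variables $\{(\eta_k, w_k)\}_k$ are jointly independent and zero mean, in \eqref{eq:states} the cross term between the homogeneous term $\Phi_x(t+1)x_0$ and the (zero-mean) convolution sum vanishes, and the convolution contributions for distinct indices $k$ are mutually uncorrelated; the same holds for $u_t$, with the additional independent term $\eta_t$. This yields the variance decompositions
\begin{align*}
\E\ltwonorm{x_t}^2 &= \ltwonorm{\Phi_x(t+1)x_0}^2 + \sum_{k=0}^{t-1}\Tr\left(\Phi_x(t-k)\,\Sigma\,\Phi_x(t-k)^\T\right), \\
\E\ltwonorm{u_t}^2 &= \sigma_\eta^2 \inputdim + \ltwonorm{\Phi_u(t+1)x_0}^2 + \sum_{k=0}^{t-1}\Tr\left(\Phi_u(t-k)\,\Sigma\,\Phi_u(t-k)^\T\right),
\end{align*}
where $\Sigma := \sigma_\eta^2 \trueB\trueB^\T + \sigma_w^2 I$ is the per-step covariance of $\trueB\eta_k + w_k$, and $\E\ltwonorm{\eta_t}^2 = \sigma_\eta^2 \inputdim$ produces the leading $\sigma_\eta^2 \inputdim T$ term after summing over $t$.

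The second step bounds each trace. Using $\sigma_\eta \leq \sigma_w$ we have $\Sigma \preceq \sigma_w^2(1 + \norm{\trueB}^2) I$, so $\Tr(\Phi\,\Sigma\,\Phi^\T) \leq \sigma_w^2(1+\norm{\trueB}^2)\norm{\Phi}_F^2$. Since $\Phi_x(j) \in \R^{\statedim\times\statedim}$ and $\Phi_u(j) \in \R^{\inputdim\times\statedim}$ obey $\norm{\Phi_x(j)} \leq C_x\rho^j$ and $\norm{\Phi_u(j)} \leq C_u\rho^j$, the rank bound $\norm{M}_F^2 \leq \mathrm{rank}(M)\norm{M}^2$ gives $\norm{\Phi_x(j)}_F^2 \leq \statedim C_x^2\rho^{2j}$ and $\norm{\Phi_u(j)}_F^2 \leq \inputdim C_u^2\rho^{2j}$, hence $\norm{\Phi_x(j)}_F^2 + \norm{\Phi_u(j)}_F^2 \leq C_K^2\rho^{2j}$ with $C_K^2 = \statedim C_x^2 + \inputdim C_u^2$.

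Finally, I would sum. For the convolution contributions, substituting $j = t-k$ and using $\sum_{j=1}^t \rho^{2j} \leq \rho^2/(1-\rho^2)$ followed by a sum over $t = 0, \ldots, T-1$ gives $\sigma_w^2(1+\norm{\trueB}^2) C_K^2 T \rho^2/(1-\rho^2)$. For the homogeneous terms, $\norm{\Phi_x(t+1)}^2 + \norm{\Phi_u(t+1)}^2 \leq (C_x^2 + C_u^2)\rho^{2(t+1)} \leq C_K^2\rho^{2(t+1)}$ summed over $t$ yields $C_K^2\norm{x_0}^2 \rho^2/(1-\rho^2)$, which is precisely the $\frac{\norm{x_0}^2}{\sigma_w^2 T}$ term once the common factor $\sigma_w^2 \rho^2 C_K^2 T/(1-\rho^2)$ is pulled out. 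Collecting the three contributions reproduces the stated bound.

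There is no genuine obstacle here; the computation is routine. The only points demanding care are (i) verifying that every cross term vanishes, which uses that $x_0$ is deterministic and that the current-time input noise $\eta_t$ appearing in $u_t$ is independent of the past noise entering the convolution sum, and (ii) keeping the dimension factors straight so that the Frobenius-norm bounds assemble into exactly $C_K^2 = \statedim C_x^2 + \inputdim C_u^2$ rather than a looser constant.
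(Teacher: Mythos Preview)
Your proposal is correct and follows essentially the same route as the paper: expand $\E z_t z_t^\T$ via \eqref{eq:states}--\eqref{eq:inputs}, use independence to kill cross terms, bound the surviving traces with the decay estimates $\norm{\Phi_x(j)}\leq C_x\rho^j$, $\norm{\Phi_u(j)}\leq C_u\rho^j$ together with the rank-based Frobenius bound to produce the factor $C_K^2 = \statedim C_x^2+\inputdim C_u^2$, and sum the geometric series. The only cosmetic difference is that the paper keeps $(\sigma_w^2+\sigma_\eta^2\norm{\trueB}^2)$ until the very end before invoking $\sigma_\eta\leq\sigma_w$, whereas you use $\Sigma\preceq\sigma_w^2(1+\norm{\trueB}^2)I$ upfront.
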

\begin{proof}

Now, note that
\begin{align*}
\E z_t z_t^\T &= \begin{bmatrix}
\Phi_x(t + 1)\\
\Phi_u(t + 1)
\end{bmatrix}
x_0 x_0^\T
\begin{bmatrix}
\Phi_x(t + 1)\\
\Phi_u(t + 1)
\end{bmatrix}^\T
+ \begin{bmatrix}
0 & 0\\
0 & \sigma_\eta^2 I_\inputdim
\end{bmatrix}
\\
&\quad  + \sum_{k = 0}^{t - 1} \begin{bmatrix}
\Phi_x(t - k) \\
\Phi_u(t - k)
\end{bmatrix}
(\sigma_\eta^2 \trueB \trueB^\T + \sigma_w^2 I_\statedim)
\begin{bmatrix}
\Phi_x(t - k) \\
\Phi_u(t - k)
\end{bmatrix}.
\end{align*}

Since for all $j \geq 1$ we have $\norm{\Phi_x(j)} \leq C_x \rho^j$ and $\norm{\Phi_u(j)} \leq C_u \rho^j$, we obtain
\begin{align*}
\Tr \E z_t z_t^\T \leq \inputdim \sigma_\eta^2 + (\statedim C_x^2 + \inputdim C_u^2)\left(\rho^{2t + 2}\ltwonorm{x_0}^2 + (\sigma_w^2 + \sigma_\eta^2 \norm{\trueB}^2) \sum_{k = 1}^{t}  \rho^{2k}\right)
\end{align*}

Therefore, we get that
\begin{align*}
\sum_{t = 0}^{T - 1} \Tr\E z_t z_t^\top \leq \inputdim \sigma_\eta^2 T + \frac{\rho^2 T}{1 - \rho^2}(\statedim C_x^2 + \inputdim C_u^2)(\sigma_w^2 + \sigma_\eta^2 \norm{\trueB}^2) + \frac{\rho^2}{1 - \rho^2}(\statedim C_x^2 + \inputdim C_u^2)\ltwonorm{x_0}^2,
\end{align*}
and the conclusion follows by simple algebra.
\end{proof}

Proposition~\ref{prop:one_epoch_estimate} follows from Theorem~\ref{thm:lwm}, Lemma~\ref{lem:bmsb}, Lemma~\ref{lem:bound_covariance}, and simple algebra.

\subsection{Stitching the epochs together}

We start by bounding with high probability the size of the initial states of the epochs.
Recall that epoch $i$ has length $T_i$ and that we denote by $x_{T_i}^{(i)}$ the last
state of the epoch $i$, which is equal to the first state $x_{0}^{(i + 1)}$ of the epoch $i + 1$.
For simplicity we assume that $x_0^{(0)} = 0$, an assumption that is not restrictive in any way.

 \begin{lemma}
\label{lem:initial_state}
Fix $\delta \in (0,1)$, $r > 0$, and an epoch $i$. Assume that for all $k \leq i$ the epoch length $T_k$ is large enough so that $C_x \rho^{T_k} \leq \rho^r$. Then, for any $t \geq 0$ we have
\begin{align*}
\Pr\left(\ltwonorm{x_0^{(i + 1)}} \geq \sigma_w (\sqrt{\statedim} + t) \frac{ C_x \rho (1 + \norm{\trueB})}{(1 - \rho^r)(1 - \rho^2)}  \right) \leq \exp\left(- \frac{t^2}{2}\right).
\end{align*}
\end{lemma}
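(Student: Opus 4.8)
The plan is to control the norm of the epoch-boundary state $x_0^{(i+1)} = x_{T_i}^{(i)}$ by expressing it through the system response elements and then applying a Gaussian concentration bound. Using Eq.~\eqref{eq:states}, the last state of epoch $i$ can be written as
\[
x_{T_i}^{(i)} = \Phi_x(T_i + 1) x_0^{(i)} + \sum_{k=0}^{T_i - 1} \Phi_x(T_i - k)(\trueB \eta_k + w_k) \:.
\]
First I would unroll this recursion back across all prior epochs $k \leq i$. Since the response $\{\tf \Phi_x, \tf \Phi_u\}$ during each epoch satisfies $\tf \Phi_x \in \frac{1}{z}\RHinf(C_x, \rho)$, the transient multiplier $\Phi_x(T_k+1)$ propagating a state from the start to the end of epoch $k$ has norm at most $C_x \rho^{T_k} \leq \rho^r$ by the assumption on the epoch lengths. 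This geometric contraction of the initial-state term is the crucial structural fact: it means the contribution of each epoch's noise to $x_0^{(i+1)}$ is damped by a factor $\rho^r$ per subsequent epoch, so the total is a convergent geometric series in $\rho^r$, giving the $\frac{1}{1-\rho^r}$ factor in the statement.

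The main step is then a concentration argument. Conditioning on the full history, $x_0^{(i+1)}$ is a linear combination of the i.i.d.\ Gaussian noises $\{w_k\}$ and $\{\eta_k\}$ across all epochs, hence itself a Gaussian vector (with zero mean, since $x_0^{(0)} = 0$). I would bound its covariance by bounding $\E \twonorm{x_0^{(i+1)}}^2$: each noise term $\Phi_x(\cdot)(\trueB \eta_k + w_k)$ contributes at most $C_x^2 \rho^{2j}(\sigma_w^2 + \sigma_\eta^2\norm{\trueB}^2) \leq C_x^2 \rho^{2j}\sigma_w^2(1+\norm{\trueB})^2$ (using $\sigma_\eta \leq \sigma_w$) to the trace of the covariance at lag $j$. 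Summing the within-epoch geometric series $\sum_j \rho^{2j} \leq \frac{\rho^2}{1-\rho^2}$ and the across-epoch series $\sum \rho^{2r}$ yields an operator-norm bound on the covariance of the form $\sigma_w^2 \frac{C_x^2 \rho^2 (1+\norm{\trueB})^2}{(1-\rho^r)^2(1-\rho^2)}$. Then a standard Gaussian norm concentration inequality (Lipschitz concentration of $\twonorm{\cdot}$, giving the $\sqrt{\statedim} + t$ with subgaussian tail $\exp(-t^2/2)$) delivers the stated high-probability bound.

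The hard part will be the bookkeeping in the across-epoch unrolling: one must carefully separate the noise injected \emph{within} the current epoch (contributing the $\frac{\rho^2}{1-\rho^2}$ geometric sum at lags $1,\dots,T_i$) from the propagated contribution of earlier epochs' randomness, each passed through one transient factor bounded by $\rho^r$ per epoch and summed as $\sum_{j\geq 0}\rho^{jr} = \frac{1}{1-\rho^r}$. I would organize this as an inductive bound on $\E\twonorm{x_0^{(k+1)}}^2$ in terms of $\E\twonorm{x_0^{(k)}}^2$, showing the recursion is a contraction with a forcing term, so that the steady-state bound is governed by the two geometric factors $\frac{1}{1-\rho^r}$ and $\frac{1}{1-\rho^2}$. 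Matching the precise constant $\frac{C_x \rho (1+\norm{\trueB})}{(1-\rho^r)(1-\rho^2)}$ (note it is the square root of the variance bound) then requires keeping the factors of $C_x$, $\rho$, and $(1+\norm{\trueB})$ exactly rather than folding them into $\const$.
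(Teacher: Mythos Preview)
Your plan has the right skeleton—unroll across epochs via Eq.~\eqref{eq:states}, exploit $C_x\rho^{T_k}\leq\rho^r$ for geometric decay of the initial-state contribution, bound the per-epoch noise by a geometric sum in $\rho^2$, and finish with Gaussian concentration—and this matches the paper. But the concentration step as you state it has a gap. You claim that $x_0^{(i+1)}$ is ``a linear combination of the i.i.d.\ Gaussian noises \ldots hence itself a Gaussian vector.'' In the adaptive setting this is false: the response matrices $\Phi_x^{(k)}(\cdot)$ active during epoch $k$ are determined by the controller $\tf K^{(k)}$, which is synthesized from the trajectory of epoch $k-1$ and therefore depends on $\{w_j^{(\ell)},\eta_j^{(\ell)}\}_{\ell<k}$. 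The coefficients in your linear combination are random and adapted to the very Gaussians you are summing, so $x_0^{(i+1)}$ is not Gaussian, and conditioning on the controllers destroys the i.i.d.\ structure of the earlier noise.

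The paper's fix is to pass to norms \emph{before} unrolling. From $\|\Phi_x^{(k)}(T_k+1)\|\leq\rho^r$ one obtains the pointwise inequality $\ltwonorm{x_0^{(i+1)}}\leq\sum_{k=0}^{i}\rho^{r(i-k)}\ltwonorm{\xi_k}$, where $\xi_k$ is the within-epoch noise term and the weights $\rho^{r(i-k)}$ are now \emph{deterministic}. Writing $\xi_k=\Sigma_{x,k}^{1/2}\omega_k$ with $\omega_k$ standard normal, one has $\|\Sigma_{x,k}\|\leq\sigma_w^2(1+\|\trueB\|^2)\tfrac{C_x^2\rho^2}{1-\rho^2}$ uniformly over every feasible controller, so the right-hand side is a Lipschitz function of the concatenated $\omega_k$'s with a deterministic Lipschitz constant, and Gaussian Lipschitz concentration (not Gaussianity of $x_0^{(i+1)}$ itself) delivers the $\exp(-t^2/2)$ tail. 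Your inductive plan for bounding the expectation is fine and corresponds to the paper's Jensen step; only the tail argument needs this reroute from ``Gaussian vector'' to ``Lipschitz function of Gaussians.''
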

\begin{proof}
From Eq. \eqref{eq:states} we have that
\begin{align*}
x_{0}^{(i + 1)} = \Phi_x^{(i)}(T_i + 1) x_{0}^{(i)} + \underbrace{\sum_{j = 0}^{T_i - 1} \Phi_x^{(i)}(T_i - 1 - j) (\trueB \eta_j^{(i)} + w_j^{(i)})}_{\xi_{i}},
\end{align*}
where we denoted the sum over disturbances during the epoch $i$ by $\xi_{i}$. Therefore,
\begin{align*}
\ltwonorm{x_0^{(i + 1)}} &\leq C_x \rho^{T_i}\ltwonorm{x_0^{(i)}} + \ltwonorm{\xi_{i}}\\
&\leq \rho^r \ltwonorm{x_0^{(i)}} + \ltwonorm{\xi_{i}}\\
&\leq \sum_{k = 0}^{i}\rho^{r(i - k)}\ltwonorm{\xi_{k}}.
\end{align*}

By definition $\xi_{k}$ is a zero-mean multivariate Gaussian random vector with covariance
\begin{align*}
\Sigma_{x,k} := \sum_{j = 0}^{T_k - 1} \Phi^{(k)}_x(T_k - 1 - j)(\sigma_w^2 + \sigma_{\eta, k}^2 \trueB \trueB^\T) \Phi^{(k)}_x(T_k - 1 - j)^\T,
\end{align*}
whose top eigenvalue is upper bounded by
\begin{align}
\nonumber
\sum_{j = 0}^{T_k - 2} C_x^2 (\sigma_w^2 + \sigma_{\eta, k}^2 \norm{\trueB}^2) \rho^{2(T_k - 1 - j)} &\leq (\sigma_w^2 + \sigma_{\eta, k}^2 \norm{\trueB}^2) \frac{C_x^2 \rho^2}{1 - \rho^2}\\
\label{eq:Lipschitz_constant}
&\leq \sigma_w^2 (1 + \norm{\trueB}^2) \frac{C_x^2 \rho^2}{1 - \rho^2},
\end{align}
where the last inequality follows because $\sigma_{\eta, k} \leq \sigma_w$.

Then, we can write $\ltwonorm{\xi_k}$ as $\ltwonorm{\Sigma_{x,k}^{1/2}\omega_k}$, where $\omega_k$ is a standard Gaussian random vector distributed according to $\calN(0, I_\statedim)$, and hence $\ltwonorm{\xi_i}$ is a Lipschitz function of $\omega_i$ with Lipschitz constant equal to squared root of \eqref{eq:Lipschitz_constant}. Hence, $\ltwonorm{x_0^{(i)}}$ is a Lipschitz function of standard normal random variables with the Lipschitz constant
\begin{align*}
\sqrt{\sigma_w^2 \frac{(1 + \norm{\trueB}^2)}{1 - \rho^r} \frac{C_x^2 \rho^2}{1 - \rho^2}}.
\end{align*}

By the concentration of Lipschitz functions of isotropic Gaussians, for $\nu \geq 0$, we have that
\begin{align*}
\Pr\left(\ltwonorm{x_0^{(i + 1)}} \geq \E \ltwonorm{x_0^{(i + 1)}} + \nu \right) \leq \exp\left(- \frac{\nu^2(1 - \rho^2)(1 - \rho^r)}{2 \sigma_w^2 \rho^2 (1 + \norm{\trueB}^2) C_x}\right).
\end{align*}
By Jensen's inequality we have that
\begin{align*}
\E \ltwonorm{x_0^{(i + 1)}} &\leq \sqrt{\E \ltwonorm{x_0^{(i + 1)}}^2} \leq \sqrt{\sum_{k = 0}^{i} \rho^{r(i - k)}\Tr \left( \E \xi_k \xi_k^\T \right)}\\
&\leq \sqrt{ \statedim \sigma_w^2 \frac{(1 + \norm{\trueB}^2)}{1 - \rho^r}\frac{C_x^2 \rho^2}{1 - \rho^2}}.
\end{align*}
The conclusion follows.
\end{proof}

We are now ready to prove that the statistical rate holds across epochs.
In order to achieve this, we need the statistical rate after the first epoch to
be small enough to satisfy the feasibility constraints on $\varepsilon$
given in Theorem~\ref{thm:lqr_feasibility_and_subopt} for the IIR case
and given in Theorem~\ref{thm:lqr_feasibility_and_subopt_fir}
for the FIR truncated case. Once this occurs, we immediately have
feasibility at the next epoch (w.h.p.), and iterating the argument gives us
recursive feasibility (w.h.p.).

\begin{theorem}
\label{thm:estimation}
Fix a $\delta \in (0, 1)$.
For the IIR case, let $C_x, C_u, \rho$ be defined as
\begin{align*}
  C_x &= \frac{\const C_\star}{(1-\rho_\star)^2} \:, \\
  C_u &= \frac{\const \norm{\trueK} C_\star}{(1-\rho_\star)^2} \:, \\
  \rho &= (1/8) \rho_\star + (7/8) \:,
\end{align*}
and for the FIR case, let $C_x, C_u, \rho$ be defined as
\begin{align*}
C_x &= \frac{\const C_\star}{(1-\rho_\star)^3} \:, \\
C_u &=  \frac{\const \norm{\trueK}C_\star}{(1-\rho_\star)^3} \:, \\
\rho &= 0.001 \rho_\star + .999 \:,
\end{align*}
where $(C_\star, \rho_\star)$ are as defined in Theorem~\ref{thm:lqr_feasibility_and_subopt}
(resp. Theorem~\ref{thm:lqr_feasibility_and_subopt_fir}),
and suppose (wlog) that $C_x \geq 1$ and $C_u \geq 1$.
Let the length of epoch $i \in \{ 0, 1, 2, ... \}$ be $T_i = C_T 2^i$ time steps and let
the injected noise variance at epoch $i$ be $\sigma_{\eta, i}^2 = \sigma_w^2 2^{-i/3}$.
Suppose the constant $C_T$ is large enough to satisfy the following inequalities,
\begin{align}
    C_T &\geq \frac{\log(2C_x)}{\log(1/\rho)}  \:, \label{eq:first_ineq} \\
    C_T &\gtrsim \frac{1}{2^i}\left(n + \log\left( \frac{i+1}{\delta} \right)\right)\text{ for all } i=0, 1, 2, ... \:, \label{eq:second_ineq} \\
    C_T &\gtrsim \frac{(\statedim + \inputdim)}{2^i} \log\left(1 + (i+1)^2\frac{\inputdim C_u^2}{\delta} + (i+1)^2 2^{i/3}\frac{\rho^2 C_u^2 C_K^2}{\delta(1 - \rho^2)}\left( \frac{C_x^2(1 + \norm{\trueB})^2}{(1-\rho)^2}\right)\right) \label{eq:third_ineq} \\
        &\qquad\text{ for all } i =0, 1, 2, ... \:, \nonumber \\
    C_T &\gtrsim \frac{(n+p)}{2^{2i/3}} \frac{C_u^2(C_x + C_u)^2}{(1-\rho_\star)^\alpha}\label{eq:fourth_ineq} \\
    &\qquad\times \log\left( 1 + (i+1)\frac{p C_u}{\delta} + (i+1) 2^{i/6} \frac{\rho C_u C_K}{\delta(1-\rho^2)}\left( \frac{C_x(1 + \norm{\trueB})}{1-\rho} \right)   \right) \nonumber \\
    &\qquad\text{ for all } i=0, 1, 2, ... \:, \nonumber
\end{align}
where above $\alpha = 2$ for the IIR case and $\alpha = 4$ for the FIR case.
Then, with probability $1 - \delta$, the following two statements hold.
First, for all epochs $i$, the norm of the first state at the beginning
of each epoch satisfies
\begin{align}
  \ltwonorm{x_0^{(i)}} \lesssim \sigma_w \left(\sqrt{\statedim} + \sqrt{\log\left(\frac{i + 1}{\delta}\right)} \right) \frac{ C_x \rho (1 + \norm{\trueB})}{1 - \rho^2} \:. \label{eq:est_x_begin_bound}
\end{align}
Second, for all epochs $i$, the OLS estimate $(\Ah^{(i)}, \Bh^{(i)})$ satisfies the statistical rate
\begin{align}
  \max\left\{ \substack{\norm{\Ahat^{(i)} - A},\\ \norm{\Bhat^{(i)} - B}}\right\} \lesssim \frac{\sigma_w C_u}{\sigma_{\eta, i}} \sqrt{\frac{(\statedim + \inputdim)}{T_i} \log\left(1 + (i+1)\frac{\inputdim C_u}{\delta} + (i+1)\frac{\sigma_w}{\sigma_{\eta, i}}\frac{\rho C_u C_K}{\delta(1 - \rho^2)} \left( \frac{C_x(1 + \norm{\trueB})}{1-\rho} \right) \right)} \:. \label{eq:est_bound}
\end{align}
\end{theorem}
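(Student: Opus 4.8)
The plan is to prove both claims simultaneously by strong induction on the epoch index $i$, wrapped in a single union bound that allocates a summable failure budget $\delta_i \asymp \delta/(i+1)^2$ to each epoch, so that $\sum_i \delta_i \leq \delta$. The three earlier results supply all the per-epoch content. Proposition~\ref{prop:one_epoch_estimate} turns a trajectory generated by a controller whose response on $(\trueA,\trueB)$ lies in $\frac{1}{z}\RHinf(C_x,\rho)$ and $\frac{1}{z}\RHinf(C_u,\rho)$ into the estimation rate \eqref{eq:est_bound}; Lemma~\ref{lem:initial_state} controls the carry-over state $x_0^{(i+1)}$; and part (c) of Theorem~\ref{thm:lqr_feasibility_and_subopt} (IIR) or Theorem~\ref{thm:lqr_feasibility_and_subopt_fir} (FIR) guarantees that, once the current estimation error is small enough, the newly synthesized controller again has a response on the true system lying in exactly the class $\frac{1}{z}\RHinf(C_x,\rho)\times\frac{1}{z}\RHinf(C_u,\rho)$ demanded by the other two results. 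This is precisely why the constants in the theorem statement are chosen to be the \emph{response} constants output by part (c). The argument is thus a bootstrapping loop: a good controller yields a good estimate, and a good estimate yields a good controller for the next epoch.

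Concretely, I would define the good event at epoch $i$ as the intersection of (i) $\tf K^{(i)}$ having response on $(\trueA,\trueB)$ in $\frac{1}{z}\RHinf(C_x,\rho)\times\frac{1}{z}\RHinf(C_u,\rho)$, (ii) the initial-state bound \eqref{eq:est_x_begin_bound}, and (iii) the estimation bound \eqref{eq:est_bound}. The base case $i=0$ is immediate, since Assumption~\ref{assumption:stablizing_controller} supplies (i) for the given $\tf K^{(0)}$. For the inductive step, I condition on the good events through epoch $i-1$. Property (i) at epoch $i$ is then deterministic: the estimate $(\Ah_{i-1},\Bh_{i-1})$ has error $\varepsilon_{i-1}$ obeying \eqref{eq:est_bound}, and I must check $\varepsilon_{i-1}$ meets the feasibility hypotheses of the relevant suboptimality theorem. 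Substituting $T_{i-1}=C_T 2^{i-1}$ and $\sigma_w/\sigma_{\eta,i-1}=2^{(i-1)/6}$ into \eqref{eq:est_bound} gives $\varepsilon_{i-1}\lesssim C_u\sqrt{(\statedim+\inputdim)/C_T}\,2^{-(i-1)/3}\sqrt{\log(\cdots)}$, and condition \eqref{eq:fourth_ineq} — calibrated with $\alpha=2$ for the IIR case and $\alpha=4$ for the FIR case, reflecting the extra $(1-\rho_\star)$ factor in the FIR response constants — is exactly what forces $\varepsilon_{i-1}(C_x+C_u)$ below the part-(c) feasibility threshold uniformly in $i$. Applying part (c) then delivers (i) for epoch $i$.

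With (i) in hand, I establish (ii) and (iii) using the fresh randomness of epoch $i$, conditioning on the history up to the start of that epoch so that Proposition~\ref{prop:one_epoch_estimate} applies with the now-fixed initial state $x_0^{(i)}$. For (ii), I apply Lemma~\ref{lem:initial_state} with $t=\sqrt{2\log(1/\delta_i)}$; condition \eqref{eq:first_ineq} gives $C_x\rho^{T_k}\leq 1/2$ for all $k\leq i$ (since $T_k\geq C_T\geq \log(2C_x)/\log(1/\rho)$), verifying the lemma's hypothesis, and condition \eqref{eq:second_ineq} forces $C_T$ large enough that the resulting tail bound collapses to \eqref{eq:est_x_begin_bound} with $\sqrt{\statedim}+\sqrt{\log((i+1)/\delta)}$. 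For (iii), I feed the now-bounded $\ltwonorm{x_0^{(i)}}$ into Proposition~\ref{prop:one_epoch_estimate}: condition \eqref{eq:third_ineq} is precisely the sample-size requirement \eqref{eq:est_T_cond} after the substitutions $T_i=C_T2^i$, $\sigma_w^2/\sigma_{\eta,i}^2=2^{i/3}$, $\delta\to\delta_i$, and (using \eqref{eq:second_ineq} once more) $\ltwonorm{x_0^{(i)}}^2/(\sigma_w^2 T_i)\lesssim \const\, C_x^2(1+\norm{\trueB})^2/(1-\rho)^2$, and the proposition's conclusion is then \eqref{eq:est_bound}. Events (ii) and (iii) each fail with probability at most $\delta_i/2$, so the good event fails with conditional probability $\leq\delta_i$; summing the budget closes the infinite union bound.

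The main obstacle is the interlocking of the three quantities across epochs: the estimation rate needed to keep the next synthesis feasible depends on the initial-state bound, which requires the current controller to be well-behaved, which is in turn only guaranteed by the previous estimate. Making this loop self-consistent is what forces all four conditions on $C_T$ to hold simultaneously for every $i$ with a single $C_T$ independent of $i$. The delicate point is that the exploration schedule $\sigma_{\eta,i}^2=\sigma_w^2 2^{-i/3}$ must decay slowly enough that the noise-to-signal factor $\sigma_w/\sigma_{\eta,i}=2^{i/6}$ does not overwhelm the $T_i^{-1/2}=(C_T2^i)^{-1/2}$ statistical gain; it is exactly the surviving margin $2^{i/6}\cdot 2^{-i/2}=2^{-i/3}$ that both drives $\varepsilon_i$ to zero (guaranteeing recursive feasibility via \eqref{eq:fourth_ineq}) and makes the per-epoch requirement on $C_T$ shrink geometrically in $i$, so that the supremum over $i$ is finite.
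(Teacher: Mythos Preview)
Your proposal is correct and follows essentially the same approach as the paper: invoke Lemma~\ref{lem:initial_state} (enabled by \eqref{eq:first_ineq}) for the initial-state bound, Proposition~\ref{prop:one_epoch_estimate} (enabled by \eqref{eq:second_ineq}–\eqref{eq:third_ineq}) for the per-epoch estimation rate, and part~(c) of Theorem~\ref{thm:lqr_feasibility_and_subopt}/\ref{thm:lqr_feasibility_and_subopt_fir} (enabled by \eqref{eq:fourth_ineq}) to close the recursive-feasibility loop, all under a $\delta/(i+1)^2$ union bound. The only difference is cosmetic: you make the induction on $i$ explicit, whereas the paper states the argument more tersely by first asserting the estimation bounds ``under the assumption the SLS program is feasible at every iteration'' and then verifying feasibility afterward, leaving the reader to unwind the circularity.
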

\begin{proof}
For this proof, we set $r = \log(2)/\log(1/\rho)$.

By Theorem~\ref{thm:lqr_feasibility_and_subopt} for the IIR case
and Theorem~\ref{thm:lqr_feasibility_and_subopt_fir} for the FIR case, we know that
the true responses $\{ \tf \Phi_x, \tf \Phi_u \}$ of the synthesized controllers
$\tf K_i$ on $(\trueA, \trueB)$ at every epoch satisfy $\tf \Phi_x \in \RHinf(C_x, \rho)$
and $\tf \Phi_u \in \RHinf(C_u, \rho)$.

Because of the assumption \eqref{eq:first_ineq} on $C_T$  we have $C_x \rho^{T_i} \leq \rho^r$. Therefore, we can apply Lemma~\ref{lem:initial_state} with $t^2 = \log(\const (i + 1)^2/\delta)$ to obtain that with probability at least $1 - \delta / 2$ the norm of $x_0^{(i)}$ for all epochs $i$ satisfies
\begin{align*}
\ltwonorm{x_0^{(i)}} \lesssim \sigma_w \left(\sqrt{\statedim} + \sqrt{\log\left(\frac{i + 1}{\delta}\right)} \right) \frac{ C_x \rho (1 + \norm{\trueB})}{(1 - \rho^r)(1 - \rho^2)}.
\end{align*}

Furthermore, by the assumption \eqref{eq:second_ineq} on $C_T$ we have that with probability at least $1 - \delta/2$,
\begin{align*}
  \frac{\ltwonorm{x_0^{(i)}}^2}{\sigma_w^2 T_i} \leq \frac{C_x^2 (1 + \norm{\trueB})^2}{(1-\rho)^2} \:.
\end{align*}

Our assumption \eqref{eq:third_ineq} means that
condition \eqref{eq:est_T_cond} is satisfied for each epoch $i$ and therefore
under the assumption the SLS program is feasible at every iteration,
we can invoke Proposition~\ref{prop:one_epoch_estimate} with $\delta = \const \delta/(i+1)^2$
and reach the desired conclusions.

To show feasibility of the SLS at every epoch,
Theorem~\ref{thm:lqr_feasibility_and_subopt} for the IIR case
requires that
\begin{align*}
  \varepsilon(i) \leq \const \frac{1 - \rho_\star}{C_x + C_u} \:,
\end{align*}
and Theorem~\ref{thm:lqr_feasibility_and_subopt_fir} for the FIR case requires that
\begin{align*}
  \varepsilon(i) \leq \const \frac{(1 - \rho_\star)^2}{C_x + C_u} \:,
\end{align*}
where $\varepsilon(i)$ is our statistical upper bound on the errors
$\max\left\{ \substack{\norm{\Ahat^{(i)} - A},\\ \norm{\Bhat^{(i)} - B}}\right\}$.
This condition is ensured by our assumption \eqref{eq:fourth_ineq}
on $C_T$.
\end{proof}

We now remark on the satisfiability of the constraints on $C_T$
given by \eqref{eq:second_ineq}, \eqref{eq:third_ineq}, and \eqref{eq:fourth_ineq}.
For \eqref{eq:second_ineq} and \eqref{eq:third_ineq} (resp. \eqref{eq:fourth_ineq}), the RHS grows like
$\mathrm{poly}(i)/2^i$ (resp. $\mathrm{poly}(i)/2^{2i/3}$) and hence the supremum of the RHS
(as a function of $i$) is achieved for some finite $i$.
Therefore, we have that $C_T$ satisfies in the IIR case
\begin{align}
  C_T &= \Otilde\left( \max\left\{ \frac{1}{1-\rho_\star}, n, (n+p)\frac{C_\star^4 (1 + \norm{\trueK})^4}{(1-\rho_\star)^8} \right\} \right) \nonumber \\
  &= \Otilde\left( (n+p)\frac{C_\star^4 (1 + \norm{\trueK})^4}{(1-\rho_\star)^8}  \right) \label{eq:C_T_bound} \:,
\end{align}
and that $C_T$ satisfies in the FIR case
\begin{align}
    C_T &= \Otilde\left( \max\left\{ \frac{1}{1-\rho_\star}, n, (n+p)\frac{C_\star^4 (1 + \norm{\trueK})^4}{(1-\rho_\star)^{10}} \right\} \right) \nonumber \\
    &= \Otilde\left( (n+p)\frac{C_\star^4 (1 + \norm{\trueK})^4}{(1-\rho_\star)^{16}}  \right) \label{eq:C_T_bound_FIR} \:.
\end{align}

\section{Regret Decomposition and Analysis}

We use the following regret decomposition, and for simplicity we assume that $T$ is such that $T_0 + T_1 + ... + T_{E-1} = T$ for some $E$.
Note that $E = O(\log_2{T})$.
\begin{align}
    \mathsf{Regret}(T) = \sum_{k=1}^{T} (x_k^\T Q x_k + u_k^\T R u_k - J_\star) = \sum_{i=0}^{E-1} \sum_{j=1}^{T_i} (x_{i, j}^\T Q x_{i, j} + u_{i, j}^\T R u_{i, j} - J_\star) \:.
  \label{eq:regret}
\end{align}
Here, we let $x_{i,j}$ denote the $j$-th state at the $i$-th epoch (and similarly for $u_{i, j}$).
Our definition of regret is defined for a given realization, as opposed to
in expectation.
However, in our analysis so far we have considered sub-optimality guarantees in expectation.
Hence, our first concern is going from a realization to expectation.

Denote by $J_{T}(A,B,\tf K; \Sigma)$ the expected cost incurred by a (stabilizing) feedback policy $\tf K$ over a finite horizon $T$ on system $(A,B)$ being driven by process noise $w \iid \calN(0, \Sigma)$ and starting from an initial condition of $x_0 = 0$, i.e.,
\begin{equation}
J_{T}(A,B,\tf K; \Sigma) := \sum_{k=1}^T \E\left[ x_k^\T Q x_k + u_k^\T R u_k\right].
\label{eq:epochexpcost}
\end{equation}
Recall also that $J(A, B, \tf K; \Sigma)$ is the infinite-horizon LQR cost of $\tf K$ in feedback with $(A, B)$.
We now state some basic properties of $J_T$ and $J$. We omit the proofs of these properties as they
are standard.
\begin{lemma}
The following are true
\begin{enumerate}[(i)]
    \item $J_{T}(A, B, \tf K; \Sigma) \leq T J(A, B, \tf K; \Sigma)$,
    \item $J(A, B, \tf K; \Sigma_1 + \Sigma_2) = J(A, B, \tf K; \Sigma_1) + J(A, B, \tf K; \Sigma_2)$,
    \item $J(A, B, \tf K; \alpha \Sigma) = \alpha J(A, B, \tf K; \Sigma)$ for $\alpha > 0$,
    \item $J(A, B, \tf K; \Sigma_1) \leq J(A, B, \tf K; \Sigma_2)$ if $\Sigma_1 \preceq \Sigma_2$.
\end{enumerate}
\end{lemma}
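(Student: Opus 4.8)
The plan is to reduce all four claims to a single closed-form expression for the expected cost as a function of the noise covariance $\Sigma$, and then read off each property from elementary facts about the trace and the positive-semidefinite ordering. First I would use the SLS response representation \eqref{eq:phis}: since $\tf K$ stabilizes $(A,B)$ with response $\{\Phi_x,\Phi_u\}$, each state and input can be written as a convolution $x_k = \sum_{j=1}^k \Phi_x(j) w_{k-j}$ and $u_k = \sum_{j=1}^k \Phi_u(j) w_{k-j}$ against the i.i.d.\ noise $w_t \sim \calN(0,\Sigma)$ (using $x_0 = 0$). Because the $w_t$ are independent and zero-mean, all cross terms vanish in expectation, leaving
\[
\E\left[ x_k^\T Q x_k + u_k^\T R u_k \right] = \sum_{j=1}^k \Tr\!\left( M(j)\,\Sigma \right), \quad M(j) := \Phi_x(j)^\T Q \Phi_x(j) + \Phi_u(j)^\T R \Phi_u(j) \:.
\]
Note that $M(j) \succeq 0$ since $Q \succeq 0$ and $R \succ 0$.

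Next I would sum this stage cost to obtain the two cost functionals. Writing $c_k(\Sigma) := \sum_{j=1}^k \Tr(M(j)\Sigma)$ for the expected per-step cost at time $k$, we get $J_T(A,B,\tf K;\Sigma) = \sum_{k=1}^T c_k(\Sigma)$ directly from \eqref{eq:epochexpcost}. Because $\tf K$ is stabilizing, its responses lie in some $\calS(C,\rho)$ and hence $\norm{M(j)}$ decays geometrically, so $c_k(\Sigma)$ increases to the finite limit $c_\infty(\Sigma) := \sum_{j=1}^\infty \Tr(M(j)\Sigma)$; a Cesàro argument then gives $J(A,B,\tf K;\Sigma) = \lim_k c_k(\Sigma) = c_\infty(\Sigma)$ (equivalently, this is the Parseval/$\htwo$ identity specialized to covariance $\Sigma$).

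With these two formulas the four properties are immediate. For (i), each summand $\Tr(M(j)\Sigma) \geq 0$ because $M(j),\Sigma \succeq 0$, so $c_k(\Sigma)$ is nondecreasing and bounded above by $c_\infty(\Sigma) = J(A,B,\tf K;\Sigma)$; summing the bound $c_k(\Sigma) \leq J(A,B,\tf K;\Sigma)$ over $k = 1,\dots,T$ yields $J_T \leq T J$. Properties (ii) and (iii) follow because the map $\Sigma \mapsto c_\infty(\Sigma) = \sum_j \Tr(M(j)\Sigma)$ is linear: additivity over $\Sigma_1 + \Sigma_2$ and positive homogeneity in $\alpha$ are just linearity of the trace. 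For (iv), $\Sigma_1 \preceq \Sigma_2$ means $\Sigma_2 - \Sigma_1 \succeq 0$, so by (ii) and linearity $J(A,B,\tf K;\Sigma_2) - J(A,B,\tf K;\Sigma_1) = \sum_j \Tr(M(j)(\Sigma_2 - \Sigma_1)) \geq 0$, since the trace of a product of two positive-semidefinite matrices is nonnegative.

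The only step requiring genuine care — and the reason the stabilizing hypothesis on $\tf K$ is needed — is the passage to the infinite horizon: one must justify that the average cost equals the steady-state stage cost $c_\infty(\Sigma)$ and that the defining series converges, both of which rest on the geometric decay of $\{\Phi_x(j),\Phi_u(j)\}$. Everything after that is bookkeeping with the trace and the PSD cone, so I would not expect any real difficulty; this is presumably why the authors label the properties standard.
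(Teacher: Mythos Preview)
Your proof is correct. The paper itself omits the proof entirely, writing only ``We omit the proofs of these properties as they are standard,'' so there is no detailed argument to compare against; your approach via the closed-form stage cost $c_k(\Sigma) = \sum_{j=1}^k \Tr(M(j)\Sigma)$ and the Ces\`aro limit is exactly the standard route the authors are gesturing at, and it establishes all four items cleanly.
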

From these properties, we immediately conclude that
\begin{align}
    J_{T}(A, B, \tf K; \sigma_w^2 I + \sigma_\eta^2 BB^\T) \leq T\left( 1 + \frac{\sigma_\eta^2 \norm{B}^2}{\sigma_w^2}\right) J(A, B, \tf K; \sigma_w^2 I) \:, \label{eq:finite_upper_bound_ineq}
\end{align}
a fact we will make use of later on.

The following lemma relates the finite horizon cost to its expectation.
\begin{lemma}
Let $\tf K$ be a feedback policy that stabilizes $(A,B)$ and that induces system responses $\tf \Phi_x \in \RHinf(C_x,\rho)$ and $\tf \Phi_u \in \RHinf(C_u,\rho)$.  Suppose that the system $(A,B)$ is started at $x_0 = x$ and is  driven by process noise $w \iid \calN(0,\Sigma)$ with $\Sigma \succ 0$ and $\norm{\Sigma} \leq \sigma^2$.  Then with probability at least $1 - \tfrac{1}{\delta}$ over the randomness of the process noise,
\begin{align}
  \sum_{k=1}^T x_k^\T Q x_k + u_k^\T R u_k & \leq J_{T}(A,B,\tf K; \Sigma) + C_c \cdot \calO\left(\ltwonorm{x}^2 + \sigma^2(\sqrt{nT\logg{\tfrac{2}{\delta}}}+\logg{\tfrac{2}{\delta}})\right),
\label{eq:epochbnd}
\end{align}
for $C_c := (1-\rho)^{-2}(\|Q\|C_x^2 + \|R\|C_u^2).$
\label{prop:epochbnd}
\end{lemma}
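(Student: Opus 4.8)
The plan is to observe that $\sqrt{\sum_{k=1}^T x_k^\T Q x_k + u_k^\T R u_k}$ is a Lipschitz function of the process noise and to apply Gaussian concentration of measure. First I would introduce the weighted output $y_k = \cvectwo{Q^{1/2}x_k}{R^{1/2}u_k}$, so that the cost equals $\ltwonorm{Y}^2$ where $Y$ stacks $y_1,\dots,y_T$. Using the closed-loop convolution representation (as in \eqref{eq:states}--\eqref{eq:inputs}, but with no injected noise), $Y$ is an \emph{affine} function of the stacked noise $w=(w_0,\dots,w_{T-1})$: the noise enters through the causal block-Toeplitz operator $\mathcal{L}$ built from the blocks $\cvectwo{Q^{1/2}\Phi_x(k-t)}{R^{1/2}\Phi_u(k-t)}$, while the initial condition $x$ enters through a deterministic term $Y_0$ whose $k$-th block is $\cvectwo{Q^{1/2}\Phi_x(k+1)x}{R^{1/2}\Phi_u(k+1)x}$. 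Whitening via $w=(I_T\otimes\Sigma^{1/2})g$ with $g\sim\calN(0,I)$ yields $Y = Ng + Y_0$ with $\opnorm{N}\leq \sigma\,\opnorm{\mathcal{L}}$.

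Next I would bound the three deterministic quantities the estimate requires. (i) For the operator norm, I would use that the $\ell_2\to\ell_2$ gain of a causal convolution is at most the $\ell_1$ norm of its impulse response (equivalently the $\hinf$ norm): from $\hinfnorm{\tf\Phi_x}\leq C_x/(1-\rho)$ and $\hinfnorm{\tf\Phi_u}\leq C_u/(1-\rho)$ one gets $\opnorm{\mathcal{L}}^2 \lesssim (1-\rho)^{-2}(\|Q\|C_x^2+\|R\|C_u^2)=C_c$, so the Lipschitz constant is $L=\opnorm{N}\leq \sqrt{C_c}\,\sigma$. (ii) For the initial-condition cost, $\|\Phi_x(k+1)\|\leq C_x\rho^{k+1}$ gives $\ltwonorm{Y_0}^2 \leq (\|Q\|C_x^2+\|R\|C_u^2)\tfrac{\rho^2}{1-\rho^2}\ltwonorm{x}^2 \lesssim C_c\ltwonorm{x}^2$. (iii) For the mean, since the cross term vanishes in expectation, $\E\ltwonorm{Y}^2 = J_T(A,B,\tf K;\Sigma)+\ltwonorm{Y_0}^2$, and the same geometric sums together with $\Tr\Sigma\leq n\sigma^2$ give the crude bound $J_T \lesssim C_c\,n\sigma^2 T$.

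I would then apply Gaussian Lipschitz concentration to $f(g)=\ltwonorm{Ng+Y_0}$: with probability at least $1-\delta$, $f \leq \E f + L\sqrt{2\logg{2/\delta}} \leq \sqrt{J_T+\ltwonorm{Y_0}^2} + s$ with $s:=\sqrt{C_c}\,\sigma\sqrt{2\logg{2/\delta}}$, using $\E f\leq\sqrt{\E f^2}$. Squaring gives $\ltwonorm{Y}^2 \leq J_T + \ltwonorm{Y_0}^2 + 2s\sqrt{J_T+\ltwonorm{Y_0}^2} + s^2$, and I would finish by splitting $\sqrt{J_T+\ltwonorm{Y_0}^2}\leq\sqrt{J_T}+\ltwonorm{Y_0}$ and bounding each piece: $2s\sqrt{J_T}\lesssim C_c\sigma^2\sqrt{nT\logg{2/\delta}}$ (this is precisely where the $\sqrt{nT\log}$ shape originates), $2s\,\ltwonorm{Y_0}\lesssim C_c(\ltwonorm{x}^2+\sigma^2\logg{2/\delta})$ by AM-GM, and $s^2\lesssim C_c\sigma^2\logg{2/\delta}$. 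Collecting these terms gives \eqref{eq:epochbnd}.

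The hard part is the operator-norm step (i) together with the cross term: obtaining the correct constant $C_c$ and the exact $\sqrt{nT\logg{2/\delta}}$ scaling hinges on controlling the genuine $\ell_2\to\ell_2$ operator norm of the finite-horizon convolution (a naive Frobenius/trace bound would be too lossy and would corrupt the $n$- and $T$-dependence), and then pairing the Lipschitz deviation $s\propto\sqrt{C_c}\,\sigma\sqrt{\logg{2/\delta}}$ with the mean bound $\sqrt{J_T}\lesssim\sqrt{C_c}\sqrt{n}\,\sigma\sqrt{T}$ inside $2s\sqrt{J_T}$. Everything else is routine geometric-series bookkeeping.
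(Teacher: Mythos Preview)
Your argument is correct and reaches the same conclusion, but the concentration step differs from the paper's. Both proofs write the finite-horizon cost as a quadratic form in the stacked noise via the block-Toeplitz operators $\tf\Phi_{x,T},\tf\Phi_{u,T}$ and bound those operators by the $\hinf$ norms $C_x/(1-\rho)$, $C_u/(1-\rho)$, which is where $C_c$ enters. The paper then expands the quadratic as $x^\T M_{11}x + 2x^\T M_{12}\omega + \omega^\T M_{22}\omega$, handles the linear term with a scalar Gaussian tail, and applies the Hanson--Wright inequality to $\omega^\T M_{22}\omega$; the $\sqrt{nT\logg{2/\delta}}$ term arises directly from $\|M_{22}\|_F\leq\sqrt{nT}\,\|M_{22}\|$. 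You instead apply Lipschitz concentration to the \emph{norm} $\ltwonorm{Y}$, square, and recover the $\sqrt{nT\logg{2/\delta}}$ shape from the cross term $2s\sqrt{J_T}$ after crudely bounding $J_T\lesssim C_c\,n\sigma^2 T$. Your route is slightly more elementary (only scalar Lipschitz concentration, no Hanson--Wright), at the price of having to upper-bound $J_T$ itself to control the cross term; the paper's route keeps $J_T$ untouched and gets the deviation scale purely from the matrix dimensions.
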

\begin{proof}
Writing $\tf \Phi_x$ as $\tf\Phi_x = \sum_{k=1}^{\infty} \Phi_x(k) z^{-k}$, we define the following finite-horizon truncations of its block-Toeplitz representation:
\[
\tf \Phi_{x,T} := \begin{bmatrix} \Phi_x(1) & & \\ \vdots &\ddots &  \\ \Phi_x(T) & \dots & \Phi_x(1) \end{bmatrix} \quad \tf \Phi_{x,+} := \begin{bmatrix} \Phi_x(2) \\ \Phi_x(3) \\ \vdots \\ \Phi_x(T+1) \end{bmatrix}.
\]
We let $\tf \Phi_{u, T}$ and $\tf \Phi_{u,T,+}$ define similar matrices for $\tf \Phi_u$.
Using these definitions, we can write
\[
\sum_{k=1}^T x_k^\T Q x_k + u_k^\T R u_k  = \begin{bmatrix} x \\ \omega \end{bmatrix}^\T \begin{bmatrix} M_{11} & M_{12} \\ M_{12}^\T & M_{22} \end{bmatrix} \begin{bmatrix} x \\ \omega \end{bmatrix},
\]
for
\begin{align*}
\omega^\T & = \begin{bmatrix} w_0^\T & w_1^\T & \dots & w_{T-1}^\T\end{bmatrix} \\
M_{11} &= \begin{bmatrix} \tf \Phi_{x,+} \\ \tf \Phi_{u,+}\end{bmatrix}^\T\begin{bmatrix} \calQ & \\ & \calR \end{bmatrix}\begin{bmatrix} \tf \Phi_{x,+} \\ \tf \Phi_{u,+}\end{bmatrix} \\
M_{12} &= \begin{bmatrix} \tf\Phi_{x,+} \\ \tf\Phi_{u,+}\end{bmatrix}^\T\begin{bmatrix} \calQ & \\ & \calR \end{bmatrix}\begin{bmatrix} \tf\Phi_{x,T} \\ \tf\Phi_{u,T}\end{bmatrix} \\
M_{22} &= \begin{bmatrix} \tf\Phi_{x,T} \\ \tf\Phi_{u,T}\end{bmatrix}^\T\begin{bmatrix} \calQ & \\ & \calR \end{bmatrix}\begin{bmatrix} \tf\Phi_{x,T} \\ \tf\Phi_{u,T}\end{bmatrix},
\end{align*}
where $\calQ := \mathrm{blkdiag}(Q)$ and $\calR := \mathrm{blkdiag}(R)$ are block-diagonal matrices of compatible dimension.
With these definitions, one can then check that $\Tr M_{22}\mathrm{blkdiag}(\Sigma) = J_{T}(A,B,\tf K; \Sigma)$.

Finally, given that $\tf \Phi_{x,+}, \tf \Phi_{x,T}$ are sub-matrices of the block-Toeplitz representation of $\tf \Phi_x$, it follows that $\max\{\norm{\tf \Phi_{x,+}}, \norm{\tf \Phi_{x,T}}\} \leq \hinfnorm{\tf \Phi_x} \leq \frac{C_x}{1-\rho},$ where the last inequality follows from Lemma \ref{lem:hinfbnd}.
Similarly, we have that $\max\{\norm{\tf \Phi_{u,+}},\norm{\tf \Phi_{u,T}}\} \leq \hinfnorm{\tf \Phi_u} \leq \frac{C_u}{1-\rho}$ .
The result then follows by using these bounds, noting that $\omega \sim \calN(0,\mathrm{blkdiag}(\Sigma))$, and applying Lemma \ref{lem:quadconc}
with the inequality $\norm{M}_F \leq \sqrt{\mathrm{rank}(M)} \norm{M} \leq \sqrt{\max(n_1, n_2)} \norm{M}$ for an $n_1 \times n_2$ matrix $M$.
\end{proof}

We now proceed to prove our main regret upper bounds, for both
the IIR and FIR case.

Let $\calE_{\mathrm{est}, i}$ denote the event that the conclusions
of Theorem~\ref{thm:estimation} hold up to and including epoch $i$.
Let $\{ \tf{\widehat{\Phi}}_{i,x} \}_{i \geq 0}$ and $\{ \tf{\widehat{\Phi}}_{i, u} \}_{i \geq 0}$ denote the
closed loop SLS responses on the true system $(\trueA, \trueB)$.
When $\calE_{\mathrm{est}, i}$ holds, Theorem~\ref{thm:lqr_feasibility_and_subopt}
in the IIR case and Theorem~\ref{thm:lqr_feasibility_and_subopt_fir} in the FIR case
state that uniformly for all epochs $i$ we have
\begin{align*}
    \tf{\widehat{\Phi}}_{i, x} \in \RHinf(\widehat{C}, \widehat{\rho}) \:, \:\:
    \tf{\widehat{\Phi}}_{i, u} \in \RHinf(\norm{\trueK}\widehat{C}, \widehat{\rho}) \:,
\end{align*}
for
\begin{align*}
    \widehat{C} &= \frac{\const C_\star}{(1-\rho_\star)^2} \:, \\
    \widehat{\rho} &= 7/8 + (1/8) \rho_\star \:,
\end{align*}
in the IIR case and
\begin{align*}
    \widehat{C} &= \frac{\const C_\star}{(1-\rho_\star)^3} \:, \\
    \widehat{\rho} &= 0.999 + 0.001 \rho_\star \:,
\end{align*}
in the FIR case.
For ease of notation, define $\widehat{C}_c^2 := \frac{(\norm{Q} + \norm{R}\norm{\trueK})\widehat{C}^2}{(1-\widehat{\rho})^2}$.

Now fix an epoch $i \geq 1$ (the epoch $i = 0$ will be dealt with separately)
and let $\tf K_i$ denote the controller that is active during epoch $i$.
We invoke Lemma~\ref{prop:epochbnd} conditioned on $\calE_{\mathrm{est},i}$ and $x_{i, 0}$ with $\delta \gets \const \delta / (i+1)^2$,
$\Sigma \gets \sigma_w^2 I + \sigma_{\eta,i}^2 \trueB \trueB^\T$, $C_x \gets \widehat{C}$, $C_u \gets \norm{\trueK} \widehat{C}$, and
$\rho \gets \widehat{\rho}$.
The conclusion is that with (conditional) probability at least $1 - \const \delta / (i+1)^2$,
\begin{align*}
    &\sum_{k=1}^{T_i} x_{i,k}^\T Q x_{i,k} + u_{i,k}^\T R u_{i,k} \\
    &\qquad\leq J_T(\trueA, \trueB, \tf K_i ;\sigma_w^2 I + \sigma_{\eta, i}^2 \trueB \trueB^\T) \\
    &\qquad\qquad+ \widehat{C}^2_c\calO\left( \ltwonorm{x_{i,0}}^2 + (\sigma_w^2 + \sigma_{\eta,i}^2 \norm{\trueB}^2)( \sqrt{n T_i \log((i+1)/\delta)} + \log((i+1)/\delta)  ) \right) \\
    &\qquad\leq T_i \left( 1 + \frac{\sigma_{\eta, i}^2 \norm{\trueB}^2}{\sigma_w^2}\right)J(\trueA, \trueB, \tf K_i ;\sigma_w^2 I) \\
    &\qquad\qquad+ \widehat{C}^2_c \calO\bigg( \sigma_w^2 (n + \log((i+1)/\delta)) \frac{\widehat{C}^2 \widehat{\rho}^2 (1 + \norm{\trueB})^2}{(1-\widehat{\rho})^2} \\
    &\qquad\qquad\qquad+ (\sigma_w^2 + \sigma_{\eta,i}^2 \norm{\trueB}^2)( \sqrt{n T_i \log((i+1)/\delta)} + \log((i+1)/\delta)  ) \bigg) \:.
\end{align*}
For the second inequality, we used the
bound \eqref{eq:finite_upper_bound_ineq} and the bound on
$\ltwonorm{x_{i,0}}$ from \eqref{eq:est_x_begin_bound}.

Furthermore, \eqref{eq:est_bound}
and
Theorem~\ref{thm:lqr_feasibility_and_subopt} in the IIR case
(Theorem~\ref{thm:lqr_feasibility_and_subopt_fir} in the FIR case)
tell us that
on $\calE_{\mathrm{est},i}$, we have the sub-optimality bound
\begin{align*}
  J(\trueA, \trueB, \tf K_i ;\sigma_w^2 I) &\leq (1 + C_{J_{i-1}})^2 (1 + \const \varepsilon_{i-1} (1 + \norm{\trueK}) \hinfnorm{\res{\trueA + \trueB \trueK}})^2 J_\star \;, \\
    \varepsilon_i &= \Otilde\left( \frac{\sigma_w \norm{\trueK} \widehat{C}}{\sigma_{\eta, i}} \sqrt{ \frac{n+p}{T_i} } \right) \:.
\end{align*}
Above, in the IIR case, we set $C_{J_i} = 0$ for all $i$, and in the FIR
case we choose $C_{J_i} = 1/2^{i+1}$. Since $C_{J_i} \leq 1$, we have that $(1 + C_{J_i})^2 \leq 1 + 3C_{j_i}$.
Recalling that $\sigma_{\eta, i} / \sigma_w = 2^{-i/6}$ and that $T_i = C_T 2^i$,
we simplify $\varepsilon_i = \Otilde\left( \norm{\trueK} \widehat{C} \sqrt{\frac{n+p}{C_T}} 2^{-i/3}\right) := \Otilde( \frac{D_1}{\sqrt{C_T}} 2^{-i/3} )$
which gives us
\begin{align*}
  &(1 + \const \varepsilon_{i-1} (1 + \norm{\trueK}) \hinfnorm{\res{\trueA + \trueB \trueK}})^2 \\
  &\qquad= 1 + \Otilde\left( \frac{D_1}{\sqrt{C_T}} (1 + \norm{\trueK})\hinfnorm{\res{\trueA + \trueB \trueK}} 2^{-i/3} + \frac{D_1^2}{C_T} (1 + \norm{\trueK})^2\hinfnorm{\res{\trueA + \trueB \trueK}}^2 2^{-2i/3} \right) \\
  &\qquad:= 1 + \Otilde\left( \frac{D_2}{\sqrt{C_T}} 2^{-i/3} + \frac{D_2^2}{C_T} 2^{-2i/3} \right) \:.
\end{align*}
This means that
\begin{align*}
  &T_i \left( 1 + \frac{\sigma_{\eta, i}^2 \norm{\trueB}^2}{\sigma_w^2}\right)J(\trueA, \trueB, \tf K_i ;\sigma_w^2 I) \\
  &\qquad\leq T_i \left( 1 + 2^{-i/3}\norm{\trueB}^2 \right) (1 + 3C_{J_{i-1}}) \left(1 + \Otilde\left( \frac{D_2}{\sqrt{C_T}} 2^{-i/3} + \frac{D_2^2}{C_T} 2^{-2i/3} \right)\right) J_\star \\
  &\qquad\leq T_i \left( 1 + \Otilde\left( \left(\frac{D_2}{\sqrt{C_T}} + \norm{\trueB}^2 \right) 2^{-i/3} + \left(\frac{D_2^2}{C_T} + \frac{D_2 \norm{\trueB}^2}{\sqrt{C_T}} \right) 2^{-2i/3} + \frac{D_2^2 \norm{\trueB}^2}{C_T} 2^{-i} \right) \right) J_\star \\
  &\qquad\qquad + \Otilde((1+\norm{\trueB}^2)( C_T + D_2\sqrt{C_T} + D_2^2) J_\star ) \\
  &\qquad= T_i J_\star + \Otilde( \sqrt{C_T} D_2 + C_T \norm{\trueB}^2 ) J_\star 2^{2i/3} + \Otilde( D_2^2 + \sqrt{C_T} D_2 \norm{\trueB}^2 ) J_\star 2^{i/3} \\
  &\qquad\qquad + \Otilde((1+\norm{\trueB}^2)( C_T + D_2\sqrt{C_T} + D_2^2) J_\star ) \:.
\end{align*}
Hence,
\begin{align*}
    &\sum_{k=1}^{T_i} (x_{i,k}^\T Q x_{i,k} + u_{i,k}^\T R u_{i,k} - J_\star) \\
    &\qquad\leq \Otilde( \sqrt{C_T} D_2 + C_T \norm{\trueB}^2 ) J_\star 2^{2i/3} + \Otilde( D_2^2 + \sqrt{C_T} D_2 \norm{\trueB}^2 ) J_\star 2^{i/3} \\
    &\qquad\qquad + \Otilde\left( \frac{\widehat{C}_c^2 \sigma_w^2 n \widehat{C}^2 (1 + \norm{\trueB})^2}{(1 - \widehat{\rho})^2} \right) + \Otilde( \widehat{C}_c^2 \sigma_w^2 \sqrt{n C_T} 2^{i/2} ) + \Otilde( \widehat{C}_c^2 \sigma_w^2 \norm{\trueB}^2 \sqrt{n C_T} 2^{i/6} ) \\
    &\qquad\qquad + \calO(C_T 2^{i/2} (1 + \norm{\trueB}^2)) + \Otilde((1+\norm{\trueB}^2)( C_T + D_2\sqrt{C_T} + D_2^2) J_\star ) \:.
\end{align*}

On the other hand, when epoch $i=0$, we have that
\begin{align*}
  \sum_{k=1}^{T} x_{0,k}^\T Q x_{0,k} + u_{0,k}^\T R u_{0,k} &\leq
    J_T(\trueA, \trueB, \tf K_0, \sigma_w^2 I + \sigma_{\eta, 0}^2 \trueB \trueB^\T ) + \Otilde( \widehat{C}_c^2 \sigma_w^2 (1 + \norm{\trueB}^2) \sqrt{n C_T} ) \\
    &\leq C_T (1 + \norm{\trueB}^2) J(\trueA, \trueB, \tf K_0, \sigma_w^2 I) + \Otilde( \widehat{C}_c^2 \sigma_w^2 (1 + \norm{\trueB}^2) \sqrt{n C_T} ) \:.
\end{align*}

Summing over all the epochs,
\begin{align*}
    \mathsf{Regret}(T) &= \sum_{i=0}^{O(\log_2{T})} \sum_{k=1}^{T_i} (x_{i,k}^\T Q x_{i,k} + u_{i,k}^\T R u_{i,k} - J_\star) \\
    &\leq \Otilde( (\sqrt{C_T} D_2 + C_T \norm{\trueB}^2) J_\star T^{2/3} ) + \Otilde( \widehat{C}_c^2 \sigma_w^2 \sqrt{n C_T} T^{1/2} ) \\
    &\qquad + \Otilde( D_2^2 + \sqrt{C_T} D_2 \norm{\trueB}^2 J_\star T^{1/3} ) + \Otilde( \widehat{C}_c^2 \sigma_w^2 \norm{\trueB}^2 \sqrt{n C_T} T^{1/6} ) \\
    &\qquad + \Otilde\left( \frac{\widehat{C}_c^2 \sigma_w^2 n \widehat{C}^2 (1 + \norm{\trueB})^2}{(1 - \widehat{\rho})^2}
     + C_T (1 + \norm{\trueB}^2) J(\trueA, \trueB, \tf K_0, \sigma_w^2 I)
    \right) \\
    &\qquad + \Otilde((1+\norm{\trueB}^2)( C_T + D_2\sqrt{C_T} + D_2^2) J_\star ) \\
    &\qquad + \Otilde( \widehat{C}_c^2 \sigma_w^2 (1 + \norm{\trueB}^2) \sqrt{n C_T} )
    + \calO(C_T  (1 + \norm{\trueB}^2) \sqrt{T} )
    \:.
\end{align*}
Using the bound on $C_T$ from \eqref{eq:C_T_bound},
recalling that
\begin{align*}
  D_2 = \sqrt{n+p} \norm{\trueK} \widehat{C} (1 + \norm{\trueK}) \hinfnorm{\res{\trueA + \trueB\trueK}} \:,
\end{align*}
and ignoring the $o(T^{2/3})$ terms in the regret bound,
we have that the regret is bounded by in the IIR case
\begin{align*}
  \Otilde\left( (n+p) \hinfnorm{\res{\trueA + \trueB \trueK}} \frac{C_\star^3 (1 + \norm{\trueK})^4}{(1-\rho_\star)^6}  J_\star T^{2/3} + (n+p) \frac{C_\star^4 (1 + \norm{\trueK})^4 \norm{\trueB}^2}{(1-\rho_\star)^8} J_\star T^{2/3} \right) \:.
\end{align*}
By using Lemma~\ref{lem:hinfbnd}, we have that $\hinfnorm{\res{\trueA + \trueB \trueK}} \leq \frac{C_\star}{1-\rho_\star}$, and hence the bound in the IIR case
simplifies to
\begin{align*}
  \Otilde\left( (n+p) \frac{C_\star^4 (1 + \norm{\trueK})^4 (1 + \norm{\trueB})^2 J_\star}{(1-\rho_\star)^8} T^{2/3} \right) \:.
\end{align*}

Now for the FIR case, we use the bound \eqref{eq:C_T_bound_FIR} and ignoring the $o(T^{2/3})$ terms,
the regret is bounded by
\begin{align*}
  \Otilde\left((n+p) \hinfnorm{\res{\trueA + \trueB \trueK}} \frac{C_\star^3 (1 + \norm{\trueK})^4}{(1-\rho_\star)^{11}}  J_\star T^{2/3}  +  (n+p) \frac{C_\star^4 (1 + \norm{\trueK})^4 \norm{\trueB}^2}{(1-\rho_\star)^{16}} J_\star T^{2/3}  \right) \:.
\end{align*}
Using the same bound on $\hinfnorm{\res{\trueA + \trueB \trueK}}$ as before, we obtain the FIR regret bound
\begin{align*}
  \Otilde\left( (n+p) \frac{C_\star^4 (1 + \norm{\trueK})^4 (1 + \norm{\trueB})^2}{(1-\rho_\star)^{16}} J_\star T^{2/3} \right) \:.
\end{align*}

\section{Lower bound}
\label{app:lower_bound}

This section is dedicated to proving Theorem~\ref{thm:lower_bound}. Throughout this section we assume the following setup and notation.
We consider the LQR problem defined by
\begin{align*}
\min_{u_0, u_1, \ldots, u_{T - 1}} &\E \left[ x_{T}^\top P x_T + \sum_{t = 0}^{T - 1} u_t^\top R u_t + x_{t}^\top Q x_{t} \right]\:,\\
& \text{s.t. } x_{t + 1} = \trueA x_t + \trueB u_t + w_t.
\end{align*}
where $u_t$ is allowed to be any random variable taking values in $\R^{\inputdim}$ that is independent of the sigma algebra $\sigma(w_t, w_{t + 1}, \ldots)$. In particular, $u_t$ can be a measurable function of $x_0$, $w_0$, $w_1$, \ldots, $w_{t - 1}$, and possibly other exogenous randomness.

We assume that $Q$ and $R$ are both positive definite matrices. Throughout this section we denote by $P$ the solution to the discrete algebraic Riccati equation:
\begin{align*}
P_\star = A^\T P_\star A - A^\T P_\star B(R + B^\T P_\star B)^{-1}B^\T P_\star A + Q.
\end{align*}
Moreover, we denote by $K_\star$ the optimal controller for the infinite horizon LQR problem, namely  $\trueK = - (R + B^\T P_\star B)^{-1} B^\T P_\star A$. Hence, the optimal closed loop matrix is given by $M = \trueA + \trueB \trueK$. Throughout this section we assume that the system $(A,B)$ is controllable and hence $\rho(M) < 1$. Therefore, there exist $C > 0$ and $\rho \in (0,1)$ such that $\norm{M^k} \leq C \rho^k$ for all $k \geq 1$.

The initial state $x_0$ for the LQR problem defined above is assumed to have  distribution  $\calN(0, P_\infty)$, where $P_\infty$ is the unique solution to the Lyapunov equation
\begin{align*}
P_\infty = (\trueA + \trueB \trueK) P_\infty (\trueA + \trueB \trueK)^\T + \sigma_w^2 I_\statedim.
\end{align*}
The distribution $\calN(0, P_\infty)$ corresponds to the stationary distribution of the optimal closed loop system $x_{t + 1} = (\trueA + \trueB \trueK) x_t + w_t$. In particular, if $x_t \sim \calN(0, P_\infty)$, then $x_{t + 1} \sim \calN(0, P_\infty)$.

We consider the objective
\begin{align}
\label{eq:finite_horizon_lqr}
J_T(\nu_0, \nu_1, \ldots, \nu_{T - 1}) = \E \left[x_{T}^\top P_\star x_T + \sum_{t = 0}^{T - 1} u_t^\top R u_t + x_{t}^\top Q x_{t}\right],
\end{align}
where $u_t = \trueK x_t + \nu_t$ for the optimal controller $\trueK$. Then, since the terminal cost is given by $P_\star$,  we know that the minimum of objective \eqref{eq:finite_horizon_lqr} over $\nu_0$, $\nu_1$, \ldots, $\nu_{T - 1}$ such that $\nu_t$ is independent of  $\sigma(w_t, w_{t + 1}, \ldots)$ is achieved when all $\nu_t$ are identically zero. The random variables $\nu_t$ should be thought of as deviations from the optimal inputs $\trueK x_t$ for the infinite horizon LQR.
Finally, since $x_0 \sim \calN(0, P_\infty)$ we have that the optimal objective value is  $J_T^\star = J_T(0) = T J_\star + \Tr (P_\star P_\infty)$, where $J_\star = \sigma_w^2 \Tr(P_\star)$ is the optimal objective value of the infinite horizon LQR.

The proof of Theorem~\ref{thm:lower_bound} follows an argument inspired from the field of strongly convex optimization. We show that under the minimum eigenvalue condition of the process $z_t = [x_t^\T, u_t^\T]^\T$, the process $\{\nu_t\}_{t \geq 0}$ is bounded away from zero. Moreover, we show that the expected regret at time $T$ is a strongly convex function of $\nu_0$, $\nu_1$, \ldots, $\nu_{T - 1}$, leading us to the desired conclusion. We proceed by proving a sequence of technical result, followed by the proof of Theorem~\ref{thm:lower_bound}.

\begin{lemma}
\label{lem:perturbation}.
Suppose that
\begin{align}
\lambda_{\min}\left(\sum_{t = 0}^{T - 1} \begin{bmatrix}
x_t \\
u_t
\end{bmatrix} \begin{bmatrix}
x_t^\top &
u_t^\top
\end{bmatrix}
\right) \geq \tau,
\end{align}
with $u_t = \trueK x_t + \nu_t$. Then
\begin{align}
\sum_{t = 0}^{T - 1} \|\nu_t\|_{2}^2 \geq \left(1 + \sigma_{\min}(\trueK)^2\right)\tau
\end{align}
\end{lemma}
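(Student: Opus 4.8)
The plan is to reinterpret the minimum‑eigenvalue hypothesis as a uniform positive‑semidefinite (PSD) lower bound on the Gram matrix and then push it through the linear map that recovers $\nu_t$ from the stacked vector $z_t = \rvectwo{x_t^\T}{u_t^\T}^\T$. Writing $G := \sum_{t=0}^{T-1} z_t z_t^\T$, the assumption $\lambda_{\min}(G) \geq \tau$ is exactly the statement $G \succeq \tau I_{\statedim + \inputdim}$. Since $u_t = \trueK x_t + \nu_t$, we have $\nu_t = \rvectwo{-\trueK}{I}\, z_t =: P z_t$ with $P \in \R^{\inputdim \times (\statedim + \inputdim)}$, so that the quantity of interest can be written as a single trace, $\sum_{t=0}^{T-1} \ltwonorm{\nu_t}^2 = \sum_{t=0}^{T-1} z_t^\T P^\T P z_t = \Tr(P^\T P\, G)$, where $P^\T P \succeq 0$.

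The key step is then the trace inequality for PSD matrices. Because $G - \tau I \succeq 0$ and $P^\T P \succeq 0$, the trace of their product is nonnegative, which gives $\Tr(P^\T P\, G) \geq \tau \, \Tr(P^\T P)$. A direct computation yields $\Tr(P^\T P) = \norm{P}_F^2 = \norm{\trueK}_F^2 + \inputdim$, since the block $-\trueK$ contributes $\norm{\trueK}_F^2$ and the identity block $I_\inputdim$ contributes $\inputdim$. Combining these gives the intermediate bound $\sum_{t=0}^{T-1}\ltwonorm{\nu_t}^2 \geq \tau\,(\inputdim + \norm{\trueK}_F^2)$.

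To recover the stated form, I would close with the elementary estimates $\inputdim \geq 1$ and $\norm{\trueK}_F^2 = \sum_i \sigma_i(\trueK)^2 \geq \sigma_{\min}(\trueK)^2$, which together yield $\inputdim + \norm{\trueK}_F^2 \geq 1 + \sigma_{\min}(\trueK)^2$ and hence the claim. There is no serious obstacle in this argument; the only points requiring (minor) care are justifying $\Tr(AB) \geq 0$ for PSD $A,B$ — which follows by writing $\Tr(AB) = \Tr(A^{1/2} B A^{1/2}) \geq 0$ — and observing that the natural bound $\tau(\inputdim + \norm{\trueK}_F^2)$ is in fact \emph{stronger} than the claimed $\tau(1 + \sigma_{\min}(\trueK)^2)$, so that the final reduction is deliberately lossy but produces the clean constant used downstream in Theorem~\ref{thm:lower_bound}.
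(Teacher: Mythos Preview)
Your proof is correct but follows a genuinely different route from the paper's. The paper argues pointwise: it selects a single unit vector $v = (v_1,v_2)$ in the null space of $[I\;\; \trueK^\T]$ (i.e.\ with $v_1 = -\trueK^\T v_2$), applies the hypothesis as $v^\T G v \geq \tau$, observes that $v^\T G v = \sum_t \ip{v_2}{\nu_t}^2 \leq \ltwonorm{v_2}^2 \sum_t \ltwonorm{\nu_t}^2$, and then bounds $\ltwonorm{v_2}^2 \leq (1+\sigma_{\min}(\trueK)^2)^{-1}$ via $1 = \ltwonorm{v_1}^2 + \ltwonorm{v_2}^2 \geq (\sigma_{\min}(\trueK)^2 + 1)\ltwonorm{v_2}^2$. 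You instead work ``on average'' over all such directions by conjugating the PSD inequality $G \succeq \tau I$ with $P = [-\trueK\;\; I]$ and taking a trace, which yields the stronger intermediate bound $\sum_t \ltwonorm{\nu_t}^2 \geq \tau(\inputdim + \norm{\trueK}_F^2)$ before specializing to the stated constant. Your approach is a bit slicker and proves more (the Frobenius--dimension constant dominates the singular-value constant); the paper's approach is perhaps more transparent in revealing \emph{which} direction certifies the bound and produces exactly the constant used downstream without a final weakening step.
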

\begin{proof}
Consider $v = [v_1^\top, v_2^\top]^\top \in \R^{n + p}$ such that $\|v\|_2 = 1$ and $v_1 + K^\top v_2 = 0$
(such $v$ exists because $[I, K^\top]$ is an $n \times (n + p)$ matrix and hence has a non-trivial null space).
Moreover, $\|v_2\|_2^2 \leq (1 + \sigma_{\min}(K_\star)^2)^{-1}$. Then, by assumption we have
\begin{align*}
\tau &\leq \sum_{t = 0}^{T - 1} (\langle x_t, v_1\rangle + \langle u_t, v_2\rangle )^2 = \sum_{t = 0}^{T - 1} (\langle x_t, v_1\rangle + \langle K x_t + \nu_t, v_2\rangle)^2 = \sum_{t = 0}^{T - 1} \langle \nu_t, v_2\rangle^2\\
&\leq  \|v_2\|^2_2 \sum_{t = 0}^{T - 1} \| \nu_t \|_2^2 \leq  \frac{1}{1 + \sigma_{\min}(K)^2} \sum_{t = 0}^{T - 1} \| \nu_t \|_2^2.
\end{align*}
\end{proof}

\begin{lemma}
\label{lem:cancel}
Denote by $M$ the optimal closed loop matrix $\trueA + \trueB \trueK$. Then
\begin{align*}
J_T(\nu_0, \nu_1, \ldots, \nu_{T - 1}) - J_T^\star &= \E \left[\sum_{j = 0}^{T - 1} \nu_j^\top (\trueB^\top P_\star \trueB + R)\nu_j \right] \\
&\quad + 2\E \left[\sum_{0 \leq i < j \leq T - 1} \nu_i^\top \trueB^\top (M^\top)^{j - i} P_\star \trueB \nu_j \right].
\end{align*}
\end{lemma}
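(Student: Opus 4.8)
The plan is to expand $J_T(\nu_0,\ldots,\nu_{T-1})-J_T^\star$ directly in the $\nu_t$'s. Substituting $u_t=\trueK x_t+\nu_t$ into the dynamics collapses the closed loop to $x_{t+1}=Mx_t+\trueB\nu_t+w_t$ with $M=\trueA+\trueB\trueK$. I would then split $x_t=\bar x_t+e_t$, where $\bar x_t$ is the nominal ($\nu\equiv0$) trajectory driven only by $x_0$ and the $w_s$, and $e_t$ is the deviation. The deviation satisfies $e_{t+1}=Me_t+\trueB\nu_t$ with $e_0=0$, so $e_t=\sum_{s=0}^{t-1}M^{t-1-s}\trueB\nu_s$ is a deterministic function of the $\nu$'s alone (the noise cancels in the difference), and the input deviation is $u_t-\trueK\bar x_t=\trueK e_t+\nu_t$. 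Plugging $x_t=\bar x_t+e_t$ into the quadratic cost and subtracting $J_T^\star=J_T(0)$ leaves, by the quadratic structure, a part that is purely quadratic in $(e_t,\nu_t)$ plus a part bilinear in the nominal trajectory $\bar x_t$ and the deviations.

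Next I would show the bilinear part has zero expectation. It is a sum of terms $\E[\bar x_t^\top(\cdot)\nu_s]$ and $\E[\bar x_t^\top(\cdot)\trueK e_s]$, and since $x_0$ and each $w_s$ are zero mean while every $\nu_s$ is independent of $\sigma(w_s,w_{s+1},\ldots)$, these collapse once one also invokes the first-order optimality identity $R\trueK+\trueB^\top P_\star M=0$ (equivalently $\trueK^\top R+M^\top P_\star\trueB=0$) that characterizes $\trueK$. I expect this to be the main obstacle: the statement must hold for \emph{adaptive} $\nu_s$ that may depend on $x_0$ and on the past noise $w_0,\ldots,w_{s-1}$, so the vanishing of the noise/initial-state cross terms cannot be argued by naive independence and must be handled through the filtration structure $\nu_s\perp\sigma(w_s,w_{s+1},\ldots)$.

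What remains is the deterministic quadratic form $\E\big[e_T^\top P_\star e_T+\sum_{t=0}^{T-1}\big(e_t^\top Q e_t+(\trueK e_t+\nu_t)^\top R(\trueK e_t+\nu_t)\big)\big]$, into which I would substitute $e_t=\sum_{s<t}M^{t-1-s}\trueB\nu_s$ and read off the coefficient of $\nu_i^\top(\cdot)\nu_j$. The two discrete-Riccati relations carry the computation: $P_\star=Q+\trueK^\top R\trueK+M^\top P_\star M$ lets the running state/input penalties telescope against the propagation of $e_t$ through $M$, reducing each diagonal term to $\trueB^\top P_\star\trueB+R$, while the terminal penalty $e_T^\top P_\star e_T$ together with the surviving inter-temporal couplings records how a perturbation injected at time $i$ is carried forward by $M^{j-i}$ and charged against $P_\star$, producing the off-diagonal coefficient $\trueB^\top(M^\top)^{j-i}P_\star\trueB$. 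Collecting these gives the claimed identity. Throughout I would use only $\rho(M)<1$ to ensure the relevant sums and the stationary quantities $P_\infty$ and $J_\star=\sigma_w^2\Tr(P_\star)$ are well defined.
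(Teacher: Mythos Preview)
Your plan is correct and essentially the same as the paper's. The paper writes $x_t$ directly as a superposition of the noise/initial-state contributions and the $\nu$-contributions (your $\bar x_t+e_t$ split), uses $\nu_s\perp\sigma(w_s,w_{s+1},\ldots)$ to kill the cross terms with future noise, then invokes exactly the optimality argument you flag---``the sum of the terms that depend linearly on $\nu_t$ is equal to zero, otherwise the optimum of $J_T$ would not be achieved at $\nu_t=0$''---together with $\trueK^\top R=-M^\top P_\star\trueB$ and $P_\star=M^\top P_\star M+Q+\trueK^\top R\trueK$ to eliminate the remaining bilinear terms and to telescope the quadratic part. The obstacle you anticipated (adaptive $\nu_s$ depending on $x_0$ and past noise) is handled in the paper precisely by this first-order optimality argument, not by independence alone.
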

\begin{proof}
We know that
\begin{align*}
J_T^* = \E \left[\sum_{t = 0}^{T - 1} x_{\star,t}^\top (Q + \trueK ^\top R \trueK)x_{\star, t} \right] + \E \left[x_{\star, T}^\top P_\star x_{\star, T}\right],
\end{align*}
where $x_{\star, t} = \sum_{j = -1}^{t - 1} M^{t - 1 - j} w_j$. Here, $w_{-1} = x_0$ for convenience, and $w_t \iid \calN(0, \sigma_w^2 I_\statedim)$ for convenience. Also,
\begin{align*}
J_T(\nu_0, \nu_1, \ldots, \nu_{T - 1}) = \E \left[\sum_{t = 0}^{T - 1} x_{t}^\top Q x_{t} + (\trueK x_t + \nu_t)^\top R (\trueK x_t + \nu_t) \right] + \E \left[x_{T}^\top P_\star x_{T}\right],
\end{align*}
where $x_{t} = \sum_{j = -1}^{t - 1} M^{t - 1 - j} w_j + M^{t - 1 - j} B \nu_j$ and $\nu_{-1} = 0$. Recall that $\nu_t$ is independent of any $w_i$ with $i \geq t$. Hence, for any matrix $N$ we have that $\E \left[ w_i^\top N \nu_t\right] = 0$ if $i \geq t$. Therefore

\begin{align*}
J_T - J_T^\star &= \E \left[ \sum_{t = 0}^{T - 1} \sum_{0 \leq i < j \leq t - 1} 2 w_i^\top (M^\top)^{t - 1 - i}(Q + \trueK^\top R \trueK) M^{t - 1 - j} \trueB \nu_j \right]\\
&\qquad+ \E \left[ \sum_{t = 0}^{T - 1} \sum_{i, j = 0}^{t - 1} \nu_i^\top \trueB^\top (M^\top)^{t - 1 - i} (Q + \trueK^\top R \trueK) M^{t - 1 - j} \trueB \nu_j \right] + \E \left[ \sum_{t = 0}^{T - 1} \nu_t^\top R \nu_t\right]\\
&\qquad+   \E \left[ \sum_{t = 0}^{T - 1} \sum_{i = 0}^{t - 1} 2 w_i^\top (M^\top)^{t - 1 - i} \trueK^\top R \nu_t \right]\\
&\qquad+ \E \left[\sum_{0 \leq i < j \leq T - 1} 2 w_i^\top (M^\top)^{T - 1 - i} P_\star M^{T - 1 - j} \trueB \nu_j \right]\\
&\qquad+ \E \left[\sum_{i,j = 0}^{T - 1} \nu_i^\top \trueB^\top (M^\top)^{T - 1 - i} P_\star M^{T - 1 - j} \trueB \nu_j \right].
\end{align*}

Now, we note that the sum of the terms that depend linearly on $\nu_t$ is equal to zero, otherwise
the optimum of $J_T$ would not be achieved at $\nu_t = 0$ for all $t$. Indeed, this can be checked through direct computation by remarking that
the optimal controller $\trueK$ satisfies $\trueK^\top R = - M^\top P_\star \trueB$, and recalling that $P_\star$ satisfies the Lyapunov equation
\begin{align}
\label{eq:lyp}
P_\star = M^\T P_\star M + Q + \trueK^\T R \trueK.
\end{align}
Hence, we have
\begin{align*}
J_T - J_T^\star &=  \E \left[\sum_{j = 0}^{T - 2} \nu_j^\top \trueB^\top \left(\sum_{t = j + 1}^{T - 1} (M^\top)^{t - 1 - j} (Q + K^\top R K) M^{t - 1 - j}\right) \trueB \nu_j \right] \\
&\qquad+ 2\E \left[\sum_{0 \leq i < j \leq T - 2} \nu_i^\top \trueB^\top (M^\top)^{j - i} \left(\sum_{t = j + 1}^{T - 1} (M^\top)^{t - 1 - j} (Q + \trueK^\top R \trueK) M^{t - 1 - j}\right) \trueB \nu_j \right]\\
&\qquad+ \E \left[ \sum_{t = 0}^{T - 1} \nu_t^\top R \nu_t\right] + \E \left[\sum_{j = 0}^{T - 1} \nu_j^\top \trueB^\top (M^\top)^{T - 1 - j} P_\star M^{T - 1 - j} \trueB \nu_j \right]\\
&\qquad+ 2\E \left[\sum_{0 \leq i < j \leq T - 1} \nu_i^\top \trueB^\top (M^\top)^{T - 1 - i} P_\star M^{T - 1 - j} \trueB \nu_j \right] \:.
\end{align*}
The conclusion follows  by using the Lyapunov equation~\eqref{eq:lyp} and simple algebra.
\end{proof}

\begin{lemma}
\label{lem:psd_helper}
Let $M$ and $N$ be any matrices in $\R^{n \times n}$, with $N$ positive definite, and let $T$ be any positive integer. Also, consider the $(nT) \times (nT)$ block matrix $D(T)$
 with blocks $D(T)_{i,j}$ equal to
\begin{align*}
D_{i,j} = \left\{
\begin{array}{cc}
(M^\top)^{j - i} \left(\sum_{k = 0}^{T - j} (M^\top)^k N M^k \right) & \text{if } i < j,\\
\sum_{k = 0}^{T - j} (M^\top)^k N M^k & \text{if } i = j,\\
\left(\sum_{k = 0}^{T - i} (M^\top)^k N M^k\right) M^{i - j} & \text{if } i < j,
\end{array}
\right.
\end{align*}
where $1 \leq i, j\leq T$. The matrix $D$ is positive definite.
\end{lemma}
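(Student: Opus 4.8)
The plan is to exhibit a congruence (Cholesky-type) factorization $D(T) = L^\top\,\mathrm{blkdiag}(N)\,L$ through an invertible block-triangular matrix $L$, which reduces positive definiteness of $D(T)$ to that of $N$. First I would observe that the three cases in the definition of the blocks $D_{i,j}$ (the case labeled ``$i<j$'' in the third line is clearly a typo for $i>j$, as it must be the transpose of the first case so that $D$ is symmetric) collapse into the single uniform expression
\begin{align*}
D_{i,j} = \sum_{t = \max(i,j)}^{T} (M^\top)^{t - i}\, N\, M^{t - j} \:, \qquad 1 \le i,j \le T \:.
\end{align*}
Indeed, for $i=j$ the substitution $k=t-i$ gives $\sum_{k=0}^{T-i}(M^\top)^k N M^k$; for $i<j$, substituting $k=t-j$ and pulling $(M^\top)^{j-i}$ out on the left recovers $(M^\top)^{j-i}\sum_{k=0}^{T-j}(M^\top)^k N M^k$; and $i>j$ follows by symmetry.

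Next I would define the $(nT)\times(nT)$ block matrix $L$ with blocks $L_{t,j} = M^{t-j}\,\ind[t \ge j]$ (so $M^0 = I$ sits on the diagonal and $L$ is block-lower-triangular), and verify by direct computation that
\begin{align*}
(L^\top\,\mathrm{blkdiag}(N)\,L)_{i,j} = \sum_{t=1}^{T} (M^{t-i})^\top N\, M^{t-j}\,\ind[t\ge i]\,\ind[t \ge j] = \sum_{t=\max(i,j)}^{T}(M^\top)^{t-i} N M^{t-j} = D_{i,j}\:.
\end{align*}
Hence $D(T) = L^\top\,\mathrm{blkdiag}(N)\,L$. Since $L$ is unit block-lower-triangular it is invertible, and since $N \succ 0$ the block-diagonal matrix $\mathrm{blkdiag}(N)$ is positive definite; congruence by an invertible matrix preserves positive definiteness, so $D(T) \succ 0$.

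Equivalently, and in a form more suggestive for the downstream strong-convexity argument, this factorization states that for any $v = (v_1^\top,\dots,v_T^\top)^\top$ one has $v^\top D(T) v = \sum_{t=1}^{T} \ltwonorm{N^{1/2}\sum_{j=1}^{t} M^{t-j} v_j}^2$; that is, $D(T)$ is the Gram operator of the triangular convolution $v \mapsto (\sum_{j\le t} M^{t-j} v_j)_{t}$ weighted by $N$. Nonnegativity is then immediate, and strict definiteness follows because this convolution (unit diagonal) is injective: $v^\top D(T)v = 0$ forces $\sum_{j\le t} M^{t-j} v_j = 0$ for every $t$, which yields $v_1=0$ and then $v_t=0$ by induction on $t$.

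The only genuine obstacle is bookkeeping: confirming that the piecewise block definition collapses to the single sum $\sum_{t=\max(i,j)}^{T}(M^\top)^{t-i} N M^{t-j}$, which is just a reindexing by $k = t-\max(i,j)$ together with extracting the appropriate power of $M^\top$ (resp.\ $M$). Once this unified formula is in hand, the factorization $D(T)=L^\top\,\mathrm{blkdiag}(N)\,L$ and the conclusion $D(T)\succ 0$ are both immediate.
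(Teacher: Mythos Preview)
Your proof is correct, and it takes a different route from the paper's. The paper argues by induction on $T$: it computes the Schur complement of $D(T)$ with respect to the bottom--right block $D(T)_{T,T}=N$ and verifies block--by--block that this Schur complement is exactly $D(T-1)$, with the base case $T=2$ checked directly. You instead write down the full block $LDL^\top$ factorization in one stroke, $D(T)=L^\top\,\mathrm{blkdiag}(N)\,L$ with $L_{t,j}=M^{t-j}\ind[t\ge j]$, and read off positive definiteness from the invertibility of the unit block--lower--triangular $L$. The two arguments are closely related---iterated Schur complementation \emph{is} block Gaussian elimination, and the paper's recursion $D(T)\mapsto D(T-1)$ is precisely one step of building your $L$ from the last column inward---but your version is more economical: it avoids the induction entirely, makes the Gram--operator structure $v^\top D(T)v=\sum_t\ltwonorm{N^{1/2}\sum_{j\le t}M^{t-j}v_j}^2$ explicit, and yields the factorization (not just positive definiteness) as a byproduct. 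Your collapse of the three cases into the single formula $D_{i,j}=\sum_{t=\max(i,j)}^{T}(M^\top)^{t-i}NM^{t-j}$ is the key observation and is correct, as is your remark that the third case is a typo for $i>j$.
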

\begin{proof}
We proceed by induction. Let $T = 2$. Then the matrix of interest is
\begin{align*}
D(2) = \begin{bmatrix}
N + M^\top N M & M^\top N\\
N M & N
\end{bmatrix}.
\end{align*}
Since $N \succ 0$, we see that $D(T)$ is positive definite because its Schur complement is
\begin{align*}
N + M^\top N M - M^\top N N^{-1} N M = N \succ 0.
\end{align*}
For $T > 2$ we proceed similarly. We consider the matrix $D(T)$ and take its Schur complement
with respect to bottom right corner, i.e.
\begin{align*}
\begin{bmatrix}
D(T)_{1,1} & \ldots & D(T)_{1, T - 1}\\
\vdots & \ddots & \vdots \\
D(T)_{T - 1, 1} & \ldots & D(T)_{T - 1, T - 1}\\
\end{bmatrix}
 - \begin{bmatrix}
D(T)_{1, T}\\
\vdots \\
D(T)_{T - 1, T}
\end{bmatrix}
D(T)_{T, T}^{-1}
\begin{bmatrix}
D_{T, 1} & \ldots & D_{T, T - 1}
\end{bmatrix}
\end{align*}
Let $i \leq j < T$. Then, the $(i,j)$ block of the Schur complement of $D(T)$ is
\begin{align*}
D(T)_{i,j} - D(T)_{i, T} D(T)^{-1} D(T)_{T, j} &= (M^\top)^{j - i} \left(\sum_{k = 0}^{T - j} (M^\top)^k N M^k \right) - (M^\top)^{T - i} N N^{-1} N M^{T - j}\\
&= (M^\top)^{j - i} \left(\sum_{k = 0}^{T - 1 - j} (M^\top)^k N M^k \right) = D(T - 1)_{i, j}.
\end{align*}
Similarly, if $j \leq i < T$ we have that $D(T)_{i,j} - D(T)_{i, T} D(T)^{-1} D(T)_{T, j} = D(T - 1)_{i,j}$. Hence, we have shown that the Schur complement
of $D(T)$ with respect to the entry $D(T)_{T, T}$ is $D(T - 1)$. By induction this matrix is positive definite and the conclusion follows.
\end{proof}

\begin{lemma}
\label{lem:psd}
As before, $P_\star$ is the solution to the algebraic Riccati equation and $M = \trueA + \trueB \trueK$ is the optimal closed loop matrix. For any vectors $v_0$, $v_1$, \ldots, $v_{T - 1}$ in $\R^p$ we have
\begin{align*}
\sum_{j = 0}^{T - 1} v_j^\top (\trueB^\top P_\star \trueB + R) v_j + 2\sum_{0 \leq i < j \leq T - 1} v_i^\top B^\top (M^\top)^{j - i} P_\star \trueB v_j \geq \lambda_{\min}(R)\sum_{j= 0}^{T - 1}\|v_j\|_2^2.
\end{align*}
\end{lemma}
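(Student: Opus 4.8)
The plan is to regard the left-hand side as a quadratic form in the stacked vector $v = [v_0^\top, \dots, v_{T-1}^\top]^\top$ and to split off the contribution of $R$. Writing $u_j := \trueB v_j$ and $u = [u_0^\top, \dots, u_{T-1}^\top]^\top$, the left-hand side equals
\begin{align*}
\sum_{j=0}^{T-1} v_j^\top R v_j + u^\top W u \:,
\end{align*}
where $W$ is the symmetric $(\statedim T) \times (\statedim T)$ block matrix whose blocks are $W_{ij} = (M^\top)^{j-i} P_\star$ for $i \le j$ and $W_{ij} = P_\star M^{i-j}$ for $i > j$. The first sum is bounded below by $\lambda_{\min}(R) \sum_{j} \ltwonorm{v_j}^2$, so the entire claim reduces to showing that $W \succeq 0$.

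To establish $W \succeq 0$ I would use the series form of $P_\star$. Since $(\trueA, \trueB)$ is controllable, $M$ is stable with $\norm{M^k} \le C \rho^k$, so iterating the Lyapunov equation \eqref{eq:lyp} yields the absolutely convergent expansion
\begin{align*}
P_\star = \sum_{k=0}^\infty (M^\top)^k S M^k \:, \qquad S := Q + \trueK^\top R \trueK \succ 0 \:,
\end{align*}
where $S \succ 0$ because $Q \succ 0$. Introducing the trajectory $\xi_t := \sum_{i=0}^{\min(t, T-1)} M^{t-i} u_i$ (the state at time $t$ of $\xi_{t+1} = M \xi_t + u_{t+1}$ driven by the inputs $u_i$), I would substitute this expansion into each block of $W$ and re-index the resulting triple sum to obtain
\begin{align*}
u^\top W u = \sum_{t=0}^\infty \xi_t^\top S \xi_t \ge 0 \:,
\end{align*}
the inequality being immediate from $S \succ 0$.

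The crux is the re-indexing: for each pair $i \le j$ one collects the terms of $\sum_{t \ge j} \xi_t^\top S \xi_t$ and applies the change of variable $k = t - j$, which turns $\sum_{k \ge 0} (M^\top)^{k + (j-i)} S M^k$ into $(M^\top)^{j-i} P_\star$; the geometric decay $\norm{M^k} \le C \rho^k$ guarantees convergence, and the factor $2$ on the off-diagonal in the statement is exactly the $W_{ij} = W_{ji}^\top$ symmetry. I expect this bookkeeping to be the only delicate point. If one prefers to avoid the infinite sum, an alternative is to invoke Lemma~\ref{lem:psd_helper} with $N = S$: it gives $D(T) \succ 0$, and a direct computation shows $(W - D(T))_{ij} = (M^\top)^{T-i+1} P_\star M^{T-j+1} = a_i^\top a_j$ with $a_i := P_\star^{1/2} M^{T-i+1}$, so $W - D(T)$ is a Gram matrix and hence positive semidefinite. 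Either way $W \succeq 0$, which combined with the $R$-term proves the lemma.
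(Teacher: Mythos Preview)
Your proposal is correct. The overall reduction---split off the $\sum_j v_j^\top R v_j$ term and show that the block matrix $W$ with $W_{ij} = (M^\top)^{j-i}P_\star$ (for $i\le j$) is positive semidefinite---is exactly what the paper does. Where you differ is in the argument for $W\succeq 0$. The paper takes the Schur complement of $W$ with respect to its bottom-right block $P_\star$; one application of the Lyapunov identity $P_\star - M^\top P_\star M = Q + \trueK^\top R\trueK$ shows that this Schur complement is precisely $D(T-1)$ with $N = Q + \trueK^\top R\trueK$, and Lemma~\ref{lem:psd_helper} finishes. Your alternative route, $W = D(T) + \text{Gram}$, is an additive version of the same manipulation and is equally valid; the computation $(W-D(T))_{ij} = (M^\top)^{T-i+1}P_\star M^{T-j+1}$ checks out from the series form of $P_\star$. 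Your primary route---expanding $P_\star = \sum_{k\ge 0}(M^\top)^k S M^k$ and recognizing $u^\top W u = \sum_{t\ge 0}\xi_t^\top S \xi_t$---is a genuinely more elementary argument: it bypasses Lemma~\ref{lem:psd_helper} entirely and makes the positivity transparent as a sum of squares. The re-indexing you flag is routine once one observes that the $(i,j)$ contribution appears exactly for $t\ge \max(i,j)$.
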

\begin{proof}
It suffices to prove that the following matrix is positive semi-definite:
\begin{align*}
\begin{bmatrix}
P_\star & M^\top P_\star & (M^\top)^2 P_\star & \ldots & (M^\top)^{T - 1} P_\star\\
P_\star M & P_\star & M^\top P_\star &\ldots & (M^\top)^{T - 2} P_\star\\
P_\star M^2 & P_\star M &  P_\star &\ldots & (M^\top)^{T - 2} P_\star \\
\vdots & \vdots & \vdots & \ddots & \vdots \\
P_\star M^{T - 1} & P_\star M^{T - 2} & P_\star M^{T - 3} & \ldots & P_\star
\end{bmatrix}.
\end{align*}
The Schur complement of this matrix around the bottom right corner $P_\star$ has the form $D(T - 1)$ with $N = Q + \trueK^\T R \trueK$, where $D(T - 1)$ is defined as in Lemma~\ref{lem:psd_helper}. To see this recall that $P_\star$ satisfies the Lyapunov equation~\eqref{eq:lyp}. The conclusion follows.
\end{proof}

\begin{lemma}
\label{lem:state_bound_2}
Fix a horizon $T_0 > 0$, and suppose the inputs are of the form $u_t = \trueK x_t + \nu_t$. Recall that there exists constants $C > 0$ and $\rho \in (0,1)$ such that $\norm{M^k} \leq C \rho^k$ for all $k \geq 1$. Then
\begin{align*}
\E \ltwonorm{x_{T_0}}^2 \leq 3 C^2 \rho^{2T_0} \E \ltwonorm{x_0}^2 + 3\frac{ \statedim \sigma_w^2 C^2}{1 - \rho^2} + 3 \frac{C^2}{1 - \rho^2} \E \left[\sum_{t = 0}^{T_0} \ltwonorm{\nu_t}^2\right].
\end{align*}
\end{lemma}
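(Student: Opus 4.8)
The plan is to unroll the closed-loop recursion and control the three sources of growth---the initial condition, the process noise, and the perturbations $\nu_t$---separately. Writing $M = \trueA + \trueB\trueK$ and substituting $u_t = \trueK x_t + \nu_t$ into the dynamics gives $x_{t+1} = M x_t + \trueB \nu_t + w_t$, so that
\begin{align*}
x_{T_0} = M^{T_0} x_0 + \sum_{t=0}^{T_0 - 1} M^{T_0 - 1 - t}\bigl(\trueB \nu_t + w_t\bigr) \:.
\end{align*}
I would then split the right-hand side into the three groups $M^{T_0}x_0$, $\sum_t M^{T_0-1-t} w_t$, and $\sum_t M^{T_0-1-t}\trueB\nu_t$, and apply the elementary inequality $\ltwonorm{a+b+c}^2 \leq 3(\ltwonorm{a}^2 + \ltwonorm{b}^2 + \ltwonorm{c}^2)$; this is exactly the source of the factor $3$ in front of each term of the claimed bound.

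The initial-condition term is immediate: $\E\ltwonorm{M^{T_0}x_0}^2 \leq \norm{M^{T_0}}^2 \E\ltwonorm{x_0}^2 \leq C^2\rho^{2T_0}\E\ltwonorm{x_0}^2$ by $\norm{M^k}\leq C\rho^k$. For the process-noise term I would use that the $w_t$ are i.i.d.\ zero-mean with covariance $\sigma_w^2 I_\statedim$, so that all cross terms vanish in expectation and
\begin{align*}
\E\bigtwonorm{\sum_{t=0}^{T_0-1}M^{T_0-1-t}w_t}^2 = \sigma_w^2\sum_{j=0}^{T_0-1}\norm{M^j}_F^2 \leq \sigma_w^2 \statedim C^2\sum_{j=0}^{T_0-1}\rho^{2j} \leq \frac{\statedim \sigma_w^2 C^2}{1-\rho^2}\:,
\end{align*}
where I pass from the operator norm to the Frobenius norm via $\norm{M^j}_F^2 \leq \statedim\norm{M^j}^2$ (this is where the dimension factor $\statedim$ enters), taking $C \geq 1$ without loss so that the $j=0$ term is covered as well.

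For the perturbation term I would argue pathwise: by the triangle inequality, $\norm{M^k}\leq C\rho^k$, and $\ltwonorm{\trueB\nu_t}\leq\norm{\trueB}\ltwonorm{\nu_t}$, the norm is at most $C\norm{\trueB}\sum_{j=0}^{T_0-1}\rho^{j}\ltwonorm{\nu_{T_0-1-j}}$, and then Cauchy--Schwarz against the weights $(\rho^j)$ gives
\begin{align*}
\Bigl(\sum_{j=0}^{T_0-1}\rho^{j}\ltwonorm{\nu_{T_0-1-j}}\Bigr)^2 \leq \Bigl(\sum_{j=0}^{T_0-1}\rho^{2j}\Bigr)\Bigl(\sum_{t=0}^{T_0-1}\ltwonorm{\nu_t}^2\Bigr) \leq \frac{1}{1-\rho^2}\sum_{t=0}^{T_0-1}\ltwonorm{\nu_t}^2 \:.
\end{align*}
Taking expectations yields $\frac{C^2\norm{\trueB}^2}{1-\rho^2}\E\sum_{t}\ltwonorm{\nu_t}^2$, matching the third term of the claim (with the $\norm{\trueB}^2$ factor suppressed in the statement, or equivalently absorbed into $C$).

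Combining the three pieces through the factor $3$, and harmlessly enlarging the $\nu$-sum to run through $T_0$, gives the stated inequality. I do not expect a genuine obstacle here; the two points requiring care are (i) that the process-noise contribution must be evaluated as a variance so that the cross terms actually cancel---a crude triangle-inequality bound would lose the sharp $\sqrt{\cdot}$ scaling and the clean $1/(1-\rho^2)$ factor---and (ii) that the Cauchy--Schwarz split for the $\nu$ term be chosen so that \emph{both} resulting geometric series are in $\rho^2$, which is what produces the tight $1/(1-\rho^2)$ coefficient rather than the weaker $1/(1-\rho)^2$.
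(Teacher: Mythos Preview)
Your proof is correct and follows exactly the same route as the paper: unroll the recursion, split via $\ltwonorm{a+b+c}^2 \leq 3(\ltwonorm{a}^2+\ltwonorm{b}^2+\ltwonorm{c}^2)$, and bound the three pieces by the operator-norm decay, the independence of the $w_t$, and Cauchy--Schwarz respectively. You are also right to flag the missing $\norm{\trueB}^2$ factor in the perturbation term---the paper's statement and proof silently drop it, so your treatment is in fact more careful than the original.
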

\begin{proof}
Recall that we denote by $M$ the closed loop matrix $\trueA + \trueB \trueK$. We have that
\begin{align*}
x_{T_0} = M^{T_0} x_0 + \sum_{t = 0}^{T_0 - 1} M^{T_0 - 1 - t} (\trueB \nu_t + w_t).
\end{align*}
Then
\begin{align*}
\ltwonorm{x_{T_0}}^2 \leq 3 \ltwonorm{M^{T_0} x_0}^2 + 3 \ltwonorm{\sum_{t = 0}^{T_0 - 1} M^{T_0 - 1 - t} w_t}^2 + 3 \ltwonorm{\sum_{t = 0}^{T_0 - 1} M^{T_0 - 1 - t} B\nu_t}^2.
\end{align*}

Recall that $\norm{M^{t}} \leq C \rho^t$. Then
\begin{align*}
\E \ltwonorm{x_{T_0}}^2 \leq 3 C^2 \rho^{2T_0} \E \ltwonorm{x_0}^2 + 3\frac{ \statedim \sigma_w^2 C^2}{1 - \rho^2} + 3 \frac{C^2}{1 - \rho^2} \E \left[\sum_{t = 0}^{T_0} \ltwonorm{\nu_t}^2\right].
\end{align*}

\end{proof}

\begin{lemma}
\label{lem:conv_riccati}
Let $Q$ and $R$ be positive definite matrices, and $P_0 = 0$. Consider the Riccati recursion
\begin{align*}
P_{t + 1} = A^\T P_t A - A^\T P_t B (R + B^\T P_t B)^{-1} B^\T P_t A + Q.
\end{align*}
Then, if $P_\star$ is the unique solution of the Riccati equation, we have
\begin{align*}
\norm{P_t - P_\star} \leq \left(1 + \frac{1}{\nu} \right)^{- t}, \text{ where }\: \nu = 2\norm{P_\star}\max\left\{\frac{\norm{\trueA}^2}{\lambda_{\min}(Q)}, \frac{\norm{\trueB}^2}{\lambda_{\min}(R)} \right\}.
\end{align*}
Moreover, we have that
\begin{align*}
\sum_{t = 0}^\infty \Tr(P_t) - \Tr(P_\star) \geq  - \statedim \left(1 + \nu \right).
\end{align*}
\end{lemma}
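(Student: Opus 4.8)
The plan is to read $P_t$ as the value (cost-to-go) matrices of the finite-horizon LQR problem with zero terminal cost, for which value iteration is monotone and contracts geometrically toward $P_\star$; once a geometric bound on the gap $P_\star - P_t$ is in hand, both claims follow.

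\emph{Step 1 (monotonicity).} First I would show $0 = P_0 \preceq P_1 \preceq \cdots \preceq P_\star$, either by induction using operator-monotonicity of the Riccati map $\mathcal{R}(P) = A^\T P A - A^\T PB(R+B^\T PB)^{-1}B^\T PA + Q$, or directly from the value-function reading (extra stages add nonnegative cost, and every finite-horizon cost is dominated by the infinite-horizon cost). Writing $\Delta_t := P_\star - P_t \succeq 0$, this immediately gives $\Tr(P_t) - \Tr(P_\star) = -\Tr(\Delta_t) \le 0$, so the claimed sum inequality is the nontrivial (lower) direction, and the whole lemma reduces to an upper bound on $\Delta_t$. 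I would also record $P_1 = Q$, hence $P_t \succeq Q \succ 0$ for $t \ge 1$, so that $P_t$ is invertible with $\norm{P_t^{-1}} \le 1/\lambda_{\min}(Q)$.

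\emph{Step 2 (one-step contraction).} The heart of the proof is a bound $\Delta_{t+1} \preceq \tfrac{\nu}{\nu+1}\Delta_t$. I would obtain an exact gap recursion from the matrix-inversion (Woodbury) identity $PB(R+B^\T PB)^{-1}B^\T P = P - (P^{-1}+BR^{-1}B^\T)^{-1}$: substituting into $\mathcal{R}$ and using $\mathcal{R}(P_\star) = P_\star$, $\mathcal{R}(P_t) = P_{t+1}$ collapses the difference to $\Delta_{t+1} = A^\T(\tilde P_\star - \tilde P_t)A$, where $\tilde P := (P^{-1}+BR^{-1}B^\T)^{-1}$, and then $\tilde P_\star - \tilde P_t = \tilde P_\star(P_t^{-1}-P_\star^{-1})\tilde P_t = \tilde P_\star P_t^{-1}\Delta_t P_\star^{-1}\tilde P_t$. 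The open-loop factor $A^\T(\cdot)A$ contributes $\norm{A}^2$, the bound $\norm{P_t^{-1}} \le 1/\lambda_{\min}(Q)$ then contributes $\norm{A}^2/\lambda_{\min}(Q)$, while the saturation of the map $P \mapsto \tilde P$ (governed by $BR^{-1}B^\T$, hence $\norm{B}^2/\lambda_{\min}(R)$) together with $\tilde P_\star \preceq P_\star$ supplies the extra contraction that keeps the modulus below $1$ even when $A$ is expanding; carefully combining these produces the stated $\nu = 2\norm{P_\star}\max\{\norm{A}^2/\lambda_{\min}(Q), \norm{B}^2/\lambda_{\min}(R)\}$. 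As a complementary check, playing the suboptimal gains $\trueK$ and $K_t$ in the two variational characterizations also yields $M^\T \Delta_t M \preceq \Delta_{t+1} \preceq M_t^\T \Delta_t M_t$, with $M = A+B\trueK$ and $\norm{M_t}^2 \le \norm{P_\star}/\lambda_{\min}(Q)$, exposing the same combination of constants. Iterating the contraction from $\Delta_0 = P_\star$ gives $\norm{\Delta_t} \le \norm{P_\star}(1+1/\nu)^{-t}$.

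\emph{Step 3 (trace bound) and the main obstacle.} The trace statement is then immediate: $\Tr(\Delta_t) \le \statedim\,\norm{\Delta_t} \le \statedim(1+1/\nu)^{-t}$, and summing the geometric series $\sum_{t\ge 0}(1+1/\nu)^{-t} = 1+\nu$ gives $\sum_{t\ge 0}\Tr(\Delta_t) \le \statedim(1+\nu)$, i.e. $\sum_{t\ge0}\Tr(P_t)-\Tr(P_\star) \ge -\statedim(1+\nu)$. The main obstacle is Step 2: turning the exact recursion $\Delta_{t+1}=A^\T(\tilde P_\star-\tilde P_t)A$ into the clean modulus $\nu$ requires balancing the possibly expanding $\norm{A}^2$ factor against the contraction of the Woodbury map $\tilde{\cdot}$, which tightens precisely as $P_\star$ grows; the $\max$ over the $Q$- and $R$-normalized quantities and the factor $2$ come out of this balance. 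I would also flag one bookkeeping point: the iteration carries the prefactor $\norm{\Delta_0}=\norm{P_\star}$, so the bound I actually establish is $\norm{\Delta_t}\le\norm{P_\star}(1+1/\nu)^{-t}$, and the stated forms $(1+1/\nu)^{-t}$ and $-\statedim(1+\nu)$ are the normalized ($\norm{P_\star}$-absorbed) versions.
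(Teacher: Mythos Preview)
The paper does not prove the first inequality at all: it cites Proposition~1 of Lincoln and Rantzer (2006) on value iteration and moves on. The second inequality is then derived exactly as you do in Step~3, via $\Tr(P_t)-\Tr(P_\star) \ge -\statedim\norm{P_t-P_\star} \ge -\statedim(1+1/\nu)^{-t}$ and summing the geometric series. So your Steps~1 and~3 are fine and match the paper; the observation that the stated bound is missing a $\norm{P_\star}$ prefactor at $t=0$ is also well taken.

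The gap is Step~2, which is where all the content lies and which you yourself flag as the main obstacle. Neither of the two routes you sketch actually delivers a contraction modulus below~$1$. From the Woodbury recursion $\Delta_{t+1} = A^\T \tilde P_\star P_t^{-1}\Delta_t P_\star^{-1}\tilde P_t A$, taking operator norms termwise produces a factor of order $\norm{\trueA}^2\norm{P_\star}^2/\lambda_{\min}(Q)^2$, which is generically much larger than~$1$. Your sandwich $\Delta_{t+1} \preceq M_t^\T \Delta_t M_t$ with $\norm{M_t}^2 \le \norm{P_\star}/\lambda_{\min}(Q)$ suffers the same problem, since $P_\star \succeq Q$ forces that ratio to be at least~$1$. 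The phrase ``carefully combining these produces the stated $\nu$'' is carrying the entire argument. The Lincoln--Rantzer proof does not bound $\norm{\Delta_t}$ additively step by step; it works \emph{multiplicatively}, tracking a scalar $\alpha_t$ with $P_t \preceq P_\star \preceq \alpha_t P_t$ and showing that $\alpha_{t+1}-1$ shrinks by the factor $\nu/(\nu+1)$ (this is where the one-step Bellman improvement, not a crude norm bound on $M_t$, enters). Your saturation heuristic for $P \mapsto \tilde P$ points in the right direction, but as written the concrete inequalities you record do not close the loop.
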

\begin{proof}
The first part follows from Proposition 1 of \citet{lincoln2006relaxing} on value iteration. The second part follows
by bounding
\begin{align*}
\Tr(P_t) - \Tr(P_\star) \geq - \statedim \norm{P_t - P_\star} \geq - \statedim \left(1 + \frac{1}{\nu} \right)^{- t},
\end{align*}
and summing up these inequalities.
\end{proof}

\begin{lemma}
\label{lem:bound_by_future}
Fix a horizon $T_0 > 0$ and denote $\hat{x}_t = x_{t + T_0}$ and $\hat{u}_t = \hat{u}_{t + T_0}$. Then
\begin{align*}
\E \left[\hat{x}_0^\T P \hat{x}_0 \right] \leq \min_{\hat{u}_0, \hat{u}_1, \ldots} \E \left[\sum_{t = 0}^{T - 1} \hat{x}_t^\T Q \hat{x}_t + \hat{u}_t^\T R \hat{u}_t\right] - T J_\star + n \sigma_w^2 (1 + \nu) + \left(1 + \frac{1}{\nu} \right)^{- T}  \E \ltwonorm{\hat{x}_0}^2\:,
\end{align*}
where
\begin{align*}
\nu = 2\norm{P_\star}\max\left\{\frac{\norm{\trueA}^2}{\lambda_{\min}(Q)}, \frac{\norm{\trueB}^2}{\lambda_{\min}(R)} \right\}.
\end{align*}
\end{lemma}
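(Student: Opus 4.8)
The plan is to recognize the minimization on the right-hand side as a standard finite-horizon LQR problem with zero terminal cost, and to evaluate it by dynamic programming. Writing $\hat x_{t+1} = \trueA \hat x_t + \trueB \hat u_t + w_{t+T_0}$, the shifted noise $\{w_{t+T_0}\}_{t \geq 0}$ is i.i.d.\ $\calN(0, \sigma_w^2 I_\statedim)$ and independent of $\hat x_0 = x_{T_0}$, so conditioned on $\hat x_0$ the shifted system is an ordinary LQR instance. Backward induction on the Bellman recursion $V_t(x) = \min_u \{ x^\T Q x + u^\T R u + \E_w[V_{t+1}(\trueA x + \trueB u + w)]\}$ with $V_T \equiv 0$ yields $V_t(x) = x^\T P_{T-t} x + \sigma_w^2 \sum_{k=0}^{T-t-1}\Tr(P_k)$, where $P_k$ is exactly the Riccati iterate of Lemma~\ref{lem:conv_riccati} started at $P_0 = 0$ (the constant collects $\E_w[w^\T P_k w] = \sigma_w^2 \Tr(P_k)$ at each step). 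Taking $t = 0$, then expectation over the random $\hat x_0$, and using that the optimal state-feedback policy is admissible (it is a function of $\hat x_0$ and $w_{T_0}, \dots, w_{t+T_0-1}$, hence independent of $\sigma(w_{t+T_0}, \dots)$), gives
\begin{align*}
\min_{\hat u_0, \ldots, \hat u_{T-1}} \E\left[ \sum_{t=0}^{T-1} \hat x_t^\T Q \hat x_t + \hat u_t^\T R \hat u_t \right] = \E[\hat x_0^\T P_T \hat x_0] + \sigma_w^2 \sum_{k=0}^{T-1} \Tr(P_k).
\end{align*}

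With this identity in hand, the claim reduces to the two estimates
\begin{align*}
\E[\hat x_0^\T (P_\star - P_T) \hat x_0] \leq \left(1 + \tfrac{1}{\nu}\right)^{-T} \E\ltwonorm{\hat x_0}^2, \qquad T J_\star - \sigma_w^2 \sum_{k=0}^{T-1}\Tr(P_k) \leq \statedim \sigma_w^2 (1+\nu).
\end{align*}
The first follows immediately by bounding the quadratic form by $\norm{P_\star - P_T}\,\E\ltwonorm{\hat x_0}^2$ and invoking the operator-norm convergence rate $\norm{P_T - P_\star} \leq (1+1/\nu)^{-T}$ of Lemma~\ref{lem:conv_riccati}.

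For the second estimate I would use $J_\star = \sigma_w^2 \Tr(P_\star)$, so that its left-hand side equals $\sigma_w^2 \sum_{k=0}^{T-1}(\Tr(P_\star) - \Tr(P_k))$. Applying the per-iterate bound $\Tr(P_\star) - \Tr(P_k) \leq \statedim \norm{P_k - P_\star} \leq \statedim (1+1/\nu)^{-k}$ from Lemma~\ref{lem:conv_riccati} and summing the geometric series $\sum_{k=0}^{T-1}(1+1/\nu)^{-k} \leq \sum_{k=0}^\infty (1+1/\nu)^{-k} = 1 + \nu$ yields the bound $\statedim\sigma_w^2(1+\nu)$; alternatively one can cite the summed form $\sum_{k=0}^\infty (\Tr(P_k) - \Tr(P_\star)) \geq -\statedim(1+\nu)$ already stated in the lemma, since the omitted tail terms are nonpositive. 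Adding the two displayed estimates to the dynamic-programming identity gives exactly the stated inequality.

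I expect the main obstacle to be the clean justification of the dynamic-programming identity: verifying that the constant in $V_0$ is precisely $\sigma_w^2 \sum_{k=0}^{T-1}\Tr(P_k)$, and that passing from the deterministic-$\hat x_0$ value function to the expectation over the random (and history-dependent) $\hat x_0$ is legitimate, i.e.\ that the admissible class of inputs $\hat u_t$ (those independent of $\sigma(w_{t+T_0}, \ldots)$) is rich enough to contain the optimal feedback law, while every admissible policy still has expected cost lower bounded pointwise by the Bellman value. The remaining steps are routine given Lemma~\ref{lem:conv_riccati}.
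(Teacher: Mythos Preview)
Your proposal is correct and follows essentially the same route as the paper: derive the dynamic-programming identity $\min \E[\cdots] = \E[\hat x_0^\T P_T \hat x_0] + \sigma_w^2 \sum_{k=0}^{T-1}\Tr(P_k)$ for the finite-horizon problem with zero terminal cost, then invoke both parts of Lemma~\ref{lem:conv_riccati} exactly as you do. The paper simply asserts the identity without the Bellman-recursion derivation you sketch, but the two arguments are otherwise identical.
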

\begin{proof}
Let us consider the Ricatti recursion
\begin{align*}
P_{t + 1} = A^\T P_{t} A - A^\T P_t B (R + B^\T P_t B)^{-1} B^\T P_t A + Q,
\end{align*}
where $P_{0} = 0$. Then
\begin{align*}
\min_{\hat{u}_0, \hat{u}_1, \ldots} \E \left[\sum_{t = 0}^T \hat{x}_t^\T Q \hat{x}_t + \hat{u}_t^\T R \hat{u}_t\right] = \E \hat{x}_0^\T P_{T} \hat{x}_0 + \sigma_w^2 \sum_{t = 0}^{T - 1} \Tr\left( P_t \right).
\end{align*}

From the first part of Lemma~\ref{lem:conv_riccati} we know that
\begin{align*}
\norm{P_T - P_\star} \leq \left(1 + \frac{1}{\nu} \right)^{- T},
\end{align*}
while from the second part of that Lemma we know that
\begin{align*}
\sum_{t = 0}^{T - 1}\left[\Tr\left( P_t \right) - \Tr\left( P_\star \right) \right] \geq - n (1 + \nu).
\end{align*}
The conclusion follows once we recall that $J_\star = \sigma_w^2 \Tr(P_\star)$.
\end{proof}

\begin{proof}[Proof of Theorem~\ref{thm:lower_bound}]
Let $T_0 > 0$ to be chosen later and let $T \geq T_0$. We decompose the regret as the sum of the regret from $0$ to $T - T_0 - 1$ and the regret from $T - T_0$ to $T - 1$, and  we write the first component in terms terms of the expected cost $J_{T - T_0}$ defined in Eq.~\eqref{eq:finite_horizon_lqr}. We have
\begin{align*}
\sum_{t = 0}^{T - 1} \E \left[x_t^\T Q x_t + u_t^\T R u_t - J_\star\right] &= \E \left[x_{T - T_0}^\T P_\star x_{T - T_0} + \sum_{t = 0}^{T - T_0 - 1} x_t^\T Q x_t + u_t^\T R u_t \right] - J_{T - T_0}^\star \\
&\quad +\sum_{t = T - T_0}^{T - 1} \E \left[x_t^\T Q x_t + u_t^\T R u_t - J_\star\right] - T_0 J_\star  \\
&\quad + \Tr( P_\star P_\infty) - \E x_{T - T_0}^\T P_\star x_{T - T_0},
\end{align*}
where we used $J_{T - T_0}^\star = (T - T_0)J_\star + \Tr(P_\star P_\infty)$. The term $\Tr(P_\star P_\infty )$ we an simply lower bound by zero since $P_\infty$ and $P_\star$ are positive semi-definite matrices. From Lemmas~\ref{lem:cancel} and \ref{lem:psd} we have
\begin{align*}
\E \left[x_{T - T_0}^\T P_\star x_{T - T_0} + \sum_{t = 0}^{T - T_0 - 1} x_t^\T Q x_t + u_t^\T R u_t \right] - J_{T - T_0}^\star \geq \lambda_{\min}(R) \sum_{t = 0}^{T - T_0 - 1} \ltwonorm{\nu_t}^2.
\end{align*}
By Lemma~\ref{lem:bound_by_future} we have that
\begin{align*}
    &\sum_{t = T - T_0}^{T - 1} \E \left[x_t^\T Q x_t + u_t^\T R u_t - J_\star\right] - T_0 J_\star - \E x_{T - T_0}^\T P_\star x_{T - T_0} \\
    &\qquad\geq -\statedim \sigma_w^2 (1 + \nu) - \left(1 + \frac{1}{\nu}\right)^{- T_0} \E \ltwonorm{x_{T - T_0}}^2 \:.
\end{align*}

Then, from Lemma~\ref{lem:state_bound_2} we get
\begin{align*}
\sum_{t = 0}^{T - 1} \E \left[x_t^\T Q x_t + u_t^\T R u_t - J_\star\right] \geq& \frac{1}{2}\lambda_{\min}(R) \sum_{t = 0}^{T - T_0} \ltwonorm{\nu_t}^2 \\
&- \underbrace{\left(3C \rho^{2T_0}\Tr(P_\infty) + \statedim \sigma_w^2 \frac{\lambda_{\min}(R)}{2}\right)}_{C_0},
\end{align*}
 by choosing
\begin{align*}
T_0 \geq \frac{\log \left( \frac{2 C^2}{(1 - \rho^2)\lambda_{\min}(R)} \right)}{\log(1 + \nu^{-1})}.
\end{align*}
The conclusion follows by Lemma~\ref{lem:perturbation}.
\end{proof}


\section{Miscellaneous Results}

First we state some results for the function class $\mathcal{RH}_\infty(C, \rho)$.
\begin{lemma}
Let $\tf G_i \in \RHinf(C_i,\rho_i)$ for $i=1,2$ and  Then $\tf H = \tf G_1
\tf G_2 \in \RHinf(C,\rho)$ for any $\rho \in (\max(\rho_1, \rho_2),1)$
and $C=\max\left\{1,\frac{1}{e\logg{\tfrac{\rho}{\max(\rho_1, \rho_2)}}}
\frac{\rho}{\max(\rho_1, \rho_2)}\right\}C_1C_2$.
Note for simplicity if we assume $\rho \geq 1/4$
we can take $C = \frac{6C_1C_2}{1-\rho}$ and $\rho = \Avg(\max(\rho_1, \rho_2) , 1)$.
\label{lem:G1G2}
\end{lemma}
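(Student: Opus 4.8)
The plan is to pass to the time domain, identify the Markov parameters of the product with a convolution, and then reduce the whole statement to an elementary scalar maximization.

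First I would record that $\tf H = \tf G_1 \tf G_2$ genuinely lies in $\RHinf$: products of rational functions are rational, and products of functions analytic on $\mathbb{D}^c$ are analytic there, so $\tf H \in \RHinf$ and it only remains to control its decay rate. Since $\rho_1,\rho_2 < 1$ the impulse responses are absolutely summable on $|z| \geq 1$, so the $k$-th Markov parameter of $\tf H$ is the convolution $H[k] = \sum_{j=0}^{k} G_1[j] G_2[k-j]$. Applying the triangle inequality and submultiplicativity of the operator norm together with the hypotheses $\norm{G_i[k]} \leq C_i \rho_i^k$ gives, writing $\bar\rho := \max(\rho_1,\rho_2)$,
\[
\norm{H[k]} \leq \sum_{j=0}^{k} \norm{G_1[j]}\,\norm{G_2[k-j]} \leq C_1 C_2 \sum_{j=0}^{k} \rho_1^j \rho_2^{k-j} \leq C_1 C_2\,(k+1)\,\bar\rho^{\,k},
\]
where the last step uses $\rho_1^j \rho_2^{k-j} \leq \bar\rho^{\,k}$ termwise.

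The remaining task is to absorb the polynomial factor $(k+1)$ into a strictly faster geometric rate $\rho > \bar\rho$. Setting $r := \bar\rho/\rho \in (0,1)$, I would factor $(k+1)\bar\rho^{\,k} = (k+1)\,r^k\,\rho^k$ and maximize $g(x) := (x+1) r^{x}$ over real $x \geq 0$. Solving $g'(x)=0$ and substituting back yields the closed form $\sup_{x\geq 0}(x+1)r^{x} = \tfrac{1}{r}\cdot \tfrac{1}{e\log(1/r)} = \tfrac{\rho/\bar\rho}{e\,\logg{\rho/\bar\rho}}$ (equivalently, via $m = x+1$ and the standard identity $\sup_{m\geq 0} m s^{m} = \tfrac{1}{e\log(1/s)}$). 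This immediately gives $\norm{H[k]} \leq C\rho^{k}$ with the stated constant; the outer $\max\{1,\cdot\}$ only serves to guarantee $C \geq C_1 C_2$, which is exactly what is needed to cover the $k=0$ term $\norm{H[0]} = \norm{G_1[0]G_2[0]} \leq C_1 C_2$ (and which is automatic here, since one checks $e\,r\log(1/r)\leq 1$ always).

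For the simplified bound in the note I would specialize to $\rho = \Avg(\bar\rho,1) = (1+\bar\rho)/2$ and replace the transcendental denominator using the elementary inequality $\log(1/r) \geq 1-r = (\rho-\bar\rho)/\rho$, together with a crude estimate of $\rho/\bar\rho$ valid in the stated regime, which collapses the constant to $6 C_1 C_2/(1-\rho)$. I expect the only real subtlety to be this scalar optimization step: one optimizes over the continuous relaxation $x \geq 0$ to obtain a clean upper bound for every integer $k$, and must separately confirm the estimate at the boundary $k=0$ — precisely the point that motivates the $\max\{1,\cdot\}$. Everything else is routine norm bookkeeping.
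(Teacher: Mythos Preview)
Your proposal is correct and follows essentially the same route as the paper: convolution of Markov parameters, the crude bound $\norm{H[k]}\leq C_1C_2(k+1)\bar\rho^{\,k}$, and the scalar maximization of $(k+1)(\bar\rho/\rho)^k$ yielding $\tfrac{\rho/\bar\rho}{e\log(\rho/\bar\rho)}$. Your extra remarks---the explicit verification that $\tf H\in\RHinf$, the observation that $e\,r\log(1/r)\leq 1$ makes the $\max\{1,\cdot\}$ automatic, and the slightly different elementary inequality $\log(1/r)\geq 1-r$ for the simplified constant---are minor embellishments on the same argument (the paper uses $\log(1+x)\geq x/2$ instead).
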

\begin{proof}
Assume wlog that $\rho_1 \geq \rho_2$.
Note that $H(k) = \sum_{t=0}^k G_1(t)G_2(k-t)$, and therefore for all $k\geq 0$ we have that
\begin{align*}
\norm{H(k)} &= \bignorm{\sum_{t=0}^k G_1(t)G_2(k-t)} \leq C_1C_2 \sum_{t=0}^k \rho_1^t \rho_2^{k-t}\\
& \leq C_1C_2 \sum_{t=0}^k \rho_1^k = C_1C_2(k+1)\rho_1^k,
\end{align*}

Fix a $\rho \in (\rho_1,1)$.
Define $g(k) = (k+1) (\rho_1/\rho)^k$ and $h(k) = \log{g(k)}$.
We see that $h'(k) = 0$ only for $k = k_* = \frac{1}{\log(\rho/\rho_1)} - 1$.
Furthermore, $h(k_*) = \log(1/\log(\rho/\rho_1)) - 1 + \log(\rho/\rho_1)$.
Hence, $g(k_*) = \frac{1}{e\log(\rho/\rho_1)} (\rho/\rho_1)$.

The claim now follows since for any $k \geq 0$,
\begin{align*}
  (k+1) \rho_1^k = (k+1) (\rho_1/\rho)^k \rho^k \leq \left[ \sup_{k=0, 1, ...} (k+1) (\rho_1/\rho)^k \right] \rho^k \leq \max\{ 1, g(k_*) \} \rho^k \:.
\end{align*}
We also use the inequality $\log(1 + x) \geq x/2$ for $x \in [0, 2.5]$.
\end{proof}

\begin{lemma}
  Let $\tf G_i \in \RHinf(C_i,\rho_i)$ for $i=1,2$. Then $\tf G_1 + \tf G_2 \in \RHinf(C_1 + C_2, \max\{ \rho_1, \rho_2 \})$.
\label{lem:G1+G2}
\end{lemma}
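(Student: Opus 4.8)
The plan is to verify the two defining properties of $\RHinf(C_1 + C_2, \max\{\rho_1, \rho_2\})$ membership directly from the definition, working at the level of the impulse response elements. The key observation is that the $z$-transform is linear, so the impulse response of the sum decomposes additively: $(G_1 + G_2)(k) = G_1(k) + G_2(k)$ for every $k$. This reduces everything to a pointwise estimate on the spectral norm of each element.

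First I would establish the decay bound. Fixing $k \geq 1$ and applying the triangle inequality for the spectral norm gives
\begin{align*}
  \norm{(G_1 + G_2)(k)} \leq \norm{G_1(k)} + \norm{G_2(k)} \leq C_1 \rho_1^k + C_2 \rho_2^k \:,
\end{align*}
where the second inequality uses the hypotheses $\tf G_i \in \RHinf(C_i, \rho_i)$. Setting $\rho := \max\{\rho_1, \rho_2\}$ and using the monotonicity $\rho_i^k \leq \rho^k$ (valid since $\rho_i, \rho \in (0,1)$ and $k \geq 1$), the right-hand side is bounded by $(C_1 + C_2)\rho^k$. This is exactly the decay rate required for membership in $\RHinf(C_1 + C_2, \rho)$.

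Second I would dispatch the qualitative $\mathcal{RH}_\infty$ conditions, namely that $\tf G_1 + \tf G_2$ is rational and analytic on $\mathbb{D}^c$. Both are inherited immediately: a sum of rational transfer functions is rational, and a sum of functions analytic on $\mathbb{D}^c$ is analytic there. Combining this with the decay bound from the previous step yields the conclusion.

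I do not anticipate any genuine obstacle here, as the statement is an elementary closure property of the class $\RHinf(C,\rho)$ under addition. The only point that warrants a moment's care is keeping the monotonicity step clean—one must apply $\rho_i^k \leq \rho^k$ coordinatewise to each of the two terms rather than factoring prematurely—but this is routine. The lemma is primarily a bookkeeping tool, analogous to Lemma~\ref{lem:G1G2} for products, and its simplicity is expected given that addition (unlike the convolution underlying multiplication) does not mix the decay scales.
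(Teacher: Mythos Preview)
Your proposal is correct and matches the paper's own proof, which simply reads ``Straightforward from triangle inequality and the definitions.'' You have merely unpacked this one-line argument in detail; the approach is identical.
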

\begin{proof}
Straightforward from triangle inequality and the definitions.
\end{proof}

\begin{lemma}
\label{lem:inverse_rhinf_direct}
Suppose that $\tf \Delta \in \RHinf(C, \rho)$ with $C \leq 2$ and $\rho \geq 1/e$,
and furthermore $\hinfnorm{\tf \Delta} < 1$.
Then we have
\begin{align*}
    (I \pm \tf \Delta)^{-1} \in \RHinf\left( 1 + \frac{\const C}{1-\rho}, \Avg(\rho, 1) \right) \:.
\end{align*}
\end{lemma}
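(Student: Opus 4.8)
The plan is to invert $I \pm \tf\Delta$ through its Neumann series and then read off both the stability and the impulse-response decay from that series. Since $\hinfnorm{\tf\Delta} < 1$, the series $\sum_{j=0}^\infty (\mp\tf\Delta)^j$ converges in $\mathcal{H}_\infty$ norm, and because $\tf\Delta$ is analytic on the exterior of the unit disk (being an element of $\mathcal{RH}_\infty$), the maximum-modulus principle gives $\sup_{|z|\ge 1}\|\tf\Delta(z)\| = \hinfnorm{\tf\Delta} < 1$; hence $I \pm \tf\Delta(z)$ is invertible on all of $\{|z|\ge 1\}$ and $(I\pm\tf\Delta)^{-1} \in \mathcal{RH}_\infty$. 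First I would record the identity $(I\pm\tf\Delta)^{-1} = I + \tf H$ with $\tf H := \mp\tf\Delta(I\pm\tf\Delta)^{-1}$; this isolates the ``$1$'' in the target constant $1 + \frac{\const C}{1-\rho}$, reducing the claim to showing $\tf H \in \RHinf\!\left(\frac{\const C}{1-\rho}, \Avg(\rho,1)\right)$ and then invoking Lemma~\ref{lem:G1+G2} together with $I \in \RHinf(1,\cdot)$.

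For the quantitative decay I would work in the frequency domain and extract coefficients by contour integration: writing $\tf G := (I\pm\tf\Delta)^{-1} = \sum_{k\ge0} G(k) z^{-k}$, the Cauchy formula gives $\|G(k)\| \le r^k \sup_{|z|=r}\|\tf G(z)\|$ for any radius $r$ on whose closed exterior $\tf G$ is analytic. Taking $r = \Avg(\rho,1) = \tfrac{1+\rho}{2} \in (\rho,1)$ would yield exactly the claimed rate, provided one can (i) certify that $I\pm\tf\Delta(z)$ stays invertible for $|z| \ge \Avg(\rho,1)$ and (ii) bound the resolvent $\sup_{|z|=\Avg(\rho,1)}\|(I\pm\tf\Delta(z))^{-1}\|$ by $\const$, which after subtracting the identity contributes the $\frac{\const C}{1-\rho}$ factor to $\tf H$. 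An equivalent route, closer to the paper's style, is to set up the convolution recursion $(I\pm\Delta(0))G(k) = \delta_{k0}I \mp \sum_{t=1}^{k}\Delta(t)G(k-t)$ and close a geometric induction $\|G(k)\| \le C'\,\Avg(\rho,1)^k$; the hypotheses $C\le 2$ and $\rho\ge 1/e$ are what one would use to collapse the inductive constant $C'$ to $1 + \frac{\const C}{1-\rho}$, with $\|\Delta(0)\| \le \hinfnorm{\tf\Delta}$ controlling the leading prefactor.

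The main obstacle is step (ii): the crude decay estimate $\|\tf\Delta(z)\| \le \sum_k C\rho^k|z|^{-k} = \frac{C}{1 - \rho/|z|}$ is useless on the circle $|z| = \Avg(\rho,1)$ once $C \ge 1$, since it need not stay below $1$ there, so invertibility of $I\pm\tf\Delta(z)$ and a uniform resolvent bound on that inner circle cannot be read off from the unit-circle bound $\hinfnorm{\tf\Delta}<1$ alone. Controlling the resolvent on radii strictly below $1$ is precisely where the interplay between the geometric decay profile $\RHinf(C,\rho)$ and the operator bound must be exploited: one must show the poles of $\tf G$ cannot migrate past $\Avg(\rho,1)$, which in the regime $C\le 2$ effectively asks that $\hinfnorm{\tf\Delta}$ sit below $1$ by a margin commensurate with $1-\rho$ (as indeed holds in every invocation of this lemma, e.g.\ the $\zeta \le 1/5$ bound feeding Theorem~\ref{thm:lqr_feasibility_and_subopt}). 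Making that margin explicit and propagating it through the contour or recursion estimate to land exactly at the rate $\Avg(\rho,1)$ and the constant $1 + \frac{\const C}{1-\rho}$ is the crux; the remaining algebra (summing the geometric series and applying Lemma~\ref{lem:G1+G2}) is routine.
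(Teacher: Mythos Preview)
Your proposal correctly opens with the Neumann series, but the gap you flag as ``the crux'' is genuine and cannot be closed along the lines you sketch. The hypotheses give you no control of $(I\pm\tf\Delta(z))^{-1}$ on the circle $|z|=\Avg(\rho,1)$. Consider the scalar example $\tf\Delta(z)=\tfrac{(1-\rho)/2}{z-\rho}$: it satisfies $C=(1-\rho)/(2\rho)\le2$ (since $\rho\ge 1/e>1/5$) and $\hinfnorm{\tf\Delta}=1/2$, yet $(1-\tf\Delta(z))^{-1}=(z-\rho)/\bigl(z-\Avg(\rho,1)\bigr)$ has its pole \emph{exactly} on $|z|=\Avg(\rho,1)$, so your Cauchy bound is infinite there. (The lemma's conclusion still holds for this example---the coefficients decay precisely like $\Avg(\rho,1)^k$---but the contour argument cannot see it.) Your convolution recursion fares no better: closing $\|G(k)\|\le C'\Avg(\rho,1)^k$ inductively requires $C\cdot\tfrac{2\rho}{1-\rho}\le1$, which fails badly over the allowed range $C\le2$, $\rho\ge1/e$. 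The extra ``margin'' in $\hinfnorm{\tf\Delta}$ that you reach for is not a hypothesis of the lemma; importing it from the call sites would prove a strictly weaker statement.

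The paper takes a different route after the Neumann expansion $(I\pm\tf\Delta)^{-1}=\sum_{n\ge0}(\mp\tf\Delta)^n$: rather than bounding the whole sum at once, it bounds each power $\tf\Delta^n$ separately by iterating the composition lemma (Lemma~\ref{lem:G1G2}). That lemma shows $\tf G_1\tf G_2\in\RHinf\bigl(\tfrac{c}{e\log c}\,C_1C_2,\,c\rho\bigr)$ for any $c\in(1,1/\rho)$; applying it along an increasing sequence $c_1<c_2<\cdots\to c_\infty=\tfrac12(1+1/\rho)$ places every $\tf\Delta^n$ in $\RHinf(\cdot,\Avg(\rho,1))$ with constants carrying geometric factors of order $(Cc_\infty/e)^{n}$. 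The assumptions $C\le2$ and $\rho\ge1/e$ are then invoked to sum these constants via Lemma~\ref{lem:G1+G2}. The whole computation proceeds at the level of coefficient-decay profiles---never evaluating $\tf\Delta(z)$ off the unit circle, never needing a resolvent bound there---which is exactly the mechanism your approach is missing.
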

\begin{proof}
The function $f(x) = \frac{x}{e\log(x)}$ is monotonically decreasing on the
interval $(1, 1/\rho)$. Hence for any $x \in (1, 1/\rho)$, we have
$f(x) \geq f(1/\rho) \geq f(e) = 1$.
Applying the composition lemma (Lemma~\ref{lem:G1G2}) to the system $\tf \Delta \circ \tf \Delta$, we have that for $c_1 \in (1, 1/\rho)$,
\begin{align*}
  \tf \Delta^2 \in \RHinf\left( \frac{c_1}{e \log(c_1)} C^2 , c_1 \rho \right) \:.
\end{align*}
Now if we recursively set $c_k \in (c_{k-1}, 1/\rho)$ for $k= 2, 3, ...$,
repeated applications of the composition lemma yield that
\begin{align*}
  \tf \Delta^n \in \RHinf\left( C^n \prod_{i=1}^{n-1} \frac{c_i}{e \log(c_i)} , c_{n-1} \rho \right) \:.
\end{align*}
Let $c_\infty = \lim_{k \to \infty} c_k$, which exists and is finite because the sequence
$c_k$ is monotonically increasing and bounded above.
Furthermore, we have that for any $n \geq 2$,
\begin{align*}
    C^n \prod_{i=1}^{n-1} \frac{c_i}{e \log(c_i)} \leq \left(\frac{C
    c_\infty}{e} \right)^{n-1} \frac{C}{\log(\sum_{i=1}^{n-1} c_i)} \leq  \left(\frac{C
    c_\infty}{e} \right)^{n-1} \frac{C}{\log(c_1)} \:.
\end{align*}
Now choose any strictly increasing sequence such that $c_\infty = \Avg(1,
1/\rho) = (1/2)(1/\rho + 1)$ and $c_1 = \Avg(1, c_\infty) = (1/4)(3 +
1/\rho)$.
By the addition lemma (Lemma~\ref{lem:G1+G2}), the assumption on $C$, and a simple limiting argument,
\begin{align*}
  \sum_{n=0}^{\infty} \tf \Delta^n \in \RHinf\left( C' , c_\infty \rho \right) \:,
\end{align*}
where $C'$ is given as
\begin{align*}
    C' \leq 1 + C + \frac{C}{\log(c_1)} \frac{1}{1 - C c_\infty / e} \leq 1 +
    C + \frac{2C}{\log(c_1)} \:.
\end{align*}
The claim now follows by using the inequality $\log(1+x) \geq x/2$ for $x \in [0, 2.5]$
and the assumed bound $C \leq 2$.
\end{proof}

\begin{lemma}
Suppose that $\tf G \in \RHinf(C,\rho)$.  Then $\hinfnorm{\tf G}\leq\frac{C}{1-\rho}$.
\label{lem:hinfbnd}
\end{lemma}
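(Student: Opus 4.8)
The plan is to invoke the operator-norm characterization of the $\hinf$ norm recalled in the preliminaries, namely that $\hinfnorm{\tf G}$ equals the $\ell_2 \mapsto \ell_2$ operator norm of $\tf G$ acting by convolution, and then to bound this operator norm termwise using the geometric decay of the impulse response coefficients guaranteed by membership in $\RHinf(C,\rho)$.

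First I would write $\tf G = \sum_{k=0}^\infty G(k) z^{-k}$, interpreting $z^{-k}$ as the $k$-step delay operator on $\ell_2$ and $G(k)$ as a static (memoryless) gain. Each delay $z^{-k}$ is an isometry on $\ell_2$, so $\opnorm{z^{-k}} = 1$, while the static gain has operator norm exactly $\twonorm{G(k)}$. Applying the triangle inequality together with submultiplicativity of the operator norm then yields the bound $\hinfnorm{\tf G} = \opnorm{\tf G} \leq \sum_{k=0}^\infty \twonorm{G(k)} \opnorm{z^{-k}} = \sum_{k=0}^\infty \twonorm{G(k)}$. Next I would substitute the defining decay condition $\twonorm{G(k)} \leq C\rho^k$ of the class $\RHinf(C,\rho)$ and sum the resulting geometric series: since $\rho \in (0,1)$, we get $\sum_{k=0}^\infty C\rho^k = \frac{C}{1-\rho}$, which is exactly the claimed bound. (The terms with $G(0) = 0$, as for a resolvent, only make the sum smaller, so the bound holds uniformly regardless of whether the $k=0$ term is present.)

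I do not expect a genuine obstacle here, as the statement is elementary and routine. The only step meriting a line of justification is the reduction $\opnorm{\tf G} \leq \sum_{k} \twonorm{G(k)}$, i.e.\ that the $\hinf$ (operator) norm of a causal convolution system is dominated by the sum of the operator norms of its impulse response coefficients; this is immediate from viewing $\tf G$ as a sum of delayed static gains and using that delays are isometries. Everything that follows is a single geometric summation.
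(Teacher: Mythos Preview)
Your proof is correct. The paper's own proof takes a slightly different but equally elementary route: rather than invoking the $\ell_2 \to \ell_2$ operator norm characterization and arguing that delays are isometries, it works directly with the frequency-domain definition $\hinfnorm{\tf G} = \sup_{z \in \mathbb{T}} \norm{\tf G(z)}$, applies the triangle inequality to the series $\sum_{k \geq 0} G(k) z^{-k}$ pointwise on the unit circle (where $\abs{z^{-k}} = 1$), and then sums the same geometric series. Both arguments amount to the bound $\hinfnorm{\tf G} \leq \sum_{k \geq 0} \norm{G(k)}$ followed by $\sum_{k \geq 0} C\rho^k = C/(1-\rho)$; the only difference is whether that first inequality is derived in the time domain (your approach) or in the frequency domain (the paper's). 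The frequency-domain version is marginally shorter since it avoids the separate observation about isometries, but there is no substantive difference.
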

\begin{proof}
We have that
\begin{align*}
    \hinfnorm{\tf G} &= \sup_{z \in \mathbb{T}} \norm{\tf G(z)} = \sup_{z \in \mathbb{T}} \bignorm{\sum_{k=0}^{\infty} G(k) z^{-k}} \leq C \sum_{k=0}^\infty \rho^k = \frac{C}{1-\rho}.
\end{align*}
\end{proof}

Next, a probabilistic lemma which we use to control the LQR cost on a finite horizon.
\begin{lemma}
Let $x$ and $M$ be fixed, and $w \sim \calN(0,\Sigma)$, with $\Sigma \succ 0$ and $\|\Sigma\| = \sigma^2$.  Then there exists a universal constant $c>0$ such that with probability at least $1-\delta$
\begin{align}
\begin{bmatrix} x \\ w \end{bmatrix}^\T M \begin{bmatrix} x \\ w \end{bmatrix} & \leq x^\T M_{11} x + 2\sqrt{2}\sigma\|x\|\|M_{12}\|\sqrt{\logg{\tfrac{2}{\delta}}} \nonumber \\ &\quad + \Tr M_{22}\Sigma + c\sigma^2\|M_{22}\|_F\sqrt{\logg{\tfrac{2}{\delta}}} + c\sigma^2\|M_{22}\|\logg{\tfrac{2}{\delta}}.
\label{eq:quadconc}
\end{align}
\label{lem:quadconc}
\end{lemma}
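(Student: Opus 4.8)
The plan is to expand the quadratic form according to the block structure of $M$ and treat its three pieces separately, combining the resulting tail bounds through a union bound. Writing $v = \begin{bmatrix} x \\ w \end{bmatrix}$ and using (as in the application) that $M$ is symmetric, so that $M_{21} = M_{12}^\T$, we have
\[
v^\T M v = x^\T M_{11} x + 2 x^\T M_{12} w + w^\T M_{22} w \:.
\]
The first term is deterministic and contributes exactly $x^\T M_{11} x$. The second term is linear in the Gaussian vector $w$, hence itself a scalar Gaussian, while the third is a quadratic form in $w$, which I would control with a Hanson--Wright--type inequality. I would allot failure probability $\delta/2$ to each of the two stochastic terms.

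For the linear term, observe that $2 x^\T M_{12} w \sim \calN(0, s^2)$ with $s^2 = 4\, x^\T M_{12} \Sigma M_{12}^\T x \leq 4\sigma^2 \norm{M_{12}}^2 \twonorm{x}^2$, using $\norm{\Sigma} = \sigma^2$. A one-sided Gaussian tail bound then gives, with probability at least $1 - \delta/2$, that $2 x^\T M_{12} w \leq s\sqrt{2\logg{2/\delta}} \leq 2\sqrt{2}\,\sigma \twonorm{x} \norm{M_{12}} \sqrt{\logg{2/\delta}}$, which is precisely the second term of \eqref{eq:quadconc}.

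For the quadratic term, I would whiten by writing $w = \Sigma^{1/2} g$ with $g \sim \calN(0, I)$, so that $w^\T M_{22} w = g^\T \widetilde M g$ for $\widetilde M := \Sigma^{1/2} M_{22} \Sigma^{1/2}$. Its mean is $\E[g^\T \widetilde M g] = \Tr \widetilde M = \Tr(M_{22}\Sigma)$, the deterministic part appearing in \eqref{eq:quadconc}. The Hanson--Wright inequality then yields, with probability at least $1 - \delta/2$,
\[
g^\T \widetilde M g \leq \Tr(M_{22}\Sigma) + c\big( \norm{\widetilde M}_F \sqrt{\logg{2/\delta}} + \norm{\widetilde M} \logg{2/\delta} \big)
\]
for a universal constant $c$. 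It remains only to transfer the norms of $\widetilde M$ back to $M_{22}$: submultiplicativity gives $\norm{\widetilde M} \leq \norm{\Sigma^{1/2}}^2 \norm{M_{22}} = \sigma^2 \norm{M_{22}}$, and $\norm{\widetilde M}_F \leq \norm{\Sigma^{1/2}}\,\norm{M_{22}\Sigma^{1/2}}_F \leq \norm{\Sigma^{1/2}}^2 \norm{M_{22}}_F = \sigma^2 \norm{M_{22}}_F$ via $\norm{AB}_F \leq \norm{A}\norm{B}_F$. Substituting recovers the final two terms of \eqref{eq:quadconc}, and a union bound over the two $\delta/2$-events completes the argument. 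The main (and essentially only) nonroutine step is invoking Hanson--Wright for the quadratic-in-Gaussian piece and carefully bounding $\norm{\widetilde M}$ and $\norm{\widetilde M}_F$ by $\sigma^2$ times the corresponding norms of $M_{22}$; the remaining steps are a direct Gaussian tail computation.
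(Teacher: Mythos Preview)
Your proposal is correct and follows essentially the same approach as the paper: expand the quadratic into the deterministic, linear-in-$w$, and quadratic-in-$w$ pieces, bound the linear piece by a one-sided Gaussian tail and the quadratic piece by Hanson--Wright applied to $\Sigma^{1/2} M_{22} \Sigma^{1/2}$, then union bound. The whitening step you spell out is exactly what the paper does implicitly when it writes the Hanson--Wright bound in terms of $\Sigma^{1/2} M_{22} \Sigma^{1/2}$.
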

\begin{proof}
Expanding the quadratic we have
\[
\begin{bmatrix} x \\ w \end{bmatrix}^\T M \begin{bmatrix} x \\ w \end{bmatrix} = x^\T M_{11}x + 2x^\T M_{12}w + w^\T M_{22} w.
\]

Noting that $x^\T M_{12} w \sim \calN(0,x^\T M_{12} \Sigma M_{12}^\T x)$, by standard Gaussian concentration we have with probability at least $1-\frac{\delta}{2}$ that
\begin{align*}
x^\T M_{12} w  & \leq \sqrt{2 x^\T M_{12} \Sigma M_{12}^\T x\logg{\tfrac{2}{\delta}}} \\
& \leq \sqrt{2}\norm{x}\norm{M_{12}}\norm{\Sigma}^\frac{1}{2}\sqrt{\logg{\tfrac{2}{\delta}}} \\
& = \sqrt{2}\norm{x}\sigma\norm{M_{12}} \sqrt{\logg{\tfrac{2}{\delta}}}.
\end{align*}

On the other hand, by the Hanson-Wright inequality~\cite{rudelson13}, we have that with probability at least $1 - \tfrac{\delta}{2}$ that
\begin{align*}
w^\T M_{22} w & \leq \Tr M_{22}\Sigma + c \sqrt{\norm{\Sigma^\frac{1}{2}M_{22}\Sigma^\frac{1}{2}}^2_F\logg{\tfrac{2}{\delta}}} + c\norm{\Sigma^\frac{1}{2}M_{22}\Sigma^\frac{1}{2}}^2\logg{\tfrac{2}{\delta}} \\
& \leq \Tr M_{22}\Sigma  + c \sigma^2\norm{M_{22}}_F\sqrt{\logg{\tfrac{2}{\delta}}} + c\sigma^2 \norm{M_{22}}\logg{\tfrac{2}{\delta}}.
\end{align*}

\end{proof}

\begin{lemma}
\label{lem:lambda_min_block_matrix}
Let $\Sigma$ be a $\statedim \times \statedim$ positive-definite matrix and let $K$ be a real $\inputdim \times \statedim $ matrix.
Then, for any $\sigma_u \in \R$ we have that
\begin{align*}
  \lambda_{\min}\left(\bmattwo{\Sigma}{\Sigma K^\T}{K \Sigma}{K \Sigma K^\T + \sigma_u^2 I}\right) &\geq
  \sigma_u^2 \min\left(\frac{1}{2}, \frac{\lambda_{\min}(\Sigma)}{2 \ltwonorm{K \Sigma K^\T} + \sigma_u^2} \right) \:.
\end{align*}
\end{lemma}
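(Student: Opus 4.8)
The plan is to prove the equivalent matrix inequality $M \succeq c^\star I$, where $M$ is the block matrix in the statement and
\[
c^\star := \sigma_u^2 \min\left( \tfrac{1}{2},\ \frac{\lambda_{\min}(\Sigma)}{2\ltwonorm{K\Sigma K^\T} + \sigma_u^2} \right);
\]
since $M$ is symmetric this is exactly the claimed bound on $\lambda_{\min}(M)$. I will proceed by Schur complement. For any $c$ with $0 \le c < \lambda_{\min}(\Sigma)$, the top-left block $\Sigma - cI$ of $M - cI$ is positive definite, so $M - cI \succeq 0$ if and only if the Schur complement
\[
S(c) := \left( K\Sigma K^\T + (\sigma_u^2 - c) I \right) - K\Sigma (\Sigma - cI)^{-1} \Sigma K^\T
\]
is positive semidefinite.

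The first key step is an algebraic simplification. Using that $\Sigma$ and $(\Sigma - cI)^{-1}$ commute, one has $\Sigma(\Sigma - cI)^{-1}\Sigma = \Sigma + c\,\Sigma(\Sigma - cI)^{-1}$, so the $K\Sigma K^\T$ terms cancel to leave
\[
S(c) = (\sigma_u^2 - c) I - c\, K\Sigma(\Sigma - cI)^{-1} K^\T.
\]
I then bound the remaining term by writing $\Sigma(\Sigma - cI)^{-1} = \Sigma^{1/2}(\Sigma - cI)^{-1}\Sigma^{1/2}$ and using $(\Sigma - cI)^{-1} \preceq \tfrac{1}{\lambda_{\min}(\Sigma) - c} I$, which gives
\[
K\Sigma(\Sigma - cI)^{-1} K^\T = (K\Sigma^{1/2})(\Sigma - cI)^{-1}(K\Sigma^{1/2})^\T \preceq \frac{K\Sigma K^\T}{\lambda_{\min}(\Sigma) - c} \preceq \frac{\ltwonorm{K\Sigma K^\T}}{\lambda_{\min}(\Sigma) - c} I.
\]
Hence $S(c) \succeq \big( \sigma_u^2 - c - \tfrac{c\,\ltwonorm{K\Sigma K^\T}}{\lambda_{\min}(\Sigma) - c} \big) I$, and it suffices to exhibit a $c \in [0, \lambda_{\min}(\Sigma))$ satisfying the scalar condition $(\sigma_u^2 - c)(\lambda_{\min}(\Sigma) - c) \ge c\, \ltwonorm{K\Sigma K^\T}$.

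Finally I will verify that $c = c^\star$ works, splitting on the two branches of the minimum. Writing $\lambda := \lambda_{\min}(\Sigma)$ and $L := \ltwonorm{K\Sigma K^\T}$: if $2L + \sigma_u^2 \le 2\lambda$ then $c^\star = \sigma_u^2/2$, and dividing the scalar inequality by $\sigma_u^2/2$ reduces it to $\lambda - \sigma_u^2/2 \ge L$, which is the branch hypothesis; if $2L + \sigma_u^2 > 2\lambda$ then $c^\star = \sigma_u^2 \lambda/(2L + \sigma_u^2)$, and substituting and clearing the positive denominators reduces the inequality to $2L + \sigma_u^2 \ge 2\lambda$, again the branch hypothesis. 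This yields $S(c^\star) \succeq 0$ and therefore $\lambda_{\min}(M) \ge c^\star$. I expect the only real nuisance to be the boundary case $c^\star = \lambda_{\min}(\Sigma)$ --- which forces $L = 0$, hence $K = 0$ since $\Sigma \succ 0$, and makes $\Sigma - c^\star I$ only positive semidefinite --- which I will handle either by treating the resulting block-diagonal $M$ directly or by running the argument for every $c < c^\star$ and letting $c \uparrow c^\star$ using continuity of eigenvalues. All the matrix manipulations are routine; the mild case analysis and this boundary bookkeeping are the only points requiring care.
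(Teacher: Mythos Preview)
Your proof is correct. Both your argument and the paper's proceed via Schur complement, but with a small tactical difference worth noting. The paper does not subtract a multiple of the identity from $M$; instead it shows
\[
M \succeq \bmattwo{\gamma_1 \Sigma}{0}{0}{\gamma_2 I}
\]
for $\gamma_2 = \sigma_u^2/2$ and $\gamma_1 = \sigma_u^2/(2\ltwonorm{K\Sigma K^\T}+\sigma_u^2)$, and then reads off $\lambda_{\min}(M)\ge \min\{\gamma_1\lambda_{\min}(\Sigma),\gamma_2\}$. Because the top-left block of the difference is $(1-\gamma_1)\Sigma$, its inverse is $(1-\gamma_1)^{-1}\Sigma^{-1}$ and the Schur complement collapses \emph{exactly} to $-\tfrac{\gamma_1}{1-\gamma_1}K\Sigma K^\T + (\sigma_u^2-\gamma_2)I$, with no need to bound $(\Sigma - cI)^{-1}$, no scalar case split, and no boundary case. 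Your route is more direct in that it targets $M\succeq c^\star I$ from the outset, at the price of estimating $(\Sigma - cI)^{-1}\preceq (\lambda_{\min}(\Sigma)-c)^{-1}I$ and then doing the two-branch verification; the paper's detour through a block-diagonal lower bound buys the cleaner algebra.
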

\begin{proof}
We find $0 < \gamma_1 < 1$ and $\gamma_2 > 0$ such that the following condition holds
\begin{align*}
  \bmattwo{\Sigma}{\Sigma K^\T}{K \Sigma}{K \Sigma K^\T + \sigma_u^2 I} \succeq \bmattwo{\gamma_1 \Sigma}{0}{0}{\gamma_2 I} \:.
\end{align*}
By Schur complements, this condition is equivalent to
\begin{align*}
  0 &\preceq K \Sigma K^\T + (\sigma_u^2 - \gamma_2) I - K \Sigma ((1-\gamma_1)\Sigma)^{-1} \Sigma K^\T \\
  &= - \frac{\gamma_1}{1-\gamma_1} K\Sigma K^\T + (\sigma_u^2 - \gamma_2) I \:.
\end{align*}
Now set $\gamma_2 = \sigma_u^2/2$ and $\gamma_1 = \frac{\sigma_u^2}{2 \ltwonorm{K \Sigma K^\T} + \sigma_u^2}$.
\end{proof}


\section{Implementation of Adaptive Methods}
We consider several adaptive methods for numerical comparison. This section described the relevant implementation details.

\label{sec:appendix:implementation}

\subsection{Optimism in the Face of Uncertainty} \label{sec:app:ofu}

At the start of each epoch, the OFU method computes a confidence set around the dynamics and then finds the $(A,B)$ that would achieve the smallest LQR cost. The method then plays the associated optimal controller.

The confidence sets at epoch $i$ are of the form
\begin{align} \label{eq:app:ofu_c}
  \begin{split}
  C_i(\varepsilon) = \{ \Theta \in \R^{n \times (n+p)} : \Tr( (\Theta - \widehat{\Theta}_i) Z_{T_i} (\Theta - \widehat{\Theta}_i)^\T ) \leq \varepsilon \} \:,\\
   Z_{T_i} = \lambda I + \sum_{i=1}^{T_i} \cvectwo{x_t}{u_t} \cvectwo{x_t}{u_t}^\T \:.
   \end{split}
\end{align}
Here, $\widehat{\Theta}_i$ denotes the (regularized) least squares estimate of the
true parameters $\Theta_\ast = (\trueA, \trueB)$.
For our experiments, we set $\lambda = 10^{-5}$
and $\varepsilon = \Tr((\widehat{\Theta}_i - \Theta_\ast) Z_{T_i} (\widehat{\Theta}_i - \Theta_\ast)^\T)$ using the true and estimation values of $(A,B)$.

Then controller is selected by finding the ``best'' dynamics. To be precise,
let $J(A, B) = \Tr(P(A, B))$, where $P(A, B)$ is the solution to the
discrete algebraic Riccati solution
\begin{align*}
  P = A^\T P A - A^\T P B(B^\T P B + R)^{-1} B^\T P A + Q \:.
\end{align*}
Then for every epoch of OFU, it is necessary to solve to the non-convex
optimization problem
\begin{align}
  [\widetilde{A}, \widetilde{B}] = \arg\min_{[A, B] \in C_i(\varepsilon)} J(A, B) \:. \label{eq:ofu_problem}
\end{align}
up to an absolute error of at most $O(1/\sqrt{T_i})$.

As in Section 5.4 of \cite{abbasi12}, we heuristically solve this optimization problem
using projected gradient descent (PGD).
An expression for the gradient of $\Theta \mapsto J(A, B)$ is derived in \cite{abbasi12}
(see also \cite{abeille17}) by use of the implicit function theorem.
Specifically, $\nabla_\Theta \Tr(P(A, B))$
evaluated at a point $\Theta = (A, B)$ is an $n \times (n+p)$ matrix $D$. The
$i,j$-th entry is given by $\Tr(E_{ij})$, where
$E_{ij}$ is the solution to the Lyapunov equation
\begin{align*}
  E_{ij} = A_c^\T E_{ij} A_c + 2\mathrm{Sym}\left(A_c^\T P(A, B) e_i e_j^\T \cvectwo{I}{K}\right) \:,
\end{align*}
with $K$ as the optimal LQR controller for $(A, B)$, $A_c = A + B K$, and $\mathrm{Sym}(A) = \frac{1}{2}(A + A^\T)$.
Finally, the projection of $\Theta$ onto the set $C_i(\varepsilon)$
can be solved by a eigendecomposition of $Z_{T_i}$ followed by a
scalar root-finding search. The details of this are also found in Section 5.4 of \cite{abbasi12}.

We determine the end of an epoch using a switching rule based on a slight modification of the determinant condition of \cite{abbasi2011regret}.
We switch an epoch when both (a)
$T - T_i \geq 10$ and (b) $\det(Z_{T}) > 2 \det(Z_{T_i})$ hold.
The first condition is to ensure that the switches are not too frequent in the beginning of the
algorithm.

\subsection{Thompson Sampling}\label{sec:app:ts}

The Thompson sampling algorithm is nearly identical to the OFU algorithm, except the
optimization problem \eqref{eq:ofu_problem} is replaced by sampling.
While the description of Thompson sampling in the Bayesian setting of \cite{abbasi15} and \cite{ouyang17}
requires sampling from the posterior distribution, we follow the
more frequentist setting of \cite{abeille17} and sample a point $\widetilde{\Theta}$
uniformly at random from the confidence set $C_i(\varepsilon)$ as in~\eqref{eq:app:ofu_c}.

We implement this uniform samping by first drawing
a $U \sim \mathrm{Unif}([0, 1])$ and a
$\eta \in \R^{n \times (n+p)}$ with each $\eta_{ij} \sim \calN(0, 1)$,
and setting
\begin{align*}
  \widetilde{\Theta} = \widehat{\Theta} + \sqrt{\varepsilon} \left(\frac{U^{1/(n(n+p))}}{\norm{\eta}_F} \eta \right) Z_{T_i}^{-1/2} \:.
\end{align*}

For the epoch switching rule, we follow the suggestion of \cite{abeille17}
to force exploration after $\tau$ iterations, where we set $\tau = 500$.
Specifically, we switch an epoch when the following predicate holds:
\begin{align*}
  (T - T_i \geq \tau) \text{ or } ((T - T_i \geq 10) \text{ and } (\det(Z_T) > 2 \det(Z_{T_i}))) \:.
\end{align*}

\subsection{Robust Adaptive Control with FIR truncation} \label{sec:app:sls_fir}

We now describe how to turn the infinite-dimensional optimization problem in Algorithm~\ref{alg:adaptive}
into a finite-dimensional problem. First, recall the problem we want to solve,
\begin{align}
  \mathrm{minimize}_{\gamma \in [0, 1)} &\frac{1}{1-\gamma} \min_{\tf \Phi_x, \tf \Phi_u, V} \bightwonorm{ \begin{bmatrix} Q^{1/2} & 0 \\ 0 & R^{1/2} \end{bmatrix} \begin{bmatrix} \tf \Phi_x \\ \tf \Phi_u \end{bmatrix} }  \nonumber \\
    \qquad \text{s.t.} &\rvectwo{zI - \Ah}{-\Bh} \cvectwo{\tf \Phi_x}{\tf \Phi_u} = I + \frac{1}{z^{F}}V \:, \:\: \frac{\sqrt{2}\varepsilon}{1-C_x\rho^{F+1}}\bighinfnorm{ \cvectwo{ \bmtx{\Phi_x}}{ \bmtx{\Phi_u}}} \leq  \gamma \:, \label{eq:opt_problem_fir} \\
    & \norm{V} \leq C_x\rho^{F+1} \:, \:\: \tf \Phi_x \in \frac{1}{z}\RHinf^{F}(C_x, \rho) \:, \:\: \tf \Phi_u \in \frac{1}{z}\RHinf^{F}(C_u, \rho) \nonumber \:.
\end{align}

Ignoring the outer minimization over $\gamma$ (which can be solved with bisection), the inner minimization is convex.
Truncating the system responses to be FIR of length $F$ means that
\begin{align*}
  \tf\Phi_x = \sum_{k=1}^{F} \Phi_x(k) z^{-k} \:, \:\: \tf\Phi_u = \sum_{k=1}^{F} \Phi_u(k) z^{-k} \:.
\end{align*}
All pieces of the infinite dimensional problem can be written in terms of these variables. First, consider the $\htwo$ cost in the objective.
By Parseval's identity, we can simply add the second order cone constraint
\begin{align} \label{eq:cost_constr_fir}
  \bignorm{\begin{bmatrix}
    Q^{1/2} \Phi_x(1) \\
    \vdots \\
    Q^{1/2} \Phi_x(F) \\
    R^{1/2} \Phi_u(1) \\
    \vdots \\
    R^{1/2} \Phi_u(F) \\
  \end{bmatrix}}_F \leq t \:,
\end{align}
and minimize $t$.
Next, we consider the constraints of the original optimization. The function space constraints reduce to the requirement that
\begin{align} \label{eq:fn_space_constr_fir}
  \norm{\Phi_x(k)} \leq C_x \rho^k \:, \:\: \norm{\Phi_u(k)} \leq C_u \rho^k \:, \:\: k=1, ..., F \:.
\end{align}
Next, to rewrite the subspace constraint, we first consider that
\begin{align*}
  z \tf\Phi_x = \sum_{k=0}^{F-1} \Phi_x(k+1) z^{-k} \:,
\end{align*}
then the subspace constraint yields the following equality constraints,
\begin{align} \label{eq:subspace_constr_fir}
\begin{split}
  \Phi_x(1) &= I \:, \\
   \Phi_x(k+1) &= \Ah \Phi_x(k) + \Bh \Phi_u(k) \:, ~~k=1, ..., F-1 \:, \\
   V &= \Ah \Phi_x(F) + \Bh\Phi_u(F) \:.
\end{split}
\end{align}

The only constraint that remains is the $\hinf$ constraint,
for which we use the following result.
\begin{theorem}[Theorem 5.8, \cite{dumitrescu2007positive}]
\label{thm:hinf_constraint}
Consider the $T$-length FIR filter
\begin{align*}
  \tf H(z) = \sum_{k=0}^{T} H_k z^{-k} \:, H_k \in \R^{p \times m} \:.
\end{align*}
Define the matrix
\begin{align*}
  \overline{H} = \begin{bmatrix} H_0 \\ \vdots \\ H_T \end{bmatrix} \in \R^{p(T+1) \times m} \:.
\end{align*}
We have that $\norm{\tf H(z)}_{\hinf} \leq \gamma$ iff there exists $Q = Q^\T \succeq 0$
with $Q \in \R^{p(T+1) \times p(T+1)}$ satisfying
\begin{align*}
  Q = \begin{bmatrix}
    Q_{00} & Q_{01} & ... & Q_{0T} \\
    \ast & Q_{11} & ... & Q_{1T} \\
    \ast & \ast & \ddots & \vdots \\
    \ast & \ast & \ast & Q_{TT}
  \end{bmatrix} \:, \:\: Q_{ij} \in \R^{p \times p} \:,
\end{align*}
\begin{align*}
  \sum_{t=0}^{T} Q_{tt} &= \gamma^2 I_p \:, \:\:
  \sum_{t=0}^{T-k} Q_{t(t+k)} = 0_{p \times p} \:, k=1, ..., T \:, \:\:
  \bmattwo{Q}{\overline{H}}{\overline{H}^\T}{I_{m}} \succeq 0 \:.
\end{align*}
\end{theorem}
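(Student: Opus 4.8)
The plan is to reinterpret the LMI as a frequency-domain positivity statement and exploit the Gram-matrix parametrization of nonnegative matrix trigonometric polynomials. I introduce the block-row of basis monomials
\[
\Psi(z) := \begin{bmatrix} I_p & z^{-1} I_p & \cdots & z^{-T} I_p \end{bmatrix} \in \C^{p \times p(T+1)} \:,
\]
so that $\tf H(z) = \Psi(z)\overline{H}$. Since $\hinfnorm{\tf H} = \sup_{z \in \mathbb{T}} \twonorm{\tf H(z)}$ and $\twonorm{M} \leq \gamma$ iff $MM^* \preceq \gamma^2 I$, the bound $\hinfnorm{\tf H} \leq \gamma$ is equivalent to $\tf H(z)\tf H(z)^* \preceq \gamma^2 I_p$ for all $z \in \mathbb{T}$. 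The first thing I would record is a purely algebraic identity: for $z \in \mathbb{T}$ (so $z^{-1} = \bar z$) and $Q = Q^\T$ partitioned into $p \times p$ blocks $Q_{st}$, regrouping the double sum by $k = t-s$ and using $Q_{st} = Q_{ts}^\T$ gives
\[
\Psi(z) Q \Psi(z)^* = \sum_{s,t=0}^{T} z^{t-s} Q_{st} = \Big(\textstyle\sum_{t=0}^T Q_{tt}\Big) + \sum_{k=1}^{T}\Big( z^{k} \textstyle\sum_{t=0}^{T-k} Q_{t(t+k)} + z^{-k}\big(\textstyle\sum_{t=0}^{T-k} Q_{t(t+k)}\big)^{\!\T}\Big) \:.
\]
Reading off coefficients, $\Psi(z) Q \Psi(z)^* \equiv \gamma^2 I_p$ on $\mathbb{T}$ if and only if the stated trace conditions $\sum_t Q_{tt} = \gamma^2 I_p$ and $\sum_{t} Q_{t(t+k)} = 0$ (each $k$) hold.

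The ``if'' direction is then immediate. Given a feasible $Q \succeq 0$, taking the Schur complement of the lower-right block $I_m$ in $\bmattwo{Q}{\overline{H}}{\overline{H}^\T}{I_m} \succeq 0$ yields $Q \succeq \overline{H}\,\overline{H}^\T$. Conjugating by $\Psi(z)$ and applying the identity above gives, for every $z \in \mathbb{T}$,
\[
\gamma^2 I_p = \Psi(z) Q \Psi(z)^* \succeq \Psi(z)\overline{H}\,\overline{H}^\T \Psi(z)^* = \tf H(z)\tf H(z)^* \:,
\]
which is exactly $\hinfnorm{\tf H} \leq \gamma$.

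For the converse — the harder direction — I would construct $Q$ from a spectral factorization. Assuming $\hinfnorm{\tf H} \leq \gamma$, the matrix $R(z) := \gamma^2 I_p - \tf H(z)\tf H(z)^*$ is a Hermitian trigonometric polynomial of degree $T$ that is positive semidefinite for all $z \in \mathbb{T}$. Invoking the matrix-valued Fejér–Riesz theorem, there is an FIR spectral factor $\tf G(z) = \sum_{k=0}^{T} G_k z^{-k}$ with $R(z) = \tf G(z)\tf G(z)^* = \Psi(z)\,\overline{G}\,\overline{G}^*\,\Psi(z)^*$, where $\overline{G}$ stacks the taps $G_k$. Setting $Q := \overline{H}\,\overline{H}^\T + \overline{G}\,\overline{G}^*$ makes $Q \succeq 0$ and $Q \succeq \overline{H}\,\overline{H}^\T$ (so the Schur-complement LMI holds), while $\Psi(z) Q \Psi(z)^* = \tf H(z)\tf H(z)^* + R(z) = \gamma^2 I_p$, which by the identity above is precisely the pair of trace conditions. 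The main obstacle is exactly this existence of a spectral factor: I expect either to cite matrix Fejér–Riesz factorization directly, or to replace it by a convex-duality argument showing that the set of attainable block moments $\{(\sum_t Q_{t(t+k)})_k : Q \succeq 0\}$ is closed and convex and that every supporting functional corresponds to evaluating a positive-semidefinite polynomial on $\mathbb{T}$, forcing representability. A minor bookkeeping point is that, since the $H_k$ are real, $R$ has real symmetric coefficient blocks, so $Q$ may be taken real symmetric by passing to real parts.
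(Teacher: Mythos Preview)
The paper does not actually prove this theorem: it is quoted verbatim as Theorem~5.8 of \cite{dumitrescu2007positive} and used as a black box to cast the $\hinf$ constraint as an LMI, so there is no in-paper proof to compare against. Your argument is correct and is essentially the standard one underlying the cited result: the trace conditions on $Q$ are exactly the statement that $\Psi(z)Q\Psi(z)^*\equiv\gamma^2 I_p$ on $\mathbb{T}$, the Schur complement turns the LMI into $Q\succeq\overline{H}\,\overline{H}^\T$, and conjugating by $\Psi(z)$ gives the ``if'' direction; the ``only if'' direction is precisely the matrix Fej\'er--Riesz factorization of the nonnegative trigonometric polynomial $\gamma^2 I_p - \tf H(z)\tf H(z)^*$, with your observation that a real symmetric $Q$ can be recovered by taking real parts being the correct way to close the loop.
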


For the SLS problem, the $\hinf$ constraint on is the filter
\begin{align*}
  \tf H(z) = \sum_{k=1}^{F} \cvectwo{ \Phi_x(k) }{ \Phi_u(k) } z^{-k} \:.
\end{align*}
The constraint can be rewritten using the LMI in Theorem~\ref{thm:hinf_constraint}.
To avoid a decision variable of size $(n+p)(F+1) \times (n+p)(F+1)$, we instead consider the transpose system $\tf H^\T$ which has the same $\hinf$ norm and coefficients of size $n \times (n+p)$.

Putting this together, we arrive at the following SDP, which
can be solved using an off the shelf solver,
\begin{align*}
  \min_{
  \substack{\tf\Phi_x[k] \in \R^{n \times n},~
            \tf\Phi_u[k] \in \R^{p \times n},~V \in \R^{n \times n}\\
             P \in \R^{n(F+1)\times n(F+1)},~
             t \in \R }}~~ t
  \\
  \text{s.t.} \qquad&
  \eqref{eq:fn_space_constr_fir}\:, ~
  \eqref{eq:subspace_constr_fir}\:, ~
  \eqref{eq:cost_constr_fir}\:, \\
   ~~
  &\sum_{t=0}^{F} P_{tt} = \gamma^2 I \:, \:\: \sum_{t=0}^{F-k} P_{t(t+k)} = 0 \:, \:\: k=1, ..., F \:, \\
  &\overline{H} = \frac{\sqrt{2}\varepsilon}{1 - C_x \rho^{F+1}}\begin{bmatrix}
    0_{n \times n} & 0_{n \times p} \\
    \Phi_x(1)^\T & \Phi_u(1)^\T \\
    \vdots & \vdots \\
    \Phi_x(F)^\T & \Phi_u(F)^\T
  \end{bmatrix} \:, \:\: \bmattwo{P}{\overline{H}}{\overline{H}^\T}{I_{m}} \succeq 0 \:, \\
  &\norm{V} \leq C_x \rho^{F+1} \:.
\end{align*}
For our experiments, we used the SCS solver~\cite{odonoghue16} via CVXPY~\cite{diamond16}.

Finally, once the FIR responses $\{ \Phi_x(k) \}_{k=1}^{F}$ and
$\{ \Phi_u(k) \}_{k=1}^{F}$ are found, we need a way to implement the system responses as a controller.
We represent the dynamic controller $\tf K = \tf \Phi_u \tf \Phi_x^{-1}$ by finding
an equivalent state-space realization $(A_K, B_K, C_K, D_K)$ via Theorem 2 of \cite{anderson17}.

As a final note, the adaptive method as described in Algorithm~\ref{alg:adaptive} requires several constants to be specified. For the numerical experiments, we set
$\sigma_{\eta,i} = C_\eta \sigma_w T_i^{-1/3}$ where we vary $C_\eta$ for different experiments, fix $\gamma=0.98$, and use a fixed FIR trunction length of $F=12$.
For the experiments in Section~\ref{sec:experiments}, we set $C_\eta = 0.1$.


\section{Additional Experiments}

\subsection{Large-Transient Dynamics} \label{sec:app:laplacian_exp}

We present the regret comparison results using another system
\begin{align} \label{eq:exampledynamics_unstable}
\trueA  = \begin{bmatrix} 2 & 0 & 0\\
4 & 2 & 0\\
0 & 4 & 2\end{bmatrix}, ~~ \trueB = I, ~~ Q = 10 I, ~~ R =I \:.
\end{align}
The system is both unstable and has large transients.
Each state receives direct input, and the cost is such that input size is penalized relatively less than state.
This problem setting is amenable to robust methods due to both the cost ratio and the large transients, which are factors that may hurt optimistic methods.
For this experiment, we ran all adaptive methods as described in
Appendix~\ref{sec:appendix:implementation}, and used an initialization with a
horizon of length $T_0=250$ and $C_\eta = 2$.

\begin{figure} 
\centering
\begin{subfigure}[b]{\basefigwidth\textwidth}
\caption{\small Regret}
\centerline{\includegraphics[width=\columnwidth]{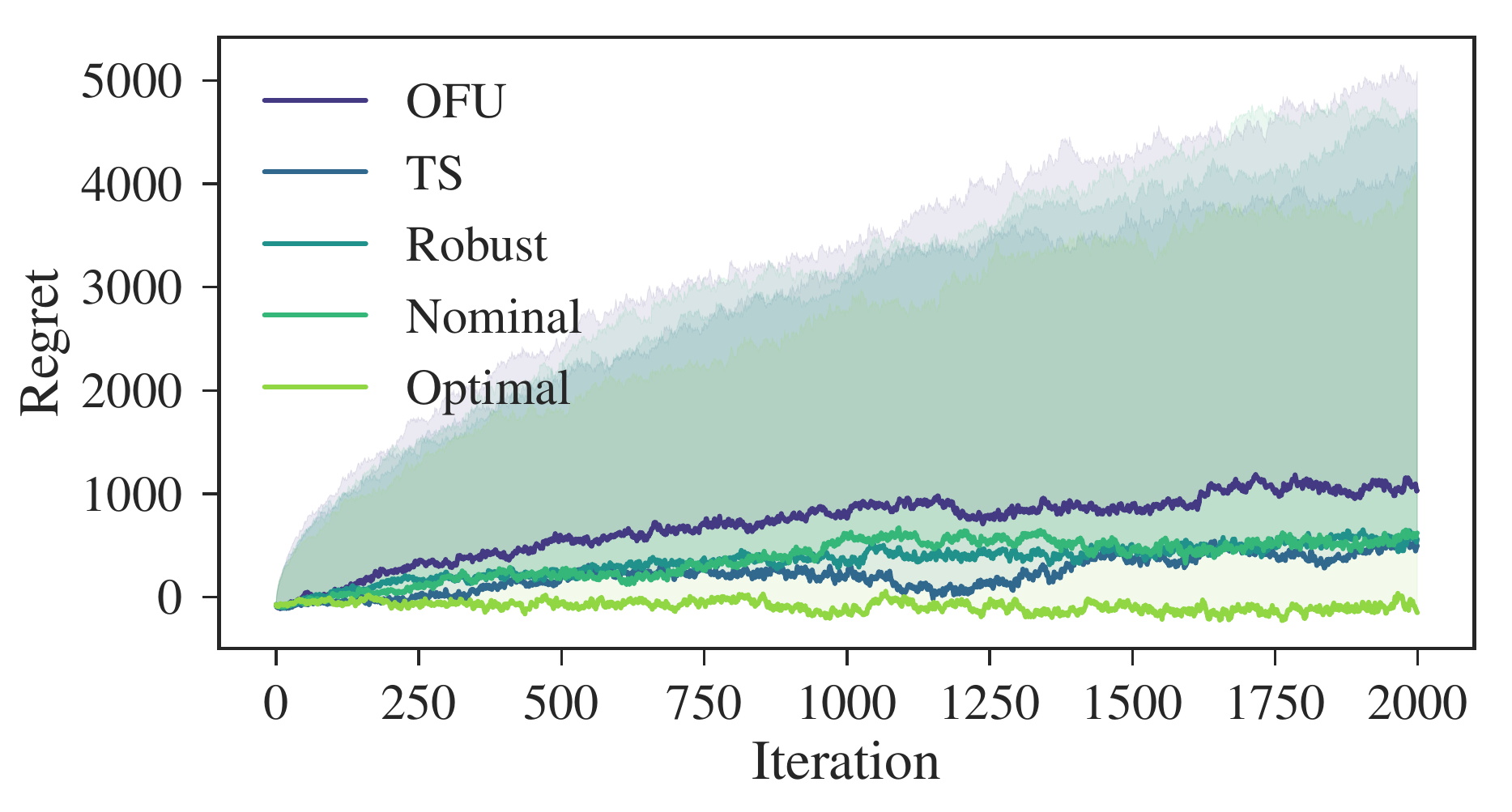}}
\end{subfigure}
\begin{subfigure}[b]{\basefigwidth\textwidth}
\caption{\small Infinite Horizon LQR Cost}
\centerline{\includegraphics[width=\columnwidth]{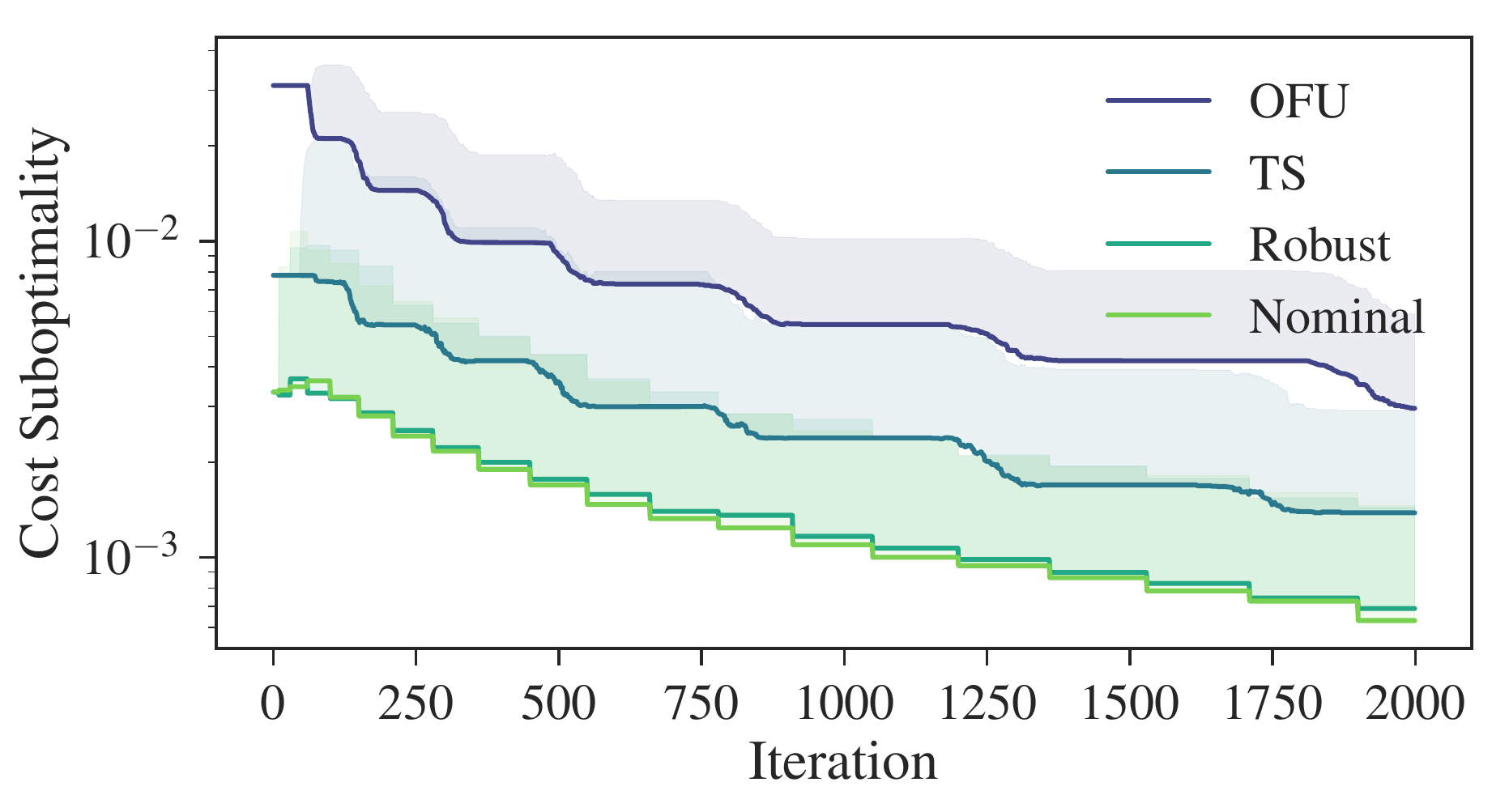}}
\end{subfigure}
\caption{\small A comparison of different adaptive methods on 500 experiments of the large-transient system example~\eqref{eq:exampledynamics_unstable}. In (a), the median and 90th percentile regret is plotted over time. In (b), the median and 90th percentile infinite-horizon LQR cost of the epoch's controller.}
\label{fig:regrets_laplace}
\end{figure}

The performance of the various adaptive methods is compared in Figure~\ref{fig:regrets_laplace}. The median and 90th percentile regret over 500 instances is displayed in  Figure~\ref{fig:regrets_laplace}a, which gives an idea of both ``average'' and worst-case behavior.
Overall, the methods have very similar performance. One benefit of robustness is the guaranteed stability and therefore bounded infinite-horizon cost at every point during operation.
In  Figure~\ref{fig:regrets_laplace}b, this infinite-horizon cost of the controller in each epoch is plotted.
This measures the cost of using each epoch's controller indefinitely, rather than continuing to update its parameters.
Especially for small numbers of iterations, the robust method performs relatively better than other adaptive algorithms, indicating that it is more amenable to early stopping.

\subsection{Error Scaling} \label{sec:app:error_scaling}

In our experiments, we use the actual estimation errors for controller synthesis. To examine the effect of this choice, we artificially inflate the estimation errors by various multipliers, and plot the regret for various methods in Figure~\ref{fig:regrets_err_mult}. These experiments were run on the the graph Laplacian example in~\eqref{eq:exampledynamics_laplacian} with an initialization with a horizon of length $T_0=300$ and $C_\eta = 1$.

The adaptive methods were run as described in Appendix~\ref{sec:appendix:implementation}.
The error term $\varepsilon$ for OFU and TS appears in the computation of the uncertainty set as in \eqref{eq:app:ofu_c}. The errors $\varepsilon_A$ and $\varepsilon_B$ for the robust adaptive method appear in \eqref{eq:opt_problem_fir}. The plot shows a modest degradation in regret as these terms are increased.

\begin{figure} 
\centering
\begin{subfigure}[b]{\basethreefigwidth\textwidth}
\caption{\small OFU  }
\centerline{\includegraphics[width=\columnwidth]{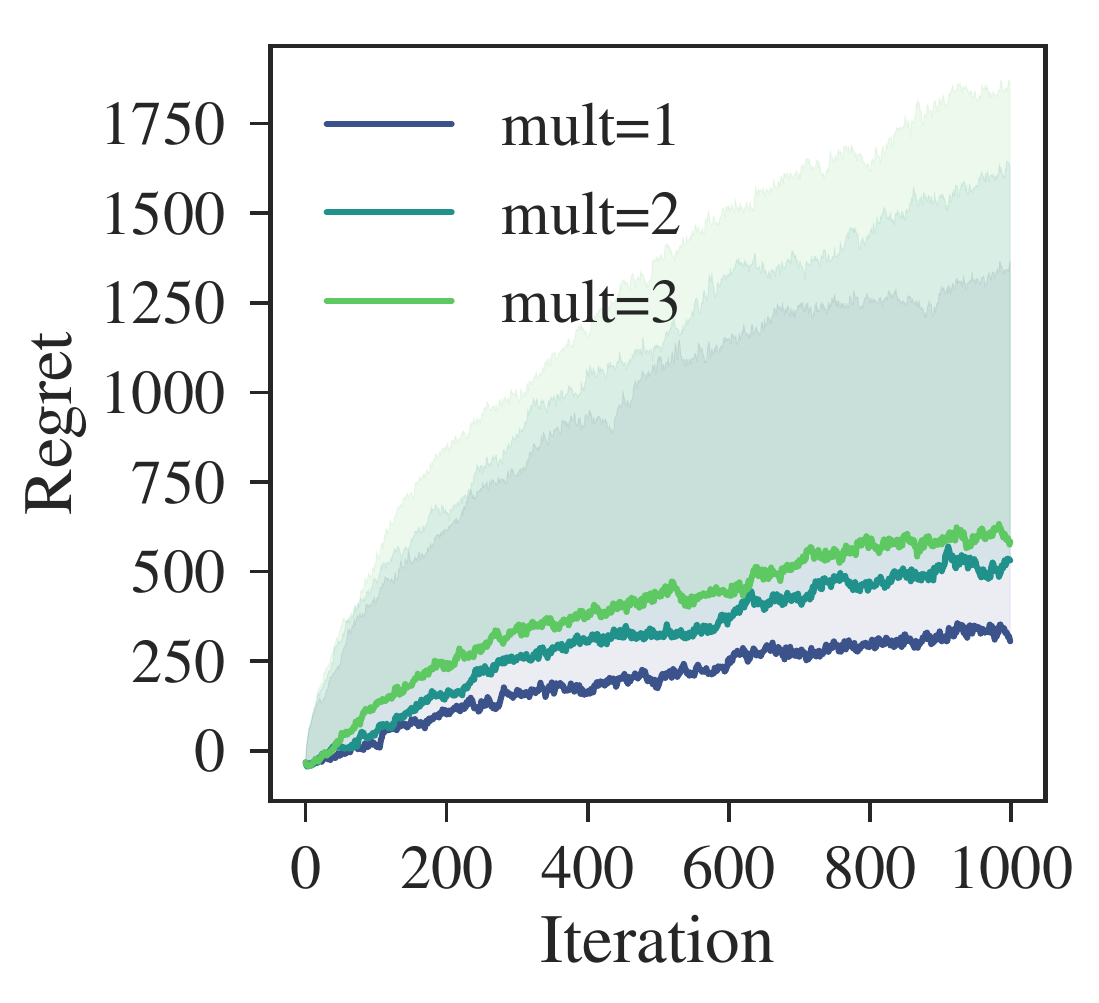}}
\end{subfigure}
\begin{subfigure}[b]{\basethreefigwidth\textwidth}
\caption{\small TS }
\centerline{\includegraphics[width=\columnwidth]{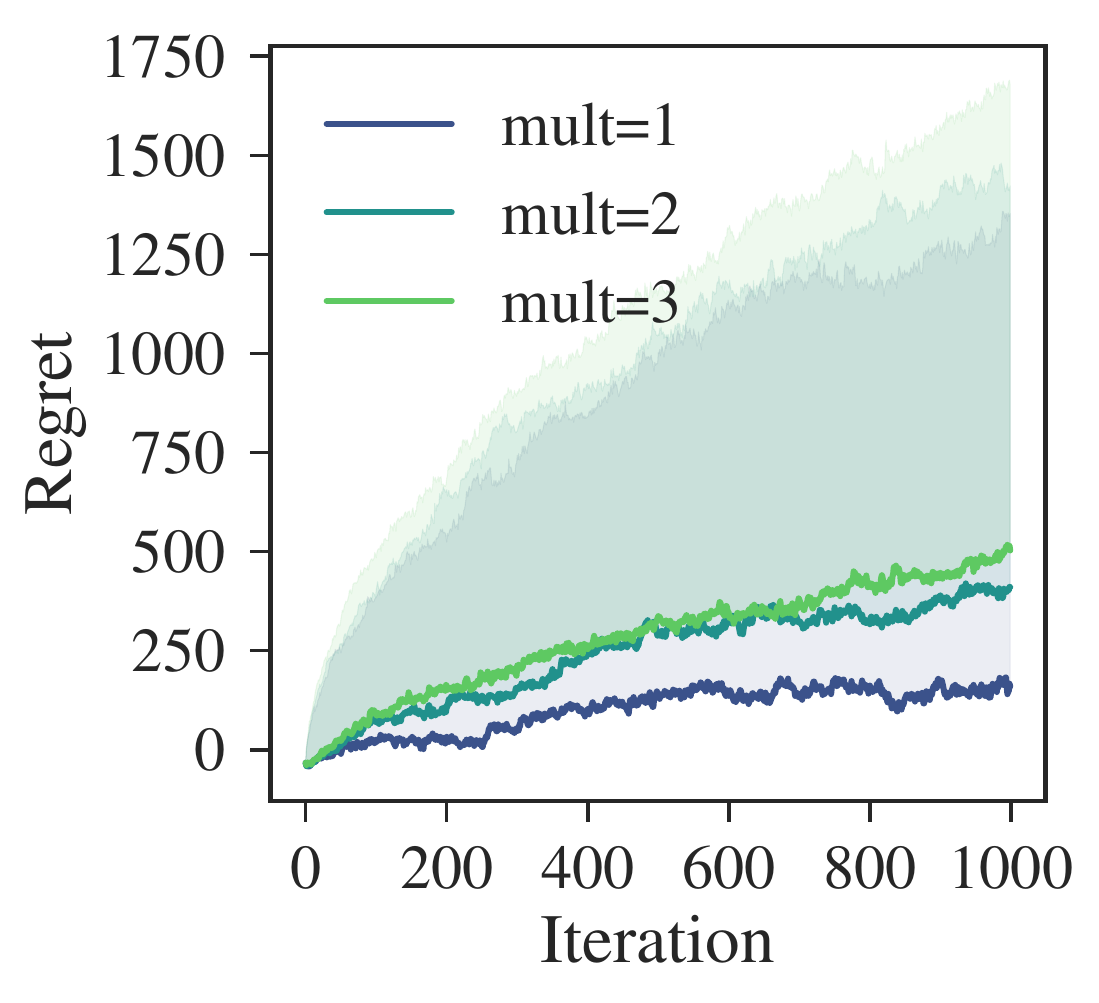}}
\end{subfigure}
\begin{subfigure}[b]{\basethreefigwidth\textwidth}
\caption{\small Robust  }
\centerline{\includegraphics[width=\columnwidth]{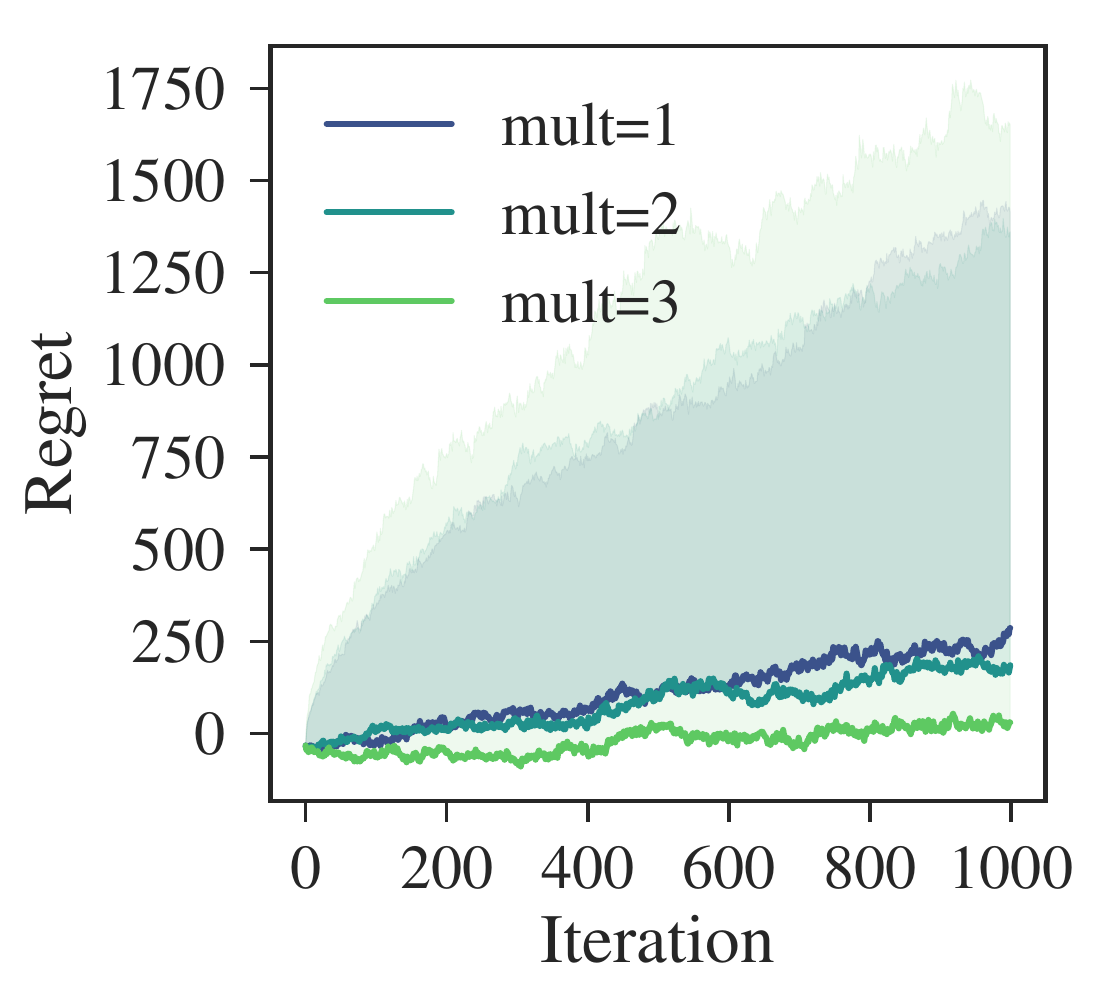}}
\end{subfigure}
\caption{\small A comparison of regret when enlarged error bounds are used for synthesis, rather than the true errors. Both the median over 500 trials and the 90th percentile regret are plotted. In (a) is OFU, in (b) is TS, in (c) is robust. The plots show modest if any degradation in performance.}
\label{fig:regrets_err_mult}
\end{figure}

\subsection{Learning the Disturbance Process} \label{sec:app:demand_ex}

We consider the problem of regulating a known system which is subject to disturbances correlated in time. These disturbances are modeled as the output of a LTI filter driven by white noise. In other words,
\[x_{k+1} = \trueA x_k + \trueB u_k + d_k \:, \qquad d_{k+1} = A_d d_k + w_k, \]
where $x_k$ is the state to drive to zero, and $d_k$ are the disturbances. We will take $(\trueA,\trueB)$ to be known and $A_d$ unknown.
This setting models many phenomenon related to demand forecasting, in which the dynamics of e.g. a server farm is known, and the changes in demand are stochastic but correlated in time, and can thus be approximated by the output of an LTI filter.

The plant inputs $u_k$ are designed for regulation. The controller design problem can be formulated as an optimization problem by defining the augmented system as
\begin{align} \label{eq:augmented_system}
\begin{bmatrix} x_{k+1} \\ d_{k+1} \end{bmatrix} =
\begin{bmatrix} \trueA & I \\ 0  & A_d \end{bmatrix} \begin{bmatrix} x_{k} \\ d_{k} \end{bmatrix} +
\begin{bmatrix} \trueB \\ 0   \end{bmatrix}  u_{k}
+ \begin{bmatrix} 0 \\ I \end{bmatrix} w_k\:.
\end{align}
We will denote the augmented state $z_k= \begin{bmatrix}x_k; d_k\end{bmatrix}$. Then the control actions can be designed using an adaptive LQR strategy.
In many situations, inputs are relatively more costly, corresponding for example to energy usage. Defining an LQR cost directly related to the economics of the system can be unwise, due to the resulting tendency for states to become large, which may correspond to unsafe execution. While tuning the quadratic cost to represent a mixture of economic and safety considerations can often achieve good behavior in practice, the method is heuristic and lacks guarantees. Instead, consider the explicit addition of a constraint on the state, $\|x_k\|_\infty \leq a$ for $0\leq k \leq H$ for some horizon (which may be infinite).

To state the necessary modification to the controller synthesis problem, we define the norm
\begin{align*}
  \lonenorm{\tf M} = \sup_{\infnorm{\tf w}=1} \: \infnorm{\tf M\tf w} \:,
\end{align*}
for both system responses and state matrices. This norm corresponds to the $\ell_\infty \mapsto \ell_\infty$ operator norm.

\begin{proposition}
For the system described in~\eqref{eq:augmented_system}, let $\tf\Phi_z$ denote a closed-loop state response. Then consider constraints
\begin{align}\label{eq:sys_response_constraint}
\begin{split}
 \|(\tf\Phi_z)_{22}\|_{\mathcal{L}_1} &\leq \gamma / \tilde\varepsilon_A\:,\\
 \|(\tf\Phi_z)_{12}\|_{\mathcal{L}_1} &\leq \frac{a}{b}\cdot (1-\gamma) := c
 \end{split}
\end{align}
where $(\tf\Phi_z)_{ij}$ denotes the blocks defined by the partition of $z_t$ into $x_t$ and $d_t$, and
 $\|\Ahat_d-A_d\|_{\mathcal{L}_1}\leq\tilde\varepsilon_A$.
The addition of these constraints to the synthesis problem in~\eqref{eq:opt_problem_fir} ensures that the resulting closed loop system has $\|x_k\|_\infty\leq a$ for $0\leq k \leq H$ as long as $\|w_k\|_\infty\leq b$ for $0\leq k \leq H$.
\end{proposition}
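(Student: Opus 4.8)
The plan is to run the robust SLS perturbation analysis of Theorem~\ref{thm:robust}, specialized to the fact that here the nominal matrices $(\trueA,\trueB)$ are known and only the disturbance-model block $A_d$ is uncertain, and then read off the $\ell_\infty \to \ell_\infty$ gain from $w$ to $x$ in terms of the $\mathcal{L}_1$ norm. First I would write the augmented dynamics~\eqref{eq:augmented_system} compactly as $z_{k+1} = \mathcal{A} z_k + \mathcal{B} u_k + \mathcal{H} w_k$ with $\mathcal{A} = \begin{bmatrix}\trueA & I \\ 0 & A_d\end{bmatrix}$, $\mathcal{B} = \begin{bmatrix}\trueB \\ 0\end{bmatrix}$, $\mathcal{H} = \begin{bmatrix}0 \\ I\end{bmatrix}$, and observe that $\tf\Phi_z$ is synthesized against the estimate $\widehat{\mathcal{A}}$, which differs from $\mathcal{A}$ only in its lower-right block, so that $\widehat{\mathcal{A}} - \mathcal{A} = \begin{bmatrix}0 & 0 \\ 0 & \Ahat_d - A_d\end{bmatrix}$. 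Applying Theorem~\ref{thm:robust}, the true closed-loop state response is $\tf\Phi_z(I+\tf\Delta)^{-1}$ with $\tf\Delta = (\widehat{\mathcal{A}}-\mathcal{A})\tf\Phi_z$. Because $\widehat{\mathcal{A}}-\mathcal{A}$ annihilates the $x$-rows, $\tf\Delta$ has zero top block-row, so $I + \tf\Delta$ is block lower-triangular with identity $(1,1)$ block and $(2,2)$ block $\tf G := I + (\Ahat_d - A_d)(\tf\Phi_z)_{22}$.

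Next I would invert this block-triangular operator explicitly and apply it to the physical disturbance, which enters only through $\mathcal{H}$: a direct computation gives $(I+\tf\Delta)^{-1}\mathcal{H} = \begin{bmatrix}0 \\ \tf G^{-1}\end{bmatrix}$, so that the map from $w$ to the regulated state is $x = (\tf\Phi_z)_{12}\,\tf G^{-1}\, w$. Taking $\mathcal{L}_1$ norms and using submultiplicativity of the induced $\ell_\infty \to \ell_\infty$ operator norm, this yields $\infnorm{x} \le \lonenorm{(\tf\Phi_z)_{12}}\,\lonenorm{\tf G^{-1}}\,\infnorm{w}$, reducing the claim to controlling the two factors in terms of the imposed constraints.

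The key remaining estimate is $\lonenorm{\tf G^{-1}}$. Since $\lonenorm{(\Ahat_d - A_d)(\tf\Phi_z)_{22}} \le \tilde\varepsilon_A \lonenorm{(\tf\Phi_z)_{22}} \le \tilde\varepsilon_A\cdot(\gamma/\tilde\varepsilon_A) = \gamma < 1$ by the first constraint in~\eqref{eq:sys_response_constraint} together with the hypothesis $\lonenorm{\Ahat_d - A_d}\le\tilde\varepsilon_A$, the Neumann series $\tf G^{-1} = \sum_{k\ge0}\big(-(\Ahat_d - A_d)(\tf\Phi_z)_{22}\big)^k$ converges in $\mathcal{L}_1$ and gives $\lonenorm{\tf G^{-1}} \le (1-\gamma)^{-1}$. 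Combining with the second constraint $\lonenorm{(\tf\Phi_z)_{12}} \le \tfrac{a}{b}(1-\gamma)$ gives $\infnorm{x} \le \tfrac{a}{b}(1-\gamma)\cdot(1-\gamma)^{-1}\cdot b = a$ whenever $\infnorm{w} \le b$. For the finite-horizon statement $0 \le k \le H$ I would then invoke causality: $x_k$ depends only on $w_0,\dots,w_k$, so applying the same operator inequality to the signal truncated at $H$ shows that $\infnorm{w_j}\le b$ for $j \le H$ suffices to certify $\infnorm{x_k}\le a$ for all $k \le H$.

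The step I expect to be the main obstacle is justifying the $\mathcal{L}_1$ Neumann-series inversion in a way that is consistent with the $\mathcal{RH}_\infty$ invertibility of $I+\tf\Delta$ demanded by Theorem~\ref{thm:robust}: I must argue that $\gamma<1$ in the induced $\ell_\infty\to\ell_\infty$ norm genuinely certifies $\tf G^{-1}$ as a bounded, stable convolution operator. The block-triangular structure produced by the fact that only $A_d$ is uncertain is precisely what makes this tractable, since it collapses robust invertibility of the full $I+\tf\Delta$ to invertibility of the single scalar-block $\tf G$, decoupling the stability certificate from the separate $\mathcal{H}_\infty$ constraint already present in~\eqref{eq:opt_problem_fir}.
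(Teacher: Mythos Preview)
Your proposal is correct and follows essentially the same approach as the paper: exploit the block lower-triangular structure of $I+\tf\Delta$ induced by the fact that only the $(2,2)$ block of the augmented dynamics is uncertain, compute the map $w \mapsto x$ as $(\tf\Phi_z)_{12}\tf G^{-1}$, and bound $\lonenorm{\tf G^{-1}}$ via the Neumann series using the first constraint. The paper's argument is identical in content, only slightly terser in presentation; your added remarks on causality for the finite-horizon statement and on the compatibility of the $\mathcal{L}_1$ inversion with Theorem~\ref{thm:robust} are reasonable elaborations the paper leaves implicit.
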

\begin{proof}
In transfer function notation, the state of the plant can be described by
\begin{align*}
\tf x = \begin{bmatrix} I & 0 \end{bmatrix} \tf z = \begin{bmatrix} I & 0 \end{bmatrix} \tf \Phi_z (I+\tf {\widehat\Delta})^{-1} \begin{bmatrix} 0 \\ I \end{bmatrix} \tf w\:.
\end{align*}
Furthermore, due to the known structure of the dynamics,
\begin{align*}
(I+\tf {\widehat\Delta})^{-1} = \left(I + \begin{bmatrix} 0 & 0 \\ 0 & \Delta_A \end{bmatrix}\tf\Phi_z\right)^{-1} = \begin{bmatrix} I & 0 \\ X & (I+\Delta_A(\tf\Phi_z)_{22})^{-1} \end{bmatrix}\:,
\end{align*}
where $X=(I+\Delta_A(\tf\Phi_z)_{22})^{-1}\Delta_A(\tf\Phi_z)_{21}$.
Then we have, letting $\tf {\widehat\Delta}_{22}=\Delta_A(\tf\Phi_z)_{22}$,
\begin{align*}
\tf x = \begin{bmatrix} I & 0 \end{bmatrix} \tf \Phi_z \begin{bmatrix} 0 \\ (I+\tf {\widehat\Delta}_{22})^{-1} \end{bmatrix}  \tf w = (\tf \Phi_z)_{12} (I+\tf {\widehat\Delta}_{22})^{-1} \tf w\:.
\end{align*}
Finally, to bound the size of the state,
\begin{align*}
\|\tf x\|_\infty &\leq \|(\tf \Phi_z)_{12} (I+\tf {\widehat\Delta}_{22})^{-1}\|_{\mathcal{L}_1} \|\tf w\|_\infty \leq \frac{1}{1- \|\tf {\widehat\Delta}_{22}\|_{\mathcal{L}_1}} \|(\tf \Phi_z)_{12}\|_{\mathcal{L}_1} \|\tf w\|_\infty\:.
\end{align*}
Then we have that $\|\tf {\widehat\Delta}_{22}\|_{\mathcal{L}_1} \leq \tilde\varepsilon_A\|(\tf\Phi_z)_{22}\|_{\mathcal{L}_1}$, so the result follows from the constraints and the assumption on $w_k$.
\end{proof}
Therefore, with either a bounded noise assumption on $w_k$ or a high-probability bound over a finite time horizon, we can apply the previous result to synthesize safe controllers. In the example displayed in Figure~\ref{fig:constraints}, the constraint as in~\eqref{eq:sys_response_constraint} is added to the controller synthesis procedure with $c=0.1$ and $\gamma=0.98$.


\end{document}